\def\th@plain{%
	\thm@notefont{}
	\itshape 
}
\def\th@definition{%
	\thm@notefont{}
	\normalfont 
}
\newtheorem{assumption}{Assumption}
\newtheorem{definition}{Definition}
\newtheorem{theorem}{Theorem}
\newtheorem{lemma}{Lemma}
\begin{document}

\begin{center}
{\bf{\LARGE{Understanding the Evolution of the Neural Tangent Kernel at the Edge of Stability}}}


\vspace*{.2in}

{\large{
\begin{tabular}{cccc}
Kaiqi Jiang$^{\; \dagger}$ & Jeremy Cohen$^{\; \ddagger}$ & Yuanzhi Li$^{\; \ddagger}$
\end{tabular}
}}
\vspace*{.2in}

\begin{tabular}{c}
Department of Electrical and Computer Engineering, Princeton University$^{\; \dagger}$ \\
Machine Learning Department, Carnegie Mellon University$^{\; \ddagger}$
\end{tabular}

\vspace*{.2in}

\today

\end{center}
\vspace*{.2in}

\begin{abstract}
    The study of Neural Tangent Kernels (NTKs) in deep learning has drawn increasing attention in recent years. NTKs typically actively change during training and are related to feature learning. In parallel, recent work on Gradient Descent (GD) has found a phenomenon called Edge of Stability (EoS), in which the largest eigenvalue of the NTK oscillates around a value inversely proportional to the step size. However, although follow-up works have explored the underlying mechanism of such eigenvalue behavior in depth, the understanding of the behavior of the NTK \emph{eigenvectors} during EoS is still missing. This paper examines the dynamics of NTK eigenvectors during EoS in detail. Across different architectures, we observe that larger learning rates cause the leading eigenvectors of the final NTK, as well as the full NTK matrix, to have greater alignment with the training target.  We then study the underlying mechanism of this phenomenon and provide a theoretical analysis for a two-layer linear network. Our study enhances the understanding of GD training dynamics in deep learning.
\end{abstract}
\section{Introduction}\label{section:intro}
Gradient Descent (GD) is a canonical minimization algorithm widely used in optimization and machine learning problems. While its behavior and guarantee in convex settings has been fully studied, there are still enormous challenges when understanding its dynamics and convergence in non-convex settings, especially for deep learning problems where we usually encounter high-dimensional and highly non-convex loss functions. In recent years, the study on Neural Tangent Kernel (NTK) proposed in \cite{jacot2018neural} has been more and more popular when understanding the learning mechanism of GD in deep neural networks, for example, \cite{du2018gradient} first proved the convergence guarantee of GD in the so-called \emph{lazy} regime, where NTK remains unchanged during training. More and more follow-up studies point out the limitation of this lazy regime and advocate for the analysis in the \emph{rich} regime where the NTK actively evolves during training. They argue that neural networks can learn more efficiently than kernel methods \cite{ghorbani2020neural,woodworth2020kernel,damian2022neural}. These results beyond the NTK story motivate us to track the learning dynamics and the evolution of NTK more carefully when training neural networks.

There has also been a line of research on the behavior of NTK eigenvalues when training neural networks using GD. \cite{DBLP:conf/iclr/CohenKLKT21} observed a phenomenon called Edge of Stability (EoS) where the largest eigenvalue of loss Hessian first increases and then hovers around a value inversely proportional to the step size during training. This can also translate to similar behavior of the largest eigenvalue of NTK if we use MSE loss. There is a lot of follow-up work studying the underlying mechanism of this eigenvalue behavior (e.g. \cite{wang2022analyzing,damian2023selfstabilization}). Despite the success of those works in studying the behavior of eigenvalues of NTK, the understanding of the rotation of its eigenvectors, especially those leading ones, is still lacking. Inspired by the following evolution equation under Gradient Flow (GF),
\[
\frac{d}{dt}(f_t-y)=-K_t(f_t-y)
\]
where $f_t$ is the model output at time $t$, $y$ is the target vector, and $K_t$ represents NTK at time $t$, we are interested in studying the \emph{alignment} between NTK and the target $y$ and how this alignment evolves over time.

The study of the alignment of NTK and the target is not a new story \cite{kopitkov2020neural,shan2021theory,ortiz2021can,wang2023spectral}. People have developed different metrics to measure the alignment, e.g. Kernel Target Alignment (KTA) \cite{cortes2012algorithms}.

\begin{definition}[Full Kernel Target Alignment (KTA)]\label{def:kta}
Let $y \in \mathbb{R}^m$ be the target vector and $K \in \mathbb{R}^{m\times m}$ be the NTK.  Then we say that the \emph{kernel target alignment} is $\frac{y^TKy}{\|y\|_2^2\|K\|_F}$.
\end{definition}

However, there has been a lack of research on how the EoS phenomenon interacts with the evolution of the NTK eigenvectors and the kernel target alignment. This motivates us to study the following question:
\begin{center}
    \emph{How do eigenvectors of NTK evolve over time and align with the target, especially during EoS when the NTK eigenvalues exhibit rich behavior?}
\end{center}

\subsection{Overview of Our Main Contributions}
In addition to the KTA in Definition~\ref{def:kta}, we are also interested in the \emph{individual eigenvector target alignment}:

\begin{definition}[Individual Eigenvector Target Alignment]\label{def:ieta}
For the target vector $y \in \mathbb{R}^m$, if  $u_k \in \mathbb{R}^m$ is the $k$-th eigenvector of the NTK, then we say that its \emph{alignment} with the target is $\frac{(y^T u_k)^2}{\| y \|_2^2}$. Note that the alignment values sum up to 1, so they can be regarded as a kind of ``distribution''.
\end{definition}

We aim to track the evolution of these two metrics during GD training and study their connection to EoS.
We make the following contributions:

1. Across different architectures, we observe that larger GD learning rates cause the final NTK matrix to have higher KTA alignment (Definition \ref{def:kta}) with the target, as shown in Figure~\ref{fig:kta_fc}. Looking into the mechanism, we observe that larger learning rates cause the leading eigenvectors of the final NTK to have greater individual alignment with the target (Definition \ref{def:ieta}). Namely, larger learning rates cause the ``distribution'' of the Individual Eigenvector Target Alignment to \emph{shift} towards leading eigenvectors, as shown in Figure~\ref{fig:alignment_shift_fc}. We refer to this phenomenon as \emph{alignment shift}. 

2. We study the underlying mechanism of the alignment shift phenomenon by tracking the detailed training dynamics. We find that during the phases of EoS when the sharpness reduces, we usually observe sudden or fast increases in alignment, as shown in Figure~\ref{fig:connection_eos_fc}.

3. To better understand these empirical findings, we theoretically analyze the alignment shift phenomenon in a 2-layer linear network. This analysis reveals that the phase of EoS when the sharpness reduces is not only correlated with but essentially a contributing factor to the alignment shift. We also partially generalize our theoretical analysis to nonlinear networks in Appendix~\ref{section:nonlinear}.

4.  To further link sharpness reduction with increased kernel-target alignment, we use the central flow framework \citep{cohen2024understanding}, which models gradient descent at EoS as implicitly following a sharpness-penalized gradient flow.  We show empirically that this sharpness penalty leads to increased KTA.

\vskip -0.1in
\begin{figure}[ht]
\centering
\begin{subfigure}{.33\textwidth}
		\centering
		\includegraphics[width=\linewidth]{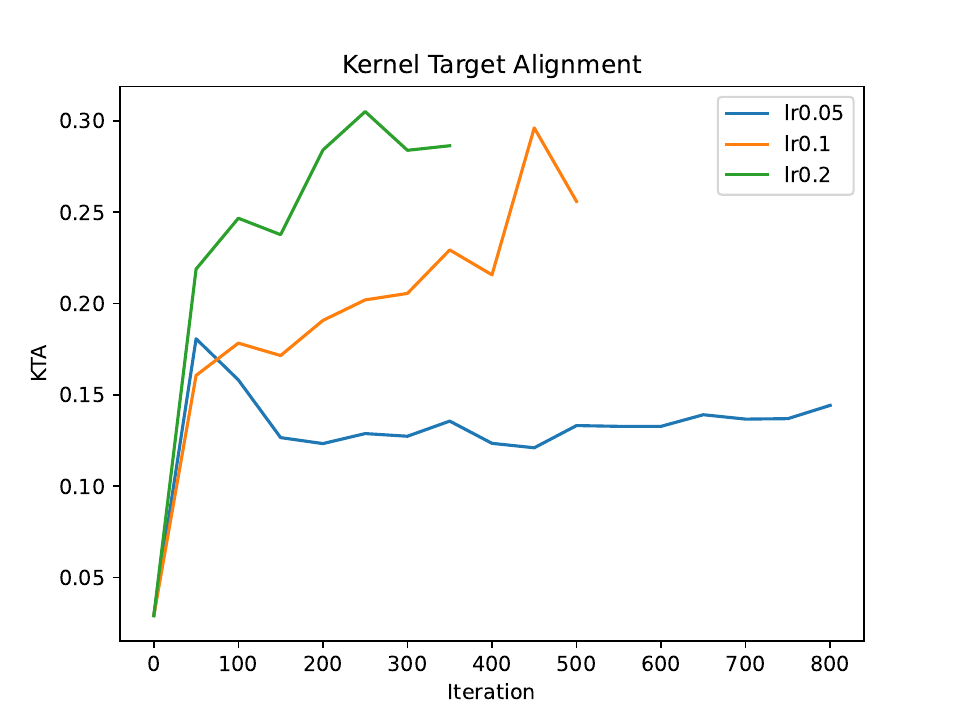}
		\caption{}
		\label{fig:kta_fc}
\end{subfigure}
\begin{subfigure}{.33\textwidth}
		\centering
		\includegraphics[width=\linewidth]{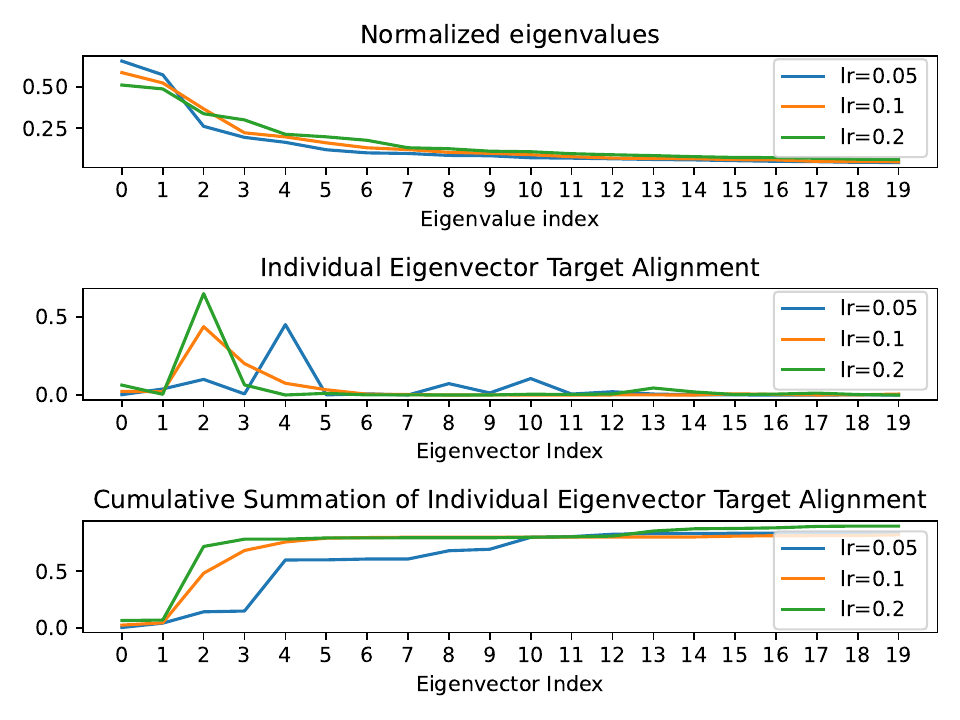}
		\caption{}
		\label{fig:alignment_shift_fc}
\end{subfigure}
\begin{subfigure}{.32\textwidth}
		\centering
		\includegraphics[width=\linewidth]{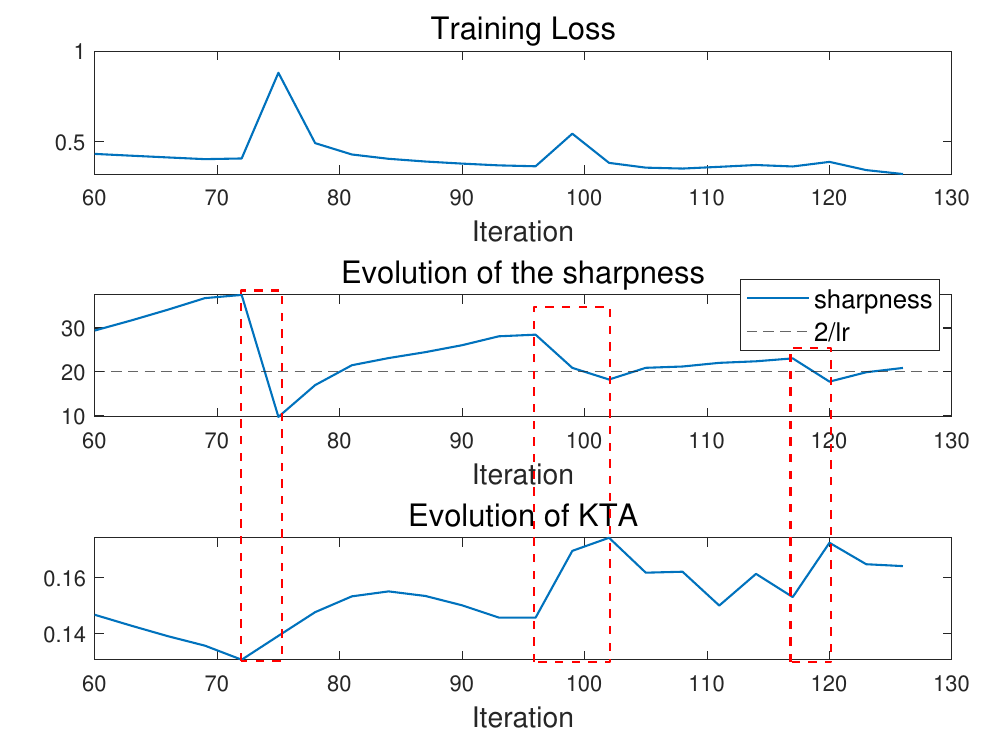}
		\caption{}
		\label{fig:connection_eos_fc}
\end{subfigure}
\caption{(a) KTA across learning rates, (b) the alignment shift, and (c) connection between the alignment dynamics and Edge of Stability in a fully-connected network (see Section~\ref{section:experiment_details}).  Each learning rate was trained until reaching roughly the same loss value.  Column (c) singles out the trajectory with the learning rate 0.1.  Note that (b) only plots the alignment on top 20 eigenvectors as later ones have much smaller alignment.  The NTK eigenvalues are normalized by dividing by the $\ell_2$ norm of the eigenvalues, i.e. the Froebenius norm of the NTK.}
\end{figure}
\section{Related Work}\label{section:related_work}
\textbf{Study on eigenvalues of NTK} \cite{lewkowycz2020large} showed that when wide neural networks are trained with a learning rate large enough to trigger initial instability, the largest eigenvalue of the NTK drops until stability is restored, an effect referred to as the ``catapult mechanism.''  \cite{DBLP:conf/iclr/CohenKLKT21} showed that when standard neural networks are trained using GD with a fixed learning rate, the largest Hessian eigenvalue (which is nearly equivalent to the largest NTK eigenvalue up to a constant under MSE loss \cite{wang2022analyzing}) rises until reaching the value $2/\eta$ (the threshold beyond which instability is triggered), and then oscillates or fluctuates around that value.  This phenomenon was referred to as ``Edge of Stability'' (EoS).  Later, \cite{wang2022analyzing} analyzed the underlying mechanism of EoS in two-layer linear networks, and \cite{damian2023selfstabilization} proposed a more general framework to understand its mechanism in more general settings. \cite{ghoshlearning} studied the ranges of $\eta$ under which EoS oscillates with different ranks.

\textbf{Alignment between NTK and the target}
Prior works have argued that the evolution of NTK and its alignment with the target is related to feature learning \cite{wang2023spectral}. \cite{ortiz2021can,shan2021theory} observed that KTA is trending to increase during training. \cite{kopitkov2020neural} found that top NTK eigenvectors tend to align with
the learned target function, which improves the optimization. However, existing works either lack theoretical analysis of underlying mechanisms or focus on the GF or stable GD case, lacking study on the NTK behavior at EoS during which its eigenvalues have nontrivial oscillation.

\textbf{Relation between optimization dynamics and feature learning}
\cite{zhu2023catapults} argued that ``catapult dynamics,'' which they defined as spikes in the training loss and decreases in the NTK maximum eigenvalue, promote better feature learning, as quantified by the alignment of the AGOP matrix \cite{radhakrishnan2022mechanism} with the ground truth.  The AGOP is an outer product matrix of the network gradients with respect to the \emph{input}, whereas the NTK (studied here) is an outer product matrix of the network gradients with respect to the \emph{weights}.  On the two-layer linear network which we use in Section~\ref{section:experiment_details} and for theoretical analysis, we find that large learning rates and sharpness reduction are much more associated with enhanced NTK alignment than with enhanced AGOP alignment (see Appendix~\ref{section:agop}).  Note also that \cite{zhu2023catapults} was limited to empirical observations and did not attempt to theoretically explain the mechanism by which catapult dynamics enhance AGOP alignment.

\section{Empirical Observations about Alignment}\label{section:experiments}
\subsection{Alignment Shift Phenomenon in Different Settings}\label{section:experiment_details}
\textbf{Linear networks} We start with a simple task on a two-layer linear network.
Let $n=200$ be the number of training examples, and $d=400$ be the input dimension.  We use Gaussian data $X \in \mathbb{R}^{d \times n}$ as input where $X = Z \text{diag}(3, 1.5, 1.2, 0.8,..., 0.8)$ where each entry of $Z \in \mathbb{R}^{d\times n}$ is sampled i.i.d from the standard normal $\mathcal{N}(0, 1)$.  The target is a linear function of the input: $Y=\beta^TX \in \mathbb{R}^{1 \times d}$ with an all-one vector $\beta$. We use MSE loss and GD to train the model, and train at each learning rate until reaching the same loss value. Figure~\ref{fig:kta_linear} plots the evolution of the KTA under different learning rates starting from the same initialization, showing that larger learning rates cause more significant increases of KTA. 
Figure~\ref{fig:final_alignment_linear} plots the alignment between $Y$ and top 5 NTK eigenvectors ($u_1,...,u_5$) for the final models, showing that large learning rates cause alignment shift towards leading eigenvectors, while not substantially changing the normalized eigenvalue distribution (where the normalization is by the $l_2$-norm of the eigenvalue list). Figure~\ref{fig:alignment_shift_linear} plots the dynamics of the training loss, sharpness, and individual eigenvector target alignment for a single GD trajectory, showing that the alignment shift occurs during the period when the sharpness decreases.
\begin{figure}[ht]
\centering
\begin{subfigure}{.31\textwidth}
		\centering
		\includegraphics[width=\linewidth]{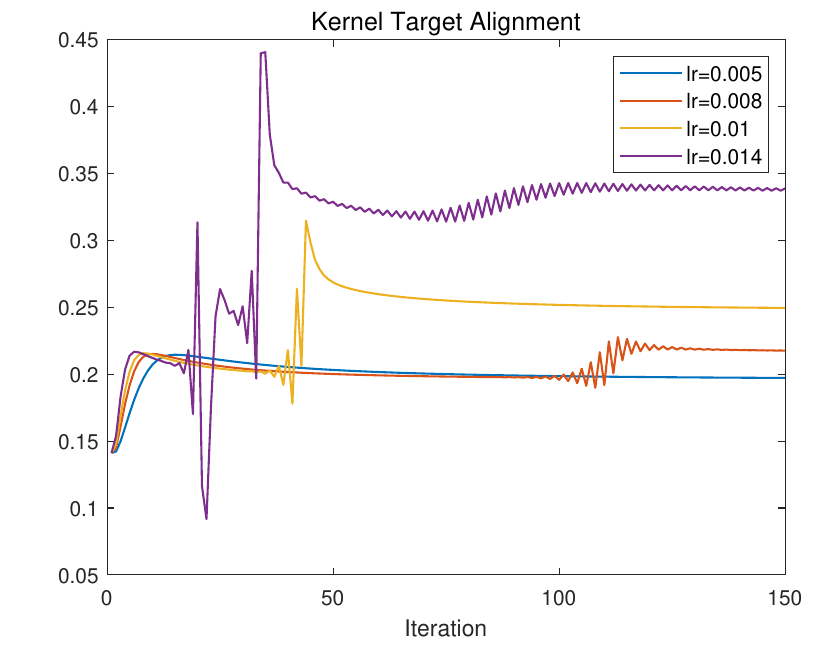}
		\caption{}
		\label{fig:kta_linear}
\end{subfigure}
\begin{subfigure}{.32\textwidth}
		\centering
		\includegraphics[width=\linewidth]{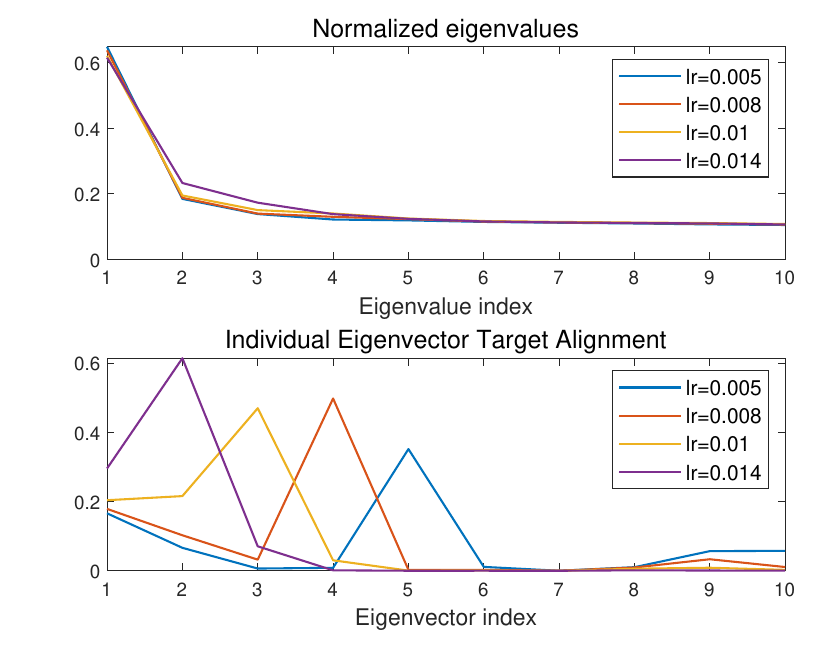}
		\caption{}
		\label{fig:final_alignment_linear}
\end{subfigure}
\begin{subfigure}{.33\textwidth}
		\centering
		\includegraphics[width=\linewidth]{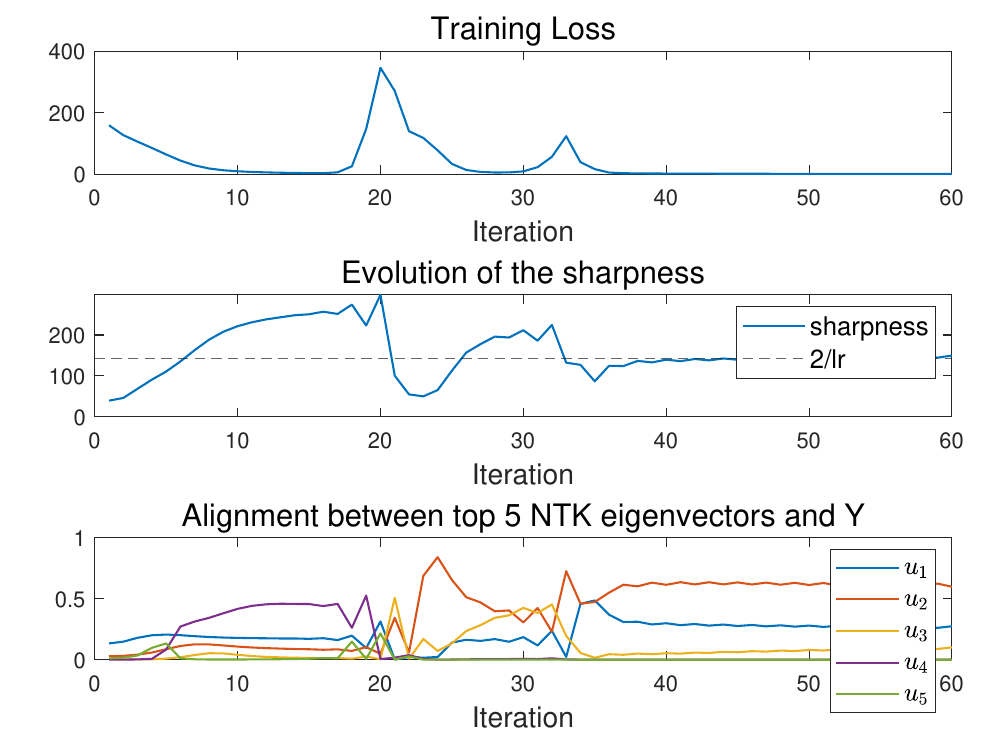}
		\caption{}
		\label{fig:alignment_shift_linear}
\end{subfigure}
\caption{(a) KTA across learning rates, (b) the alignment shift, and (c) connection between the alignment dynamics and Edge of Stability in a 2-layer linear network (see Section~\ref{section:experiment_details}).  Column (c) singles out the trajectory with the highest learning rate, i.e. 0.014.}
\end{figure}

\textbf{Fully-connected networks}
 We trained a 4-layer GeLU network using GD under MSE loss on a subset of CIFAR-10 \cite{krizhevsky2009learning} where we randomly selected about 2000 examples in two classes (roughly 1000 for each class) and relabeled the target values as $\pm1$. We trained the model from the same initialization under different learning rates to roughly the same training loss. We plotted the evolution of KTA (every 50 steps) in Figure~\ref{fig:kta_fc} and the individual eigenvector target alignment for the final models in Figure~\ref{fig:alignment_shift_fc}. As is mentioned in Section~\ref{section:intro}, we see the increase of KTA and the alignment shift. Figure~\ref{fig:connection_eos_fc} zooms in the EoS period and plots the evolution of the sharpness and KTA (every 3 steps). We see that during the periods when the sharpness decreases, the KTA usually has a sudden or fast increase. During the periods when the sharpness increases, the KTA increases slower or can decrease, i.e. the ``speed'' of increase is lower. Table~\ref{tab:quant_analysis_fc} provides a quantitative analysis of the above connection, where we compute the total and average change (per step) of KTA during each sharpness-decreasing period and sharpness-increasing period. We see that the average change in KTA per step, which can be regarded as the average changing ``speed'' of KTA, is positive and much larger in the sharpness-decreasing period than in the sharpness-increasing period. In Section~\ref{section:central_flow}, we study the above connection from another perspective using a tool called \emph{central flow} developed by \cite{cohen2024understanding} and demonstrate that adding a sharpness penalty to gradient flow can promote the increase of KTA.

\textbf{Vision Transformer}
On the same dataset as above, we also trained a simple ViT\footnote{\url{https://github.com/lucidrains/vit-pytorch}. MIT license.} using GD under MSE loss from the same initialization under different learning rates. Figure~\ref{fig:kta_vit} plots the evolution of KTA (every 20 iterations) when training to roughly the same loss. Figure~\ref{fig:alignment_shift_vit} shows the individual eigenvector target alignment as well as its cumulative summation for the final models. Again, we see the increase of KTA and the alignment shift phenomenon. Figure~\ref{fig:connection_eos_vit} zooms in the EoS period and plots the evolution of sharpness and the evolution of KTA every 5 steps. Here, we see a stair-wise behavior of KTA where, after each period when the sharpness decreases, the KTA value goes up to the next ``stair''. We also add numerical evidence in Table~\ref{tab:quant_analysis_vit} to make a quantitative analysis, where we again see the connection between the decrease of sharpness and the fast increase of KTA per step.
\begin{figure}[ht]
\centering
\begin{subfigure}{.33\textwidth}
		\centering
		\includegraphics[width=\linewidth]{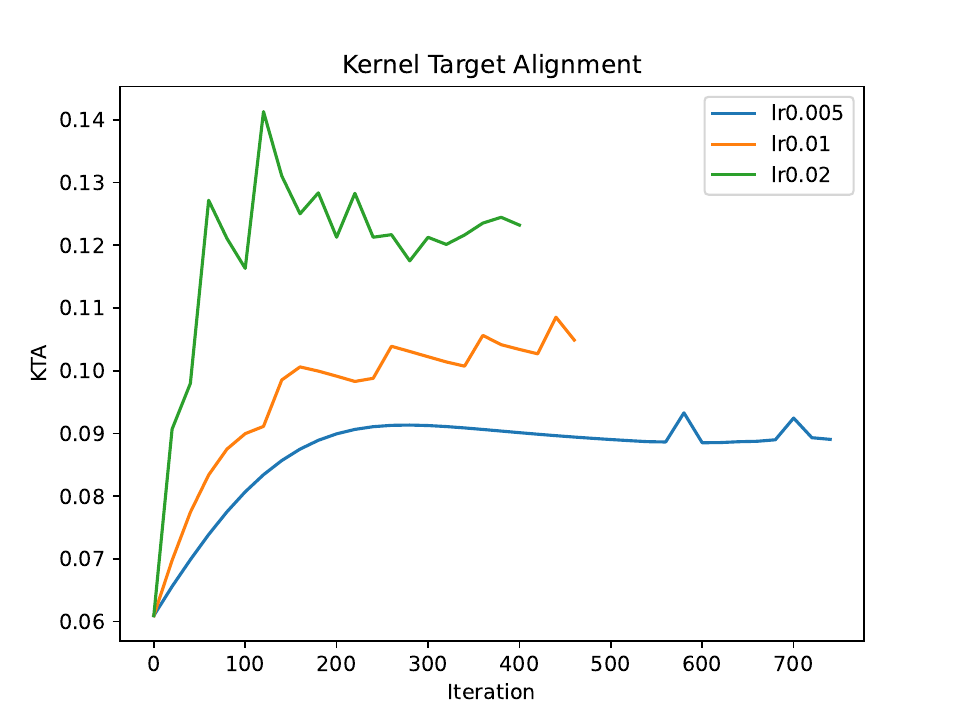}
		\caption{}
		\label{fig:kta_vit}
\end{subfigure}
\begin{subfigure}{.33\textwidth}
		\centering
		\includegraphics[width=\linewidth]{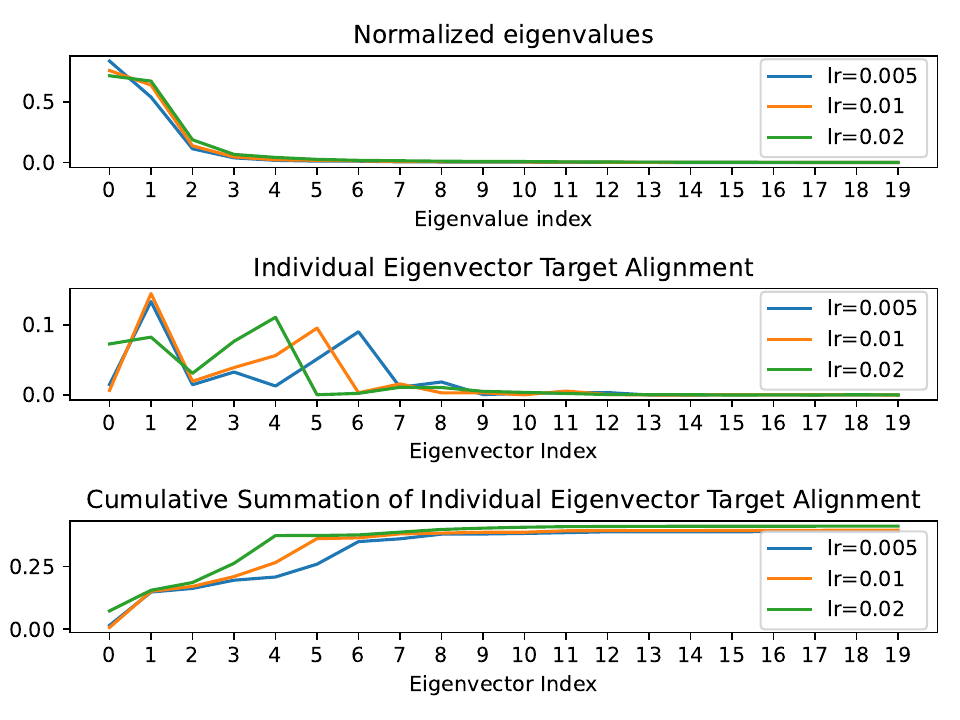}
		\caption{}
		\label{fig:alignment_shift_vit}
\end{subfigure}
\begin{subfigure}{.325\textwidth}
		\centering
		\includegraphics[width=\linewidth]{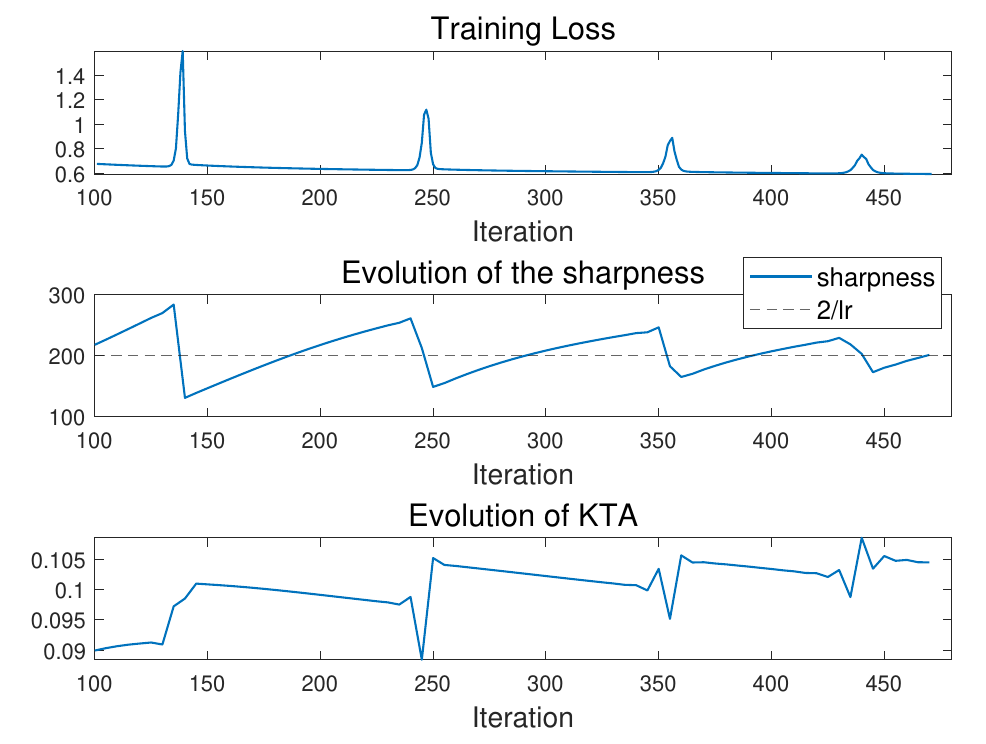}
		\caption{}
		\label{fig:connection_eos_vit}
\end{subfigure}
\caption{(a) KTA across learning rates, (b) the alignment shift, and (c) connection between the alignment dynamics and Edge of Stability in a simple ViT (see Section~\ref{section:experiment_details}). Column (c) singles out the trajectory with the learning rate 0.01. Similar to Figure~\ref{fig:alignment_shift_fc}, we plot the alignment of the first 20 eigenvectors in (b).}
\end{figure}
\begin{table}[ht]
	\caption{Comparison of the changing speed of KTA between the sharpness-decreasing periods and sharpness-increasing periods on (a) fully-connected network and (b) ViT. See Section~\ref{section:experiment_details} for details.}
	\begin{subtable}{.46\textwidth}
	\centering
	\caption{}
	\label{tab:quant_analysis_fc}
	\resizebox{\columnwidth}{!}{
		\begin{tabular}{cccccc}
				\toprule
				Iteration interval & 72-75 & 75-96 & 96-102 & 102-117 & 117-120\\
				\midrule
				sharpness behavior & decrease & increase & decrease & increase & decrease\\	
				\midrule
				total change of KTA & 0.0086   & 0.0065 & 0.0287 & -0.021 & 0.01948\\
				\midrule
				avg change of KTA per step & 0.0029  & 0.0003 & 0.0048 & -0.0014 & 0.0065\\
				\bottomrule
			\end{tabular}
	}
\end{subtable}
\begin{subtable}{0.52\textwidth}
	\centering
	\caption{}
	\label{tab:quant_analysis_vit}
	\resizebox{\columnwidth}{!}{
	\begin{tabular}{ccccccc}
				\toprule
				Iteration interval & 130-145  & 145-235  & 235-255  & 255-345  &345-365  & 365-425\\
				\midrule
				sharpness behavior & decrease & increase & decrease & increase & decrease & increase\\	
				\midrule
				total change of KTA & 0.0100   & -0.0034  & 0.0065 & -0.0042  & 0.0046  & -0.0024\\
				\midrule
				avg change of KTA per step & 6.68e-4  & -3.83e-5 & 3.26e-4  & -4.67e-5 & 2.30e-4  & -3.99e-5\\
				\bottomrule
			\end{tabular}
}
\end{subtable}
\end{table}
\subsection{Summary and Discussion}\label{section:exp_discussion}
We demonstrate that larger learning rates cause the NTK matrix as well as its leading eigenvectors to have higher alignment with the targets, and reveal the connection between this alignment behavior and EoS. In Appendix~\ref{section:exp_more_settings}, we provide supplementary results to demonstrate that our observations also hold in many other settings, such as different architectures, multi-class tasks, language transformers. Although we focus on GD for simplicity, in Appendix~\ref{section:experiments_SGD}, we show that our observations also generalize to the SGD setting. Now we add more discussion on the implication of our findings.

\textbf{Connection to feature learning}. Our work provides an interesting perspective on the beneficial effects of large learning rates on deep learning.  While a common perspective is that large learning rates find ``flatter minima'' which generalize better \citep{hochreiter1997flat}, our findings support the (potentially complementary) view that large learning rates improve feature learning \citep{pmlr-v202-andriushchenko23b, zhu2023catapults}, in our case by enhancing the alignment between the NTK and the target vector.

\textbf{Connection to the generalization ability}. Building on techniques from \cite{bartlett2002rademacher}, \cite{arora2019fine} presented a generalization bound in terms of the alignment between inverse NTK $K^{-1}$ and the target $y$ which is proportional to $\sqrt{\frac{y^TK^{-1}y}{n}}$ where $n$ is the number of training examples (here $K$ is bounded by their construction). Hence, if $y$ gets more alignment with earlier eigenvectors of $K$, i.e. later eigenvectors of $K^{-1}$, we will expect to have a smaller value of $\sqrt{\frac{y^TK^{-1}y}{n}}$ and a better generalization. This fact, together with our alignment shift observation, can help us understand the generalization ability of large learning rates. In Appendix~\ref{section:generalization}, we compute the normalized $y^TK^{-1}y$ and provide empirical results to demonstrate that the reducing sharpness during EoS and the alignment shift towards earlier eigenvectors are correlated with a sudden decrease of the normalized $y^TK^{-1}y$, and that larger learning rates lead to more significant decreases. We also directly examine the change of the test loss over time and observe the improvement of the generalization ability after the zigzag period when alignment shift occurs.

\section{Theoretical Analysis on Two-layer Linear Networks}\label{section:theory}
To better understand the mechanism of the empirical results presented in Section~\ref{section:experiments}, in this section we provide a theoretical analysis of the alignment behavior during EoS on a two-layer linear network. 
\subsection{Setup}\label{section:theory_setup}
\paragraph{Notation} We use $\|\cdot\|_2$ to denote the $l_2$ norm of a vector, and $\|\cdot\|_F$ to denote the Frobenius norm of a matrix. Let $\langle\cdot,\cdot\rangle$ be the inner product between vectors or matrices. Let $X[i]$ (resp. $X[i,j]$) denote the $i$-th (resp. $(i,j)$-th) component of a vector (resp. matrix) $X$. For a quantity $A$ which evolves throughout training, we use $A_t$ to denote its value at iteration $t$.

\paragraph{Model setup} Our training dataset consists of $n$ datapoints in $d$ dimensions, represented by the data matrix $X \in \mathbb{R}^{d \times n}$ and the label matrix $Y \in \mathbb{R}^{1 \times n}$.  We will use $\{q_i\}$ and $\{\lambda_i\}$ ($\lambda_1\ge\lambda_2\ge...\ge\lambda_n$) to denote the eigenvectors and eigenvalues, respectively, of the data kernel $K_x:=X^T X$. We also use $\{p_i\}$ to denote the eigenvectors of the covariance matrix $X X^T$.

We consider a two-layer linear network with one-dimensional output which maps $\mathbb{R}^d \to \mathbb{R}$: $x \mapsto W^{(2)} W^{(1)} x$ with parameters $W := (W^{(2)}, W^{(1)})$, where $W^{(1)}\in\mathbb{R}^{d_h\times d}$ and $W^{(2)}\in\mathbb{R}^{1\times d_h}$ and $d_h$ is the hidden layer size. We train the network using GD with the square loss function $L(W) := \frac{1}{2n} \left\| W^{(2)}W^{(1)}X - Y  \right\|_F^2$. By simple calculation, the NTK at iteration $t$ is given by:
\begin{equation}\label{eqn:ntk}
    K_t = \left\|W_{t}^{(2)}\right\|^2X^TX+X^TW_t^{(1)T}W_t^{(1)}X
\end{equation}

\paragraph{Data distribution} Many prior works (e.g. \cite{arora2019convergence,DBLP:conf/iclr/AtanasovBP22}) analyzing GD in linear networks use whitened data.  On the other hand, most EoS analyses assume that the spectrum of the NTK/Hessian has just one outlier eigenvalue, as this setting results in only one direction of oscillation (which is more tractable to analyze). This typically requires the data kernel $X^TX$ to have one or several outlier eigenvalues, e.g. assumed in \cite{wang2022analyzing}. Therefore in our case, we use the following data with several unwhitened leading eigenvalues, but a roughly whitened tail.

Assume that $\lambda_1=\Theta(n)$. Suppose there exists $k$ such that 1) for the first $k$ eigenvalues, $\forall i\le k$, $\lambda_{i}\le \lambda_{i-1}/\gamma$ with some $\gamma>1$; 2) the later $n-k$ eigenvalues have the order $\Theta(n^a)$ with $a\in(0,1)$, and that $\lambda_k-\frac{1}{n-k}\sum_{i=k+1}^n\lambda_i=\delta_\lambda n^a$ with $\delta_\lambda=o(1)$. We use a linear function of our input as the target, i.e. $Y=\beta^TX$ with $\|Y\|_2=\Theta(\sqrt{n})$. Further, recalling that $\{p_i\}$ denotes the eigenvectors of $XX^T$, we assume $\forall 1\le i\le n:\left|\left\langle \frac{\beta}{\|\beta\|},p_i\right\rangle\right|=\Theta\left(\frac{1}{\sqrt{n}}\right)$. This is to assume that $\beta$ is evenly distributed across the principal components of the data, reflecting the intuition that $\beta$ is independent of $X$.

\paragraph{Rank-1 structure of $W_t^{(1)}$} Existing papers have shown that weight matrices approach low ranks during training \cite{gunasekar2017implicit,pmlr-v75-li18a,chou2020gradient}. For example, \cite{DBLP:conf/iclr/AtanasovBP22,jiang2024does} focus on whitened data (i.e. $\frac{1}{n}XX^T$ is identity) and argued that if the initialization is small, then the weights will have the following approximate updates since $W_2W_1X$ is small during early training iterations:
\begin{align*}
	W_{t+1}^{(1)}\approx W_t^{(1)} + \frac{\eta}{n} W_t^{(2)T}\Lambda_{yx},\quad 
	W_{t+1}^{(2)}\approx W_{t}^{(2)} +\frac{\eta}{n}\Lambda_{yx}W_t^{(1)T}
\end{align*}
where $\eta$ is the learning rate and $\Lambda_{yx}=YX^T$. Based on these approximate updates, \cite{jiang2024does} proved that under small initialization, the weights have the approximate rank-1 structure $W_{t_0}^{(1)}\approx \boldsymbol{u}_{t_0}\boldsymbol{z}_{t_0}^T$, $W_{t_0}^{(2)}\approx c_{t_0}\boldsymbol{u}_{t_0}^T$ for some $c_{t_0}>0$ after a short time $t_0$. They also proved that after time $t_0$, this approximate structure is preserved and that the $\boldsymbol{u}_{t}$ remains unchanged, with $c_{t},\boldsymbol{v}_{t}$ being the only quantities which actively change. This allows us to write $W_t^{(1)}$, $W_{t}^{(2)}$ for $t > t_0$ as $W_t^{(1)}\approx \boldsymbol{u}\boldsymbol{z}_{t}^T$, $W_{t}^{(2)}\approx c_{t}\boldsymbol{u}^T$.

Our paper focuses on the general case with unwhitened $XX^T$. Since \cref{eqn:ntk} involves $W_t^{(1)}X$, we now focus on the update of $W_t^{(1)}X$ and $W_{t}^{(2)}$ and apply a similar argument as above. Then we have
\begin{align*}
	W_{t+1}^{(1)}X\approx W_t^{(1)}X +\frac{\eta}{n} W_t^{(2)T}YXX^T,\quad 
	W_{t+1}^{(2)}\approx W_{t}^{(2)} +\frac{\eta}{n} Y(W_t^{(1)}X)^T
\end{align*}
We can use a similar argument as in \cite{DBLP:conf/iclr/AtanasovBP22,jiang2024does} to show that $W_t^{(1)}X$ and $W_{t}^{(2)}$ quickly converge to the approximate structure $W_{t_0}^{(1)}X\approx \boldsymbol{u}\boldsymbol{v}_{t_0}^T$, $W_{t_0}^{(2)}\approx c_{t_0}\boldsymbol{u}^T$ after a short time $t_0$ with $\boldsymbol{v}_{t_0}=X^T\boldsymbol{z}_{t_0}$. In the following analysis, we will assume this approximate structure is exact.
\begin{assumption}\label{assump:rank-1}
Assume that at some early iteration $t_0$, we have $W_{t_0}^{(1)}X= \boldsymbol{u}\boldsymbol{v}_{t_0}^T$,$W_{t_0}^{(2)}= c_{t_0}\boldsymbol{u}^T$, with $c_{t_0}\in \mathbb{R},\boldsymbol{u}\in\mathbb{R}^{d_h},\boldsymbol{v}_{t_0}\in\mathbb{R}^n$. Without loss of generality, assume that $c_{t_0}>0$ and $\|\boldsymbol{u}\|_2=1$. Also assume small network output at $t_0$ such that $\forall i\le n, |F_{t_0}[i]|=o(Y[i])$.\footnote{This assumes that $t_0$ is in early training, which has been proven or verified before, e.g. in \cite{DBLP:conf/iclr/AtanasovBP22,jiang2024does}.}
\end{assumption}
In Appendix~\ref{section:rank1_preserved} we prove that if this rank-1 structure holds at time $t_0$, then it will be preserved for $t>t_0$.  This allows us to focus on the dynamics of $c_t$ and $\boldsymbol{v}_t$. In particular, the network output $F_t$ and the NTK $K_t$ can be written as $F_t=W_{t}^{(2)}W_t^{(1)}X=c_t\boldsymbol{v}_t$, $K_t=c_t^2X^TX+\boldsymbol{v}_t\boldsymbol{v}_t^T$. We also define the error vector $E_t:=F_t-Y$. Here and throughout, we use standard notations $\mathcal{O}(\cdot),\Omega(\cdot),\Theta()$ and $o(\cdot)$ to represent order of quantities when $n\rightarrow\infty$.


\subsection{Four Phases of the Training Dynamics}\label{section:four_phases}
\cite{wang2022analyzing,damian2023selfstabilization} revealed that when there is one eigenvalue at EoS, the training dynamics can be divided into four phases based on the position and the sign of change in the sharpness (see Appendix~\ref{section:illustration} for a cartoon illustration). In the two-layer linear network setting, \cite{wang2022analyzing} demonstrated that the largest eigenvalue of the Hessian is nearly equivalent to the largest eigenvalue of the NTK, and that this is in turn nearly equivalent to the norm squared of the second-layer weight matrix times the largest eigenvalue of the data kernel: $\lambda_{\max}(\nabla^2 L(W_t )) \approx \frac{1}{n}\lambda_{\max} (K_t) \approx \frac{\lambda_1}{n} c_t^2 $.
Since $\lambda_1$ is constant, the four phases can therefore be specified by the position and the sign of change in $c_t^2$.

While \cite{wang2022analyzing} essentially ignored the quadratic term of the step size, i.e. $\eta^2$, our results reveal that $\eta^2$ plays a crucial role on the alignment shift. More details can be found later in Phase III analysis. 

\paragraph{Phase I: $\frac{\lambda_1}{n}c_t^2<\frac{2}{\eta}$ and is increasing} During this phase, the training dynamics are stable and approximately follow the GF trajectory. Under GF updates, we have that
\begin{equation}\label{eqn:c2_v2_update}
   \frac{d}{dt}c_t^2=-\frac{2\eta}{n}\langle E_t,F_t\rangle=-\frac{2\eta}{n}\sum_{i=1}^n\langle E_t,q_i\rangle\langle F_t,q_i\rangle,\quad \frac{d}{dt}\|\boldsymbol{v}_t\|^2=-\frac{2\eta}{n}\sum_{i=1}^n\lambda_i\langle E_t,q_i\rangle\langle F_t,q_i\rangle 
\end{equation}
where $q_i$ is the $i$-th eigenvector of the data kernel $X^TX$ mentioned before. In Appendix~\ref{section:proof_p1} we prove that under GF, $\forall 1 \le i\le n:\langle E_t,q_i\rangle\langle F_t,q_i\rangle<0$ for $t>t'$ with some $t'$, making $\langle E_t,F_t\rangle<0$ and $c_t^2$ increase. If the learning rate is very small, the loss will converge before the sharpness increases to the stability threshold $2/\eta$.  Otherwise, the sharpness will increase to $2/\eta$ during training, causing the dynamics to go unstable and move to Phase II discussed below.  To ensure that this occurs, we need to carefully pick our learning rate. Appendix~\ref{section:lr_choice} discusses the intuition on the choice of $\eta$ in detail, where we pick $\eta=\Theta(n^{-\frac{1-a}{2}})$. Noticing the $\|\boldsymbol{v}_t\|^2$ update in \cref{eqn:c2_v2_update}, we also have that $\|\boldsymbol{v}_t\|^2$ increases as well during Phase I. A more careful analysis can be found in Appendix~\ref{section:proof_p1}. 

Here we see the behavior of $c_t^2$ and $\|\boldsymbol{v}_t\|^2$ is consistent, in that they are both increasing. This will not always be true in other phases, especially in Phase III. It turns out that the different behavior of $c_t^2$ and $\|\boldsymbol{v}_t\|^2$ in Phase III is the key contributing factor to the alignment shift phenomenon. For later phases, we make the following assumptions.
\begin{assumption}\label{assump:ub_sharpness}
	Assume that during the whole training procedure, $\frac{\lambda_1}{n}c_t^2\le\frac{C}{\eta}$ for some constant $C>0$ and that $\frac{\lambda_i}{n}c_t^2<\frac{1}{\eta}$ for $i\ge2$.
\end{assumption}
The first part of Assumption~\ref{assump:ub_sharpness} stipulates that the largest eigenvalue of the approximate NTK is smaller than $\frac{C}{\eta}$, an assumption previously made in \cite{wang2022analyzing} with $C=4$. The second part stipulates that other eigenvalues are small, which was also assumed in \cite{wang2022analyzing}.
\begin{assumption}\label{assump:loss_bound}
	Assume that during the training procedure, $\|E_t\|^2\le\mathcal{O}\left(n/\eta\right)$. Also assume that at the end of Phase I, the loss has undergone a nontrivial decrease: $\|E_t\|=\delta_0\|Y\|$ with $\delta_0=o(1)$. Further denote $\delta_1:=|\cos(Y,q_1)|=|\langle Y,q_1\rangle|/\|Y\|$, $\delta_2:=\max\{\delta_0,\sqrt{\delta_1}\}$.
\end{assumption}
The first part of Assumption~\ref{assump:loss_bound} assumes that the loss does not diverge too much during training. Because we pick $\eta=\Theta(\frac{1}{\text{poly}(n)})$ mentioned before and the initial $\|E_0\|^2$ is at the order $\mathcal{O}(\|Y\|^2)=\mathcal{O}(n)$, the assumption that $\|E_t\|^2\le\mathcal{O}\left(n/\eta\right)$ is saying that the loss is no larger than $\text{poly}(n)$ times the initial loss, which is not a strong assumption.

\paragraph{Phase II: $\frac{\lambda_1}{n}c_t^2>\frac{2}{\eta}$ and is still increasing.} During this phase, the approximate sharpness has risen above  $\frac{2}{\eta}$, which will lead to the oscillation of the network output along some direction. More precisely, we prove in Lemma~\ref{lemma:p2} that the error vector $E_t$ will oscillate with increasing magnitude along the $q_1$ direction, whereas $\langle E_{t}, q_i\rangle$ for $i\ge 2$ will remain small in magnitude.
\begin{lemma}\label{lemma:p2}
	Under our data distribution  (Section~\ref{section:theory_setup}) and Assumption~\ref{assump:rank-1}-\ref{assump:loss_bound}. Pick $\eta=\Theta(n^{-\frac{1-a}{2}})$ as discussed before. Write $\Delta_{t}:=\frac{\lambda_1}{n}c_{t}^2-\frac{2}{\eta}$ with $\Delta_{t}>0$. Then for $t$ during Phase II, we have that
    \begin{small}
        \[
        |\langle E_{t}, q_1\rangle|\ge(1+\eta\Delta_{t-1})|\langle E_{t-1}, q_1\rangle|-\mathcal{O}(\eta^2\delta_2|\langle Y,q_1\rangle|),\quad
        \sum_{i=2}^n\langle E_t,q_i\rangle^2\le\mathcal{O}\left(\delta_2^2\|Y\|^2\right)=o(\|Y\|^2)
        \]
    \end{small}

\end{lemma}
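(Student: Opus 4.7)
The strategy is to derive a sharp per-eigendirection recursion for $\langle E_t,q_i\rangle$ from the rank-1 GD updates, and then close the two bounds by a joint induction on $t$ throughout Phase~II.

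Under Assumption~\ref{assump:rank-1}, the GD updates on $W^{(2)}$ and $W^{(1)}X$ collapse to $c_{t+1}=c_t-\tfrac{\eta}{n}\langle E_t,\boldsymbol{v}_t\rangle$ and $\boldsymbol{v}_{t+1}=\boldsymbol{v}_t-\tfrac{\eta}{n}c_t X^TX E_t$. Expanding $E_{t+1}=c_{t+1}\boldsymbol{v}_{t+1}-Y$ and using $\boldsymbol{v}_t\boldsymbol{v}_t^TE_t=\langle E_t,\boldsymbol{v}_t\rangle\boldsymbol{v}_t$ produces the exact update
\[
E_{t+1}=E_t-\tfrac{\eta}{n}K_tE_t+\tfrac{\eta^2}{n^2}c_t\langle E_t,\boldsymbol{v}_t\rangle X^TXE_t.
\]
Using $K_tq_i=c_t^2\lambda_iq_i+\langle\boldsymbol{v}_t,q_i\rangle\boldsymbol{v}_t$ and projecting onto $q_i$ gives the master recursion
\[
\langle E_{t+1},q_i\rangle=\bigl(1-\tfrac{\eta c_t^2\lambda_i}{n}\bigr)\langle E_t,q_i\rangle-\tfrac{\eta}{n}\langle\boldsymbol{v}_t,q_i\rangle\langle E_t,\boldsymbol{v}_t\rangle+\tfrac{\eta^2c_t\lambda_i}{n^2}\langle E_t,\boldsymbol{v}_t\rangle\langle E_t,q_i\rangle.
\]

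For $i=1$, the leading factor equals $-(1+\eta\Delta_t)$ by definition of $\Delta_t$, so the reverse triangle inequality gives $|\langle E_{t+1},q_1\rangle|\ge(1+\eta\Delta_t)|\langle E_t,q_1\rangle|-R_t$ where $R_t$ collects the last two terms. To estimate $R_t$, I will use the identity $\boldsymbol{v}_t=F_t/c_t=(E_t+Y)/c_t$ to rewrite $\langle\boldsymbol{v}_t,q_1\rangle$ and $\langle E_t,\boldsymbol{v}_t\rangle$ in terms of $E_t$ and $Y$, split $\langle E_t,Y\rangle=\langle E_t,q_1\rangle\langle Y,q_1\rangle+\sum_{i\ge 2}\langle E_t,q_i\rangle\langle Y,q_i\rangle$, and apply Cauchy--Schwarz together with the inductive tail bound $\sum_{i\ge 2}\langle E_t,q_i\rangle^2\le\mathcal{O}(\delta_2^2\|Y\|^2)$. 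Each factor of $c_t^2\lambda_1/n$ is then converted to $\mathcal{O}(1/\eta)$ via Assumption~\ref{assump:ub_sharpness}, which collapses $R_t$ to $\mathcal{O}(\eta^2\delta_2|\langle Y,q_1\rangle|)$ as claimed. For $i\ge 2$, Assumption~\ref{assump:ub_sharpness} gives $|1-\eta c_t^2\lambda_i/n|<1$, so the diagonal factor is contractive; squaring the recursion, summing over $i\ge 2$, and bounding the cross terms by Cauchy--Schwarz together with $\sum_{i\ge 2}\langle\boldsymbol{v}_t,q_i\rangle^2\le\|\boldsymbol{v}_t\|^2$ and the same expansion of $\langle E_t,\boldsymbol{v}_t\rangle$ as above lets me propagate the tail bound $\sum_{i\ge 2}\langle E_t,q_i\rangle^2=\mathcal{O}(\delta_2^2\|Y\|^2)$ across one step. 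A joint induction seeded at $t_0$ (where Assumption~\ref{assump:loss_bound} gives $\|E_{t_0}\|^2\le\delta_0^2\|Y\|^2\le\delta_2^2\|Y\|^2$) closes both bounds simultaneously throughout Phase~II.

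The main obstacle is the feedback in this joint induction. During Phase~II the $q_1$ projection grows geometrically as roughly $\prod(1+\eta\Delta_t)$, and the coupling term $\langle E_t,\boldsymbol{v}_t\rangle$ is \emph{quadratic} in $E_t$, so a naive bound would let the tail modes be driven by the growing $q_1$ mode at a comparable rate and destroy the $o(\|Y\|^2)$ estimate. The key rescue is that every occurrence of $c_t^2\lambda_1/n$ is capped by $C/\eta$ through Assumption~\ref{assump:ub_sharpness}, which converts each extra power of $\langle E_t,q_1\rangle$ into an extra factor of $\eta$. This is precisely the mechanism that produces the $\eta^2$ scaling in the $q_1$ residual, and it is what keeps the two inductive bounds compatible.
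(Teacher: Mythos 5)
Your master recursion and joint-induction scaffolding match the paper's proof, but two closing steps would fail as written. First, the claim that $R_t\le\mathcal{O}(\eta^2\delta_2|\langle Y,q_1\rangle|)$ for the $i=1$ recursion cannot be reached by Cauchy--Schwarz alone: after substituting $\boldsymbol{v}_t=F_t/c_t$, the residual contains the piece $\bigl(-\tfrac{\eta}{nc_t^2}+\tfrac{\eta^2\lambda_1}{n^2}\bigr)\langle E_t,F_t\rangle\langle E_t,q_1\rangle$, of magnitude $\Theta(\eta^2\delta_2)|\langle E_t,q_1\rangle|$, and during Phase~II the projection $|\langle E_t,q_1\rangle|$ grows toward $\Theta(\|Y\|)$ whereas $|\langle Y,q_1\rangle|=\delta_1\|Y\|=o(\|Y\|)$; this piece simply does not fit into $\mathcal{O}(\eta^2\delta_2|\langle Y,q_1\rangle|)$. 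The paper does not absorb it by magnitude: it observes that the coefficient $-\tfrac{\eta}{nc_t^2}+\tfrac{\eta^2\lambda_1}{n^2}=\tfrac{\eta^2\lambda_1(1+\eta\Delta_t)}{n^2(2+\eta\Delta_t)}$ is positive, while $\langle E_t,F_t\rangle<0$ throughout Phase~II, so the piece has the \emph{same sign} as $-(1+\eta\Delta_t)\langle E_t,q_1\rangle$ and only strengthens the prefactor. Your plan never invokes the Phase~II sign of $\langle E_t,F_t\rangle$, so a pure triangle-inequality bound degrades the prefactor and loses the clean $(1+\eta\Delta_{t-1})$.

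Second, and more structurally, the inductive invariant that actually drives the paper's argument is $|\langle E_t,F_t\rangle|\le\mathcal{O}(\delta_2\|Y\|^2)$, and the mechanism you highlight---capping $c_t^2\lambda_1/n$ by $C/\eta$ via Assumption~\ref{assump:ub_sharpness}---cannot produce the $\delta_2$ factor. That factor comes from (a) $\langle E_t,F_t\rangle<0$ in Phase~II forcing $\|E_t\|\le\|Y\|$, (b) the near-orthogonality $|\langle Y,q_1\rangle|=\delta_1\|Y\|$ with $\delta_1\le\delta_2^2$, which keeps $\langle E_t,q_1\rangle\langle Y,q_1\rangle$ small even as $\langle E_t,q_1\rangle$ grows, and (c) the tail bound $\sum_{i\ge 2}\langle E_t,q_i\rangle^2\le\mathcal{O}(\delta_2^2\|Y\|^2)$. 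The paper assembles (a)--(c) in a short geometric argument in the planes spanned by $q_1,Y$ and by $E_t,Y$ (their Lemma~\ref{lemma:error_p2_induction}). Your proposal never mentions $\delta_1$ or the Phase~II sign, so the step that propagates the bound on $|\langle E_t,F_t\rangle|$ does not close. A further caution: your tail-mode step uses a one-shot $\ell^2$ estimate in place of the paper's per-coordinate telescoping over $\Theta(\lambda_1/\lambda_i)$ steps; that route is not automatically doomed, but to make it close you would need the tail-eigenvalue order $\lambda_i=\Theta(n^a)$, the choice $\eta=\Theta(n^{-(1-a)/2})$, and the distribution $\langle Y,q_i\rangle^2=\Theta(\lambda_i\|Y\|^2/\sum_k\lambda_k)$ explicitly, none of which appear in your sketch.
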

Recall from Phase I that under the gradient flow, the time derivative of $c_{t}^2$ was determined by the sign of $\langle E_t, F_t \rangle$.  We prove in Appendix~\ref{section:p3_ct} that in Phase II and later phases, even though GD no longer follows the gradient flow, the sign of $c_{t+1}^2 - c_t^2$ is still determined by the sign of $\langle E_t, F_t \rangle$. That means in Phase II, $c_t^2$ will keep increasing as long as $\langle E_t,F_t\rangle<0$. However, since Lemma~\ref{lemma:p2} shows the divergence of $E_t$ along the $q_1$ direction, once $|\langle E_{t}, q_1\rangle|$ grows sufficiently large, we have $\|E_t\|> \|Y\|$, which yields $\langle E_t,F_t\rangle=\|E_t\|^2+\langle E_t,Y\rangle\ge\|E_t\|^2-\|E_t\|\|Y\|>0$, causing $c^2_{t+1} < c^2_t$ and we move to Phase III. The detailed analysis of Phase II is presented in Appendix~\ref{section:proof_p2}.

\paragraph{Phase III: $\frac{\lambda_1}{n}c_t^2>\frac{2}{\eta}$ and starts to decrease.} 
As discussed in Phase II, the divergence of $E_t$ along the $q_1$ direction eventually causes $c_t^2$ to decrease. In Phase III, $c_t^2$ will keep decreasing until the sharpness (which is approximately $\frac{\lambda_1}{n}c_t^2$ in our case) goes below $\frac{2}{\eta}$ and $|\langle E_t,q_1\rangle|$ starts to shrink. Then we will move to Phase IV. As for $\|\boldsymbol{v}_t\|^2$, we can prove that $\|\boldsymbol{v}_t\|^2$ is \emph{increasing} in Phase III.  In particular, the one-step evolution of $\|\boldsymbol{v}_t\|^2$ is given by the following lemma.

\begin{lemma}
We have that
\begin{small}
    \[
    \|\boldsymbol{v}_{t+1}\|^2-\|\boldsymbol{v}_{t}\|^2=\frac{2\eta+\Delta_t\eta^2}{n}\cdot\sum_{i=2}^n\frac{\lambda_i^2}{\lambda_1}\langle E_t,q_i\rangle^2+\frac{\Delta_t\eta^2}{n}\lambda_1\langle E_t,q_1\rangle^2-\frac{2\eta}{n}\sum_{i=2}^n\lambda_i\langle E_t,q_i\rangle^2-\frac{2\eta}{n}E_tK_xY^T
    \]
\end{small}
where $\Delta_t$ is defined in Lemma~\ref{lemma:p2}.
\end{lemma}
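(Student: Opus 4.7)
The plan is to derive the one-step evolution of $\|\boldsymbol{v}_t\|^2$ by direct calculation from the gradient descent update, expand everything in the eigenbasis of $K_x$, and then rearrange using the definition $\Delta_t := \frac{\lambda_1}{n}c_t^2 - \frac{2}{\eta}$ to isolate the announced expression. This is an exact algebraic identity for the GD step under the rank-1 ansatz of Assumption~\ref{assump:rank-1}, not a bound, so no approximation is needed.

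First I would compute the update of $\boldsymbol{v}_t$. From $L(W)=\frac{1}{2n}\|W^{(2)}W^{(1)}X-Y\|_F^2$, the GD update for the first layer yields $W_{t+1}^{(1)}X = W_t^{(1)}X - \frac{\eta}{n}W_t^{(2)T}E_tK_x$. Plugging in $W_t^{(1)}X=\boldsymbol{u}\boldsymbol{v}_t^T$ and $W_t^{(2)T}=c_t\boldsymbol{u}$, and cancelling the unit vector $\boldsymbol{u}$, this gives the scalar-valued update
\[
\boldsymbol{v}_{t+1} = \boldsymbol{v}_t - \tfrac{\eta c_t}{n}\,K_x E_t^T.
\]
(The preservation of the rank-1 structure at $t+1$ is established in Appendix~\ref{section:rank1_preserved}, so this update is consistent.)

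Next, expanding $\|\boldsymbol{v}_{t+1}\|^2$ gives
\[
\|\boldsymbol{v}_{t+1}\|^2 - \|\boldsymbol{v}_t\|^2 = -\tfrac{2\eta c_t}{n}\,\boldsymbol{v}_t^T K_x E_t^T + \tfrac{\eta^2 c_t^2}{n^2}\,E_t K_x^2 E_t^T.
\]
For the cross term, use $F_t = c_t\boldsymbol{v}_t^T$ and $E_t = F_t - Y$ to substitute $c_t\boldsymbol{v}_t^T = E_t + Y$, turning it into $-\frac{2\eta}{n}\bigl(E_tK_xE_t^T + E_tK_xY^T\bigr)$. Then I would expand the two $K_x$ quadratic forms in its eigenbasis: $E_tK_xE_t^T = \sum_i \lambda_i\langle E_t,q_i\rangle^2$ and $E_tK_x^2E_t^T = \sum_i \lambda_i^2\langle E_t,q_i\rangle^2$.

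The crucial algebraic move is then to substitute $c_t^2 = \frac{n}{\lambda_1}(\Delta_t + \frac{2}{\eta})$ into the $\eta^2 c_t^2/n^2$ prefactor, which produces $\frac{2\eta + \Delta_t\eta^2}{n\lambda_1}$. The right-hand side therefore equals
\[
\tfrac{2\eta+\Delta_t\eta^2}{n}\sum_{i=1}^n\tfrac{\lambda_i^2}{\lambda_1}\langle E_t,q_i\rangle^2 - \tfrac{2\eta}{n}\sum_{i=1}^n \lambda_i\langle E_t,q_i\rangle^2 - \tfrac{2\eta}{n}\,E_tK_xY^T.
\]
Finally, split the $i=1$ term out of each sum and combine: the $i=1$ contributions give $\frac{2\eta+\Delta_t\eta^2}{n}\lambda_1\langle E_t,q_1\rangle^2 - \frac{2\eta}{n}\lambda_1\langle E_t,q_1\rangle^2 = \frac{\Delta_t\eta^2}{n}\lambda_1\langle E_t,q_1\rangle^2$, and the $i\ge 2$ terms remain as stated. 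There is no real obstacle here; the only thing to watch is keeping row/column conventions straight for $E_t,Y,\boldsymbol{v}_t$ and noticing that the clean $\Delta_t$ dependence on the leading eigendirection arises precisely from the cancellation of the two $2\eta\lambda_1$ contributions in the $i=1$ component, which is exactly what drives the alignment shift in the subsequent Phase III analysis.
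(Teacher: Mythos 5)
Your proof is correct and takes essentially the same route as the paper: compute the GD update $\boldsymbol{v}_{t+1} = \boldsymbol{v}_t - \frac{\eta c_t}{n}K_x E_t^T$, expand $\|\boldsymbol{v}_{t+1}\|^2$, rewrite the cross term using $c_t\boldsymbol{v}_t^T = F_t = E_t + Y$, substitute $c_t^2 = \frac{n}{\lambda_1}(\Delta_t + 2/\eta)$ into the quadratic prefactor, and split out the $i=1$ component of the eigenbasis expansion to exhibit the $\Delta_t\eta^2$ cancellation. This matches the computation in Appendix~\ref{section:proof_p3} step for step.
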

By the divergence of $|\langle E_t,q_1\rangle|$, we can prove that the RHS is dominated by the second term $\frac{\Delta_t\eta^2}{n}\lambda_1\langle E_t,q_1\rangle^2$ (see Appendix~\ref{section:proof_p3}), which does not exist under GF update and was usually ignored in prior works. Note that in Phase III, $\frac{\lambda_1}{n}c_t^2>\frac{2}{\eta}$, and hence $\Delta_t>0$. This leads to the increase of $\|\boldsymbol{v}_{t}\|^2$. In the following section, we will see that this increasing behavior of $\|\boldsymbol{v}_t\|^2$, which is different from the decrease of $c_t^2$, is the key contributing factor to the alignment shift, suggesting that the typically ignored $\eta^2$ term actually plays a crucial role on the alignment behavior during EoS.

\paragraph{Phase IV: $\frac{\lambda_1}{n}c_t^2<\frac{2}{\eta}$ and is still decreasing.}  In Phase IV, $E_t$ starts to shrink along the $q_1$ direction. \cite{wang2022analyzing} showed that we eventually have $\langle E_t,F_t\rangle<0$ and go back to Phase I. Our paper will not analyze Phase IV in detail but will study the \emph{overall} behavior after going through Phase IV.

\subsection{Alignment Shift During the Sharpness Reduction Phases (III and IV)}
Now we are ready to analyze the alignment shift phenomenon. We begin by defining the following ratio between $c_t^2$ and $\|\boldsymbol{v}_t\|^2$, which turns out to be a crucial quantity when analyzing alignment shift.
\begin{definition}\label{def:alpha}
    Define $\alpha_t:=\frac{\|\boldsymbol{v}_t\|^2}{c_t^2}$.
\end{definition}
We note that the sharpness reduction phases (III and IV) are typically correlated with the spikes of the loss (as is shown empirically in Section~\ref{section:experiment_details}). \cite{DBLP:conf/iclr/CohenKLKT21,wang2022analyzing} have observed a non-monotonically decreasing behavior of the loss where it has a decreasing trend in spite of oscillation. In other words, the loss after the spike is typically around or smaller than the value before the spike. This motivates us to introduce the following assumption on the change of loss after going through phases III and IV.

\begin{assumption}\label{assump:decreasing loss}
	Let $t_1$ be the start of Phase III and $t_2$ the end of Phase IV. Assume $\|E_{t_2}\|^2\le \|E_{t_1}\|^2$.
\end{assumption}
\begin{theorem}\label{thm:main}
Denote $\Delta_t:=\frac{\lambda_1}{n}c_t^2-\frac{2}{\eta}$. Under our data distribution (see Section~\ref{section:theory_setup}) and Assumption~\ref{assump:rank-1}-\ref{assump:loss_bound} and pick $\eta=\Theta(n^{-\frac{1-a}{2}})$ as discussed before.

(A). For $t$ in Phase III such that $|\langle E_t,q_1\rangle|\ge\Omega(\delta_2\|Y\|)$ with $\delta_2$ defined in Lemma~\ref{lemma:p2} and $\Delta_t>\Omega(\max\{1,\frac{1}{\eta n^{a/4}}\})$, we have that $c_{t+1}^2<c_t^2$ and $\|\boldsymbol{v}_{t+1}\|^2>\|\boldsymbol{v}_{t}\|^2$.

(B). Suppose we further have Assumption~\ref{assump:decreasing loss}. For $t_1,t_2$ defined in Assumption~\ref{assump:decreasing loss}, if $\Delta_{t_1}\ge\Omega(\frac{\delta_2}{\eta})$, then we will get that $\alpha_{t_2}>\alpha_{t_1}$ and that $\text{cos}(\boldsymbol{v}_{t_1},Y)\ge1-\mathcal{O}(\delta_2),\text{cos}(\boldsymbol{v}_{t_2},Y)\ge1-\mathcal{O}(\delta_2)$.
\end{theorem}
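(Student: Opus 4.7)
The plan is to leverage the sign-of-update identity $\mathrm{sign}(c_{t+1}^2 - c_t^2) = -\mathrm{sign}(\langle E_t, F_t\rangle)$ established in Appendix~\ref{section:p3_ct}, together with the concentration $\sum_{i \ge 2} \langle E_t, q_i\rangle^2 \le \mathcal{O}(\delta_2^2 \|Y\|^2)$ and the rank-1 structure carried over from Phase~II (Lemma~\ref{lemma:p2}). For Part~(A), the first claim $c_{t+1}^2 < c_t^2$ is a direct consequence of the Phase~III definition ($c_t^2$ decreasing) and the sign identity. The substantive part is the second claim $\|\boldsymbol{v}_{t+1}\|^2 > \|\boldsymbol{v}_{t}\|^2$: I would apply the given one-step expansion and argue that the positive term $\frac{\Delta_t \eta^2 \lambda_1}{n} \langle E_t, q_1\rangle^2$ dominates the remaining three. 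The main comparison is with $-\frac{2\eta}{n}\sum_{i\ge 2}\lambda_i\langle E_t, q_i\rangle^2$ and $-\frac{2\eta}{n} E_t K_x Y^T$: the former is bounded using $\lambda_i \le \lambda_2 \le \lambda_1/\gamma$ together with the concentration bound, while the latter is controlled by Cauchy--Schwarz and the small overlap $|\langle Y, q_1\rangle| = \delta_1 \|Y\| = \Theta(1)$. The threshold $\Delta_t \ge \Omega(\max\{1, 1/(\eta n^{a/4})\})$ is precisely what makes the $\Delta_t \eta^2$ term win in each comparison, using $\langle E_t, q_1\rangle^2 \ge \Omega(\delta_2^2 \|Y\|^2)$ from the hypothesis.

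For Part~(B), I first establish $\|E_{t_j}\|^2 = \mathcal{O}(\delta_2 \|Y\|^2)$ for $j = 1, 2$. At $t_1$, the first time $\langle E_t, F_t\rangle = \|E_t\|^2 + \langle E_t, Y\rangle$ becomes positive, combined with the estimate $|\langle E_t, Y\rangle| = \mathcal{O}(\delta_2 \|Y\|^2)$ (from $\|E_t^\perp\| \le \mathcal{O}(\delta_2 \|Y\|)$ and $\delta_1 = \Theta(1/\sqrt{n}) \ll \delta_2$), forces $|\langle E_{t_1}, q_1\rangle|^2 = \Theta(\delta_2 \|Y\|^2)$, where the upper bound is controlled by the one-step overshoot factor $(1+\eta\Delta_{t_1-1}) = \Theta(1)$ from Lemma~\ref{lemma:p2}. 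Assumption~\ref{assump:decreasing loss} then extends the bound to $\|E_{t_2}\|^2 \le \|E_{t_1}\|^2 = \mathcal{O}(\delta_2 \|Y\|^2)$. Combining $\|E_{t_j}\|^2$ and $|\langle E_{t_j}, Y\rangle|$, one obtains $\|F_{t_j}\|^2 = \|Y\|^2(1 \pm \mathcal{O}(\delta_2))$ and $\langle F_{t_j}, Y\rangle = \|Y\|^2(1 \pm \mathcal{O}(\delta_2))$, yielding $\cos(\boldsymbol{v}_{t_j}, Y) = \cos(F_{t_j}, Y) \ge 1 - \mathcal{O}(\delta_2)$ since $c_t > 0$ by Assumption~\ref{assump:rank-1}.

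For $\alpha_{t_2} > \alpha_{t_1}$, I rewrite $\alpha_t = \|F_t\|^2 / c_t^4$ using $F_t = c_t \boldsymbol{v}_t$. From the cosine analysis, $(\|F_{t_2}\|/\|F_{t_1}\|)^2 = 1 \pm \mathcal{O}(\delta_2)$. Using the Phase~III/IV inequalities $c_{t_1}^2 = (\tfrac{2}{\eta} + \Delta_{t_1})\tfrac{n}{\lambda_1}$ and $c_{t_2}^2 < \tfrac{2n}{\eta\lambda_1}$, I get $(c_{t_1}/c_{t_2})^4 \ge (1 + \tfrac{\Delta_{t_1}\eta}{2})^2 \ge 1 + \Omega(\delta_2)$ under the hypothesis $\Delta_{t_1}\eta \ge \Omega(\delta_2)$. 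Multiplying, $\alpha_{t_2}/\alpha_{t_1} \ge (1 - \mathcal{O}(\delta_2))(1 + \Omega(\delta_2)) > 1$ for suitable constants. The main obstacle is the simultaneous upper and lower estimate $|\langle E_{t_1}, q_1\rangle|^2 = \Theta(\delta_2 \|Y\|^2)$ at the Phase~II/III transition: one needs both a lower bound (so that the sign of $\langle E_t, F_t\rangle$ actually flips) and a matching upper bound controlling the one-step overshoot from Lemma~\ref{lemma:p2}; a secondary subtlety is the $n$-power bookkeeping for Part~(A), where with $\eta = \Theta(n^{-(1-a)/2})$ several terms in the one-step $\|\boldsymbol{v}_{t+1}\|^2 - \|\boldsymbol{v}_t\|^2$ expansion have comparable orders and the threshold on $\Delta_t$ is just barely sufficient to ensure dominance.
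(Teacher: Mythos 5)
Your outline correctly identifies the two pillars of Part (A) — the sign identity for $c_t^2$ and the one-step $\|\boldsymbol{v}_t\|^2$ expansion with the $\Delta_t\eta^2$ term as the intended dominant one — and your Part (B) argument is essentially the paper's, just packaged as a ratio $\alpha_t = \|F_t\|^2/c_t^4$ rather than as separate bounds on $\|\boldsymbol{v}_{t_j}\|^2$ and $c_{t_j}^2$. However, the core quantitative step in Part (A) has a genuine gap. You propose to bound $\sum_{i\ge 2}\lambda_i\langle E_t,q_i\rangle^2$ by pulling out $\lambda_i \le \lambda_2 = \Theta(\lambda_1) = \Theta(n)$ and then using the Phase~II concentration $\sum_{i\ge 2}\langle E_t,q_i\rangle^2 \le \mathcal{O}(\delta_2^2\|Y\|^2)$. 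This gives $\sum_{i\ge 2}\lambda_i\langle E_t,q_i\rangle^2 \lesssim n\,\delta_t^2\|Y\|^2$, which is the \emph{same order} as $\lambda_1\langle E_t,q_1\rangle^2 = \Theta(n\,\delta_t^2\|Y\|^2)$. For the $\Delta_t\eta^2$ term to dominate you would then need $\Delta_t\eta \gtrsim 1$, i.e., $\Delta_t \gtrsim 1/\eta$ — but Assumption~\ref{assump:ub_sharpness} caps $\Delta_t$ at $(C-2)/\eta$, so this condition is at the very edge of what is allowed and would make the theorem's stated threshold $\Delta_t \ge \Omega(\max\{1, \tfrac{1}{\eta n^{a/4}}\})$ (which is polynomially smaller) unprovable by your route.

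The paper instead obtains the much sharper estimate $\sum_{i\ge 2}\lambda_i\langle E_t,q_i\rangle^2 \le \mathcal{O}(\delta_t^2 n^a)\|Y\|^2$ — a factor of $n^{1-a}$ better — by exploiting the $i$-dependent bounds $|\langle E_t,q_i\rangle|\le\mathcal{O}(\delta_t\eta^2\tfrac{\lambda_1}{\lambda_i}|\langle Y,q_i\rangle| + |\langle E_{T_1},q_i\rangle|)$ from Lemma~\ref{lemma:q1_dominate} together with the tail structure $\lambda_i=\Theta(n^a)$ for $i>k$ and $k=\mathcal{O}(\log n)$; crucially, the factor $\lambda_1/\lambda_i$ cancels a $\lambda_i$ and shifts weight to $\langle Y,q_i\rangle^2$, which sums to $\|Y\|^2$ rather than $\lambda_2\|Y\|^2$. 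Moreover, the paper's proof has a two-case split ($\|E_t\|^2\lesssim n$ handled via Lemma~\ref{lemma:q1_dominate}, and $\|E_t\|^2=\Omega(n)$ handled by telescoping Lemma~\ref{lemma:error_p3} and a different bookkeeping of the cross term), and the stated threshold emerges as the maximum over the two cases; your sketch doesn't address the large-$\|E_t\|$ regime at all. Two additional minor slips: $\delta_1 = |\langle Y,q_1\rangle|/\|Y\| = \mathcal{O}(n^{-a/2})$, not $\Theta(1/\sqrt{n})$, and $|\langle Y,q_1\rangle| = \Theta(n^{(1-a)/2})$ rather than $\Theta(1)$; the control of the cross term $E_tK_xY^T$ hinges on the precise relation $\delta_2 \ge \sqrt{\delta_1}$, so getting these orders right matters.
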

Theorem~\ref{thm:main} (A) implies a one-step increasing behavior of $\alpha_t$ in Phase III. In Phase IV, although $\alpha_t$ may not have this one-step increase, Theorem~\ref{thm:main} (B) shows that it will have an \emph{overall} increase after going through the whole Phase III and IV, i.e. $\alpha_{t_2}>\alpha_{t_1}$. It also shows that at times $t_1$ and $t_2$, $\boldsymbol{v}_t$ approximately aligns with the training target $Y$, which allows us to write $K_t$ at $t_1$ and $t_2$ as $K_t\approx c_t^2X^TX+\|\boldsymbol{v}_t\|^2\hat{\boldsymbol{y}}\hat{\boldsymbol{y}}^T$ where $\hat{\boldsymbol{y}}:=Y^T/\|Y\|\in\mathbb{R}^n$ is the unit vector.

We now argue that in the above approximate structure at $t_1$ and $t_2$, a larger $\alpha_t$ causes $\hat{\boldsymbol{y}}$ to align with earlier NTK eigenvectors, as opposed to the later ones.  Intuitively, if $\|\boldsymbol{v}\|^2$ is large relative to $c^2$ (i.e. if $\alpha = \|\boldsymbol{v}\|^2 / c^2$ is large), then early eigenvectors of $K$ will align with $\hat{\boldsymbol{y}}$, rather than with the top eigenvectors of $X^T X$. The following lemma gives a sufficient condition (involving $\alpha$ and the eigenvalue spectrum of $X^TX$) for the $j$-th NTK eigenvector to be the one with the highest alignment with $\hat{\boldsymbol{y}}$.  The lemma is just a property of eigenvalue spectra, and does not involve training dynamics. Hence for ease of notation, we drop the iteration subscript $t$ in the lemma statement. We provide an illustration of it in a warm-up setting in Appendix~\ref{section:warm-up} and proofs in the general case in Appendix~\ref{section:proof_eig_shift}
\begin{lemma}\label{lemma:eig_shift}
    Let $\tilde q_1,...,\tilde q_n$ be the eigenvectors of the approximate NTK matrix $ c^2X^TX+\|\boldsymbol{v}\|^2\hat{\boldsymbol{y}}\hat{\boldsymbol{y}}^T$ with corresponding non-increasing eigenvalues, and let $\alpha:=\frac{\|\boldsymbol{v}\|^2}{c^2}$. Consider the $k$ and $\delta_\lambda$ defined in our data distribution in Section~\ref{section:theory_setup}. For any $j\le k-1$, if $\alpha$ satisfies $\frac{\lambda_j-\lambda_k}{1-\mathcal{O}(\delta_\lambda+n^{-a/2})}<\alpha<\frac{\lambda_{j-1}-\lambda_k}{1+\mathcal{O}(\delta_\lambda+n^{-a/2})}$, we have that $\arg\max_{1\le i\le n}|\langle \tilde q_i,\hat{\boldsymbol{y}}\rangle|=j$.
\end{lemma}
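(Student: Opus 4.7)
My approach reduces Lemma~\ref{lemma:eig_shift} to the classical analysis of rank-one spectral perturbations. Because $c^2 X^T X + \|\boldsymbol{v}\|^2 \hat{\boldsymbol{y}}\hat{\boldsymbol{y}}^T$ has the same eigenvectors as $M := X^T X + \alpha\,\hat{\boldsymbol{y}}\hat{\boldsymbol{y}}^T$ (the eigenvalues only differ by the scalar factor $c^2$), it suffices to analyse $M$.  Writing $a_i := \langle \hat{\boldsymbol{y}}, q_i\rangle$ and using that any eigenvector $\tilde q$ of $M$ with eigenvalue $\tilde\mu \notin \{\lambda_i\}$ satisfies $\tilde q \propto (\tilde\mu I - X^T X)^{-1}\hat{\boldsymbol{y}}$, the standard derivation yields the secular equation
\[
\sum_{i=1}^n \frac{|a_i|^2}{\tilde\mu - \lambda_i} \;=\; \frac{1}{\alpha},
\]
and, after normalising $\tilde q$ and applying the secular equation to evaluate $\langle \hat{\boldsymbol{y}}, \tilde q\rangle$, the overlap identity
\[
|\langle \hat{\boldsymbol{y}}, \tilde q\rangle|^2 \;=\; \Big(\alpha^2 \sum_{i=1}^n \frac{|a_i|^2}{(\tilde\mu - \lambda_i)^2}\Big)^{-1}.
\]
The task therefore reduces to locating the $j$-th root $\tilde\mu_j$ of the secular equation and comparing the resulting overlaps across all $n$ roots.

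\textbf{Locating $\tilde\mu_j$ via concentration of the $a_i$'s.}  From $\hat{\boldsymbol{y}}=X^T\beta/\|Y\|$ and the SVD of $X$ I would first derive $a_i=\sqrt{\lambda_i}\,\langle p_i,\beta\rangle/\|Y\|$, so by the data assumption $|a_i|^2 = \Theta(\lambda_i/\sum_l\lambda_l)$.  Since $\sum_{l\le k}\lambda_l=O(n)$ while $\sum_{l>k}\lambda_l=\Theta(n^{1+a})$, almost all of the mass of $\hat{\boldsymbol{y}}$ in the $q$-basis sits on the tail: $\sum_{i>k}|a_i|^2 = 1 - O(n^{-a})$ and $|a_i|^2 = O(n^{-a})$ individually for $i\le k$.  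Next, I would plug the ansatz $\tilde\mu = \lambda_k+\alpha$ into the secular equation.  Replacing each tail eigenvalue by its mean $\bar\lambda = \frac{1}{n-k}\sum_{i>k}\lambda_i = \lambda_k - \delta_\lambda n^a$ makes the tail sum equal to $(1+O(\delta_\lambda + n^{-a/2}))/\alpha$; the head sum is negligible because each $|a_i|^2=O(n^{-a})$ and the hypothesis keeps $\alpha$ a relative $\Omega(\delta_\lambda+n^{-a/2})$ away from $\lambda_j-\lambda_k$ and $\lambda_{j-1}-\lambda_k$, making each denominator at least of order $n^{a/2}$ times $\alpha$.  So the ansatz solves the secular equation up to a multiplicative $(1\pm O(\delta_\lambda+n^{-a/2}))$ factor.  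By Cauchy interlacing, $\tilde\mu_j\in(\lambda_j,\lambda_{j-1})$, and the two-sided bound on $\alpha$ is exactly the condition guaranteeing that the approximate root $\lambda_k+\alpha$ with this error tolerance still lands strictly inside $(\lambda_j,\lambda_{j-1})$; this forces the approximate root to be $\tilde\mu_j$.

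\textbf{Overlap comparison.} With $\tilde\mu_j \approx \lambda_k+\alpha$ in hand, the tail part of $S(\tilde\mu_j):=\sum_i |a_i|^2/(\tilde\mu_j-\lambda_i)^2$ evaluates to $(1-o(1))/\alpha^2$ by the same concentration argument, and the head part is $O(n^{-a})/\Omega(\alpha^2) = o(1/\alpha^2)$ because $\alpha = \Omega(n^a)$ under the hypothesis.  Substituting into the overlap identity yields $|\langle \hat{\boldsymbol{y}}, \tilde q_j\rangle|^2 = 1 - o(1)$.  Because $\{\tilde q_i\}_{i=1}^n$ is an orthonormal basis and $\sum_i |\langle \hat{\boldsymbol{y}}, \tilde q_i\rangle|^2 = 1$, every other overlap must be $o(1)$, so $j$ is the unique $\arg\max$, as claimed.

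\textbf{Main obstacle.} The delicate step is carefully matching the two independent sources of error to the stated tolerance $O(\delta_\lambda + n^{-a/2})$: the $O(\delta_\lambda)$ contribution from the dispersion of the tail eigenvalues about $\bar\lambda$, and the $O(n^{-a/2})$ contribution from the head coefficients $|a_i|$ (each of magnitude $n^{-a/2}$) shifting the root of the secular equation.  Both must fit inside the multiplicative slack of the hypothesis $\alpha\in((\lambda_j-\lambda_k)/(1-O(\cdot)),\,(\lambda_{j-1}-\lambda_k)/(1+O(\cdot)))$ to guarantee that the perturbed root stays strictly inside the interlacing interval, which is what anchors the whole argument.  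Everything else follows from textbook rank-one perturbation identities.
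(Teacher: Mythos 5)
Your plan follows essentially the same route as the paper's proof: both reduce to the rank-one perturbation secular equation (the paper cites the Bunch--Nielsen--Sorensen formula; you re-derive the same two identities from $\tilde q \propto (\tilde\mu I - X^TX)^{-1}\hat{\boldsymbol{y}}$), both use the data assumption to get $|a_i|^2 = \Theta(\lambda_i/\sum_l \lambda_l)$ and hence that the coefficient mass sits on the tail, both plug the ansatz $\tilde\mu_j \approx \lambda_k + \alpha$ into the secular equation with the tail eigenvalues replaced by (roughly) $\lambda_k$ to obtain the relative-error bound $O(\delta_\lambda + n^{-a/2})$, and both finish by evaluating the eigenvector--target overlap via the normalisation sum $S(\tilde\mu_j)$.

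One slip to correct: the hypothesis on $\alpha$ only guarantees $|\tilde\mu_j - \lambda_i| = \Omega(\alpha\, n^{-a/2})$ for the boundary indices $i \in \{j-1, j\}$ (the slack is a \emph{small} fraction of $\alpha$), not $\Omega(\alpha\, n^{a/2})$ as you wrote. With the corrected exponent the secular-equation estimate is unaffected --- the head contributes $\tilde{\mathcal{O}}(n^{-a/2}/\alpha)$, still negligible against $1/\alpha$ --- but your overlap claim does not then fall out for free: with $(\tilde\mu_j - \lambda_j)^2 = \Omega(\alpha^2 n^{-a})$ and $|a_j|^2 = \mathcal{O}(n^{-a})$, the boundary contribution to $S(\tilde\mu_j)$ is $\mathcal{O}(1/\alpha^2)$, the \emph{same} order as the tail term $\Theta(1/\alpha^2)$, rather than $o(1/\alpha^2)$. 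The paper's own overlap step makes the same tacit simplification (the ``$\approx$'' that keeps only the tail block in $\sum_i b[i]^2/(\lambda_i - \tilde\lambda_j)^2$), so this is a shared point rather than something special to your plan; to make it airtight one should choose the implied constants in the $O(\delta_\lambda + n^{-a/2})$ slack large enough that the head term is a fixed small fraction of the tail, which still gives a $j$-th overlap strictly exceeding $1/2$ and hence, by $\sum_i |\langle \tilde q_i, \hat{\boldsymbol{y}}\rangle|^2 = 1$, the unique argmax.
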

Theorem~\ref{thm:main} (B) and Lemma~\ref{lemma:eig_shift} suggest that $Y$ tends to align more with earlier NTK eigenvectors at $t_2$ than at $t_1$. This is the alignment shift trend after sharpness reduction periods, i.e. Phase III and IV.

The above discussion reveals the alignment shift trend in a single sharpness reduction period. We highlight that for the long-term behavior after many oscillation phases, this trend still holds. To see this, note that during EoS, $c_t^2$ oscillates around a fixed value $\frac{2n}{\eta\lambda_1}$. As for $\|\boldsymbol{v}_t\|^2$, the analysis in Appendix~\ref{section:proof_p1} proves its increasing behavior in Phase I. By proof details in Appendix~\ref{section:proof_p3}, we know that the behavior of $\|\boldsymbol{v}_t\|^2$ in Theorem~\ref{thm:main} (A) also applies to Phase II, which gives the increasing trend of $\|\boldsymbol{v}_t\|^2$ during Phase II and III. During Phase IV it may decrease for some steps but will have an overall increase after going through Phase III and IV, according to Theorem~\ref{thm:main} (B). Then we know that the long-term trend of $\|\boldsymbol{v}_t\|^2$ is to increase, which gives us a long-term increasing trend of $\alpha_t$ and thus a long-term tendency to increase the alignment between $Y$ and earlier NTK eigenvectors.

In Appendix~\ref{section:nonlinear} we generalize the above crucial quantity $\alpha_t$ in Definition~\ref{def:alpha} to nonlinear networks.

\section{Leveraging Central Flows}\label{section:central_flow}

In this section, we present further evidence that sharpness reduction at EoS improves the kernel target alignment.  We leverage the recent central flows technique \citep{cohen2024understanding}, which models the time-averaged (i.e. locally smoothed) GD trajectory using an ODE called a \emph{central flow} of the form:
\begin{align}
    \frac{dW}{dt} = -\eta \, \left[ \nabla L(W) + \underbrace{\nabla \langle \Sigma(t), \nabla^2 L(W) \rangle  }_{\text{sharpness penalty}} \right]. \label{eq:central-flow}
\end{align}
Here, $\Sigma(t)$ is a particular matrix defined as the solution to a convex optimization problem, and $\langle \cdot, \cdot \rangle$ denotes the Frobenius inner product between two matrices (i.e. the dot product of the two matrices when they are flattened into vectors), so the quantity $\langle \Sigma(t), \nabla^2 L(W) \rangle$ is a metric of sharpness where each entry of the Hessian $\nabla^2 L(W)$ is weighted by the corresponding entry of $\Sigma(t)$.  The central flow \cref{eq:central-flow} penalizes this quantity and is therefore a \emph{sharpness-penalized gradient flow}. \cite{cohen2024understanding} shows that central flow averages out the oscillations while retaining their effect on the time-averaged gradient descent trajectory, which takes the form of this sharpness penalty. They demonstrate that central flow is a good approximation to the time-averaged GD. See more evidence in Appendix~\ref{section:supp_central_flow}.

In \Cref{fig:kta_fc_central_flow}, we run both GD and the central flow at different learning rates in the fully-connected setting described in Section~\ref{section:experiment_details}, and show that higher learning rates lead to higher KTA even under the central flow. Since the only difference between the central flow trajectories at different learning rates is the differing sharpness penalty,\footnote{The $\eta$ prefactor in \cref{eq:central-flow} is just a time rescaling and does not affect the overall trajectory, only the speed at which the flow traverses that trajectory.  It is only via the sharpness penalty that $\eta$ influences the trajectory.} this experiment supports our contention that sharpness reduction leads to higher KTA.  In \Cref{fig:comp_gf_central_flow}, starting at various points along the central flow trajectory for $\eta = 0.1$, we branch off and run gradient flow $\frac{dW}{dt} = - \eta \nabla L(W)$ for 100 units of time (red).  We find that gradient flow takes a trajectory with lower KTA than the central flow.  Since gradient flow differs from the central flow only by omitting the sharpness penalty term, this experiment further supports our contention that sharpness reduction leads to higher KTA. Results under more settings can be found in Appendix~\ref{section:supp_central_flow}.

\begin{figure}[ht]
    \centering
    \begin{subfigure}{.45\textwidth}
		\centering
        \includegraphics[width=\linewidth]{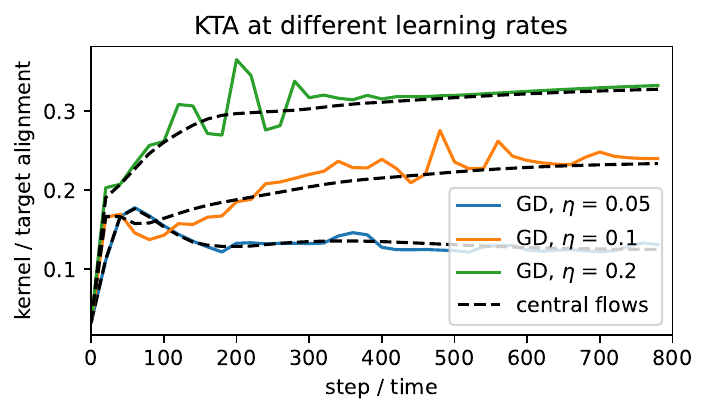}
		\caption{}
		\label{fig:kta_fc_central_flow}
\end{subfigure}
\begin{subfigure}{.45\textwidth}
		\centering
		\includegraphics[width=\linewidth]{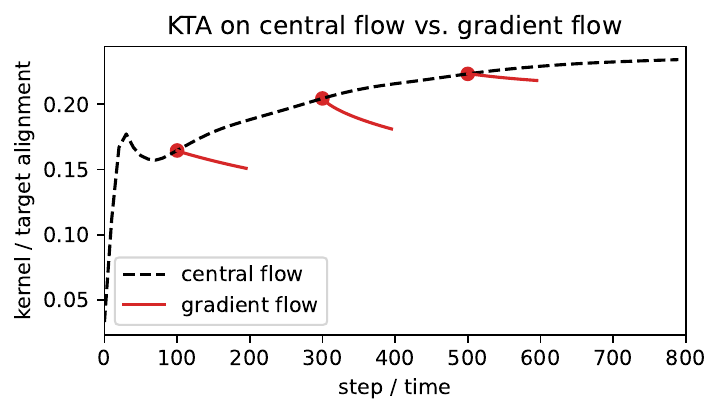}
		\caption{}
		\label{fig:comp_gf_central_flow}
\end{subfigure}

    \caption{(a) The KTA under the central flows (dashed black) match those of gradient descent (colors).  In particular, the KTA is larger when $\eta$ is larger. (b) Gradient flow (red), when branched off from the central flow at times $\{100, 300, 500\}$, takes a lower-KTA trajectory than the central flow (dashed black). The experiments are conducted in the fully-connected setting described in Section~\ref{section:experiment_details}. }
\end{figure}

\section{Conclusion and Future Work}\label{section:future_work}
This paper analyzes the evolution of NTK on EoS by examining its alignment effect with the training target. We show that the EoS dynamics enhance the alignment of NTK with the target and provide a theoretical analysis of the underlying mechanism of this phenomenon on a 2-layer linear network. There are several potential future directions: 1. It is known that the evolution of NTK is related to feature learning. Hence it would be interesting to connect our findings to more feature learning properties. 2. Although we provide a detailed theoretical analysis on 2-layer linear networks and partially generalize it to 2-layer ReLU networks in Appendix~\ref{section:nonlinear}, a more general theoretical understanding of multi-layer nonlinear networks is still needed in the future.

\bibliographystyle{alpha}
\bibliography{ref}
\clearpage
\appendix
\section{Supplementary Empirical Results}
\subsection{Experimental details}\label{section:experiment_more_details}
Here we would like to provide more details of our experimental setup.
\paragraph{Training resources} The experiment on the linear network (see Section~\ref{section:experiment_details} for details) was trained on a CPU since the model is small. All the other experiments, including those described in Section~\ref{section:experiment_details} and additional experiments presented later in Appendix~\ref{section:exp_more_settings}, were conducted on V100 GPUs with 40 GB of memory.

\paragraph{Dataset} The main dataset we use is CIFAR-10 \cite{krizhevsky2009learning}. As is described in Section~\ref{section:experiment_details}, due to GPU memory constraints, we randomly selected about 2000 examples in classes 0 and 1 (roughly 1000 for each class) and relabeled the target values as $\pm1$. In Appendix~\ref{section:experiments_bert} we will use a subset of the IMDB dataset \citep{maas-EtAl:2011:ACL-HLT2011} where we randomly selected 1024 examples.

\paragraph{Network architectures} Here we provide more details on the network architectures we use.

1. Linear network. As described in Section~\ref{section:experiment_details}, we use a 2-layer linear network with input dimension 400 on Gaussian data. The size of the hidden layer is also 400.

2. Fully connected network. We trained a 4-layer fully connected network with GeLU activation. The input dimension is $32*32*3$, which is the size after flattening a CIFAR-10 image. The sizes of the three hidden layers are 128,128,64, respectively.

3. Vision Transformer. We also trained a simple Vision Transformer (ViT) using an online implementation\footnote{\url{https://github.com/lucidrains/vit-pytorch}. MIT license.}. We use their ``SimpleViT'' model with detailed parameters SimpleViT(image\_size=32, patch\_size = 4, num\_classes = 1, dim = 64,  depth = 4, heads = 8, mlp\_dim = 256).

4. VGG-11. We also conducted experiments on VGG\cite{simonyan2014very}. We use an online implementation\footnote{\url{https://github.com/chengyangfu/pytorch-vgg-cifar10}. MIT license.} with the ``vgg11\_bn'' architecture in their implementation.

5. ResNet-20. In Appendix~\ref{section:experiments_resnet}, we conducted experiments on a ResNet\cite{he2016deep} with 20 layers and GeLU activation. We use Group Normalization instead of Batch Normalization in this ResNet.

6. BERT-small. In Appendix~\ref{section:experiments_bert} We conducted experiments on language tasks where we fine-tuned BERT-small \citep{turc2019,bhargava2021generalization} from an online implementation\footnote{\url{https://huggingface.co/docs/transformers/v4.16.2/en/training}. Apache-2.0 license.}.

\subsection{Experiments in more settings}\label{section:exp_more_settings}
\subsubsection{VGG}\label{section:experiments_vgg}
 On the same dataset as in Section~\ref{section:experiment_details}, we also trained a VGG-11 network using GD under MSE loss from the same initialization under different learning rates to roughly the same training loss. We again plotted the evolution of KTA (every 20 iterations) in Figure~\ref{fig:kta_vgg} and the individual eigenvector target alignment for the final models in \ref{fig:alignment_shift_vgg}, where we see the increase of KTA and the alignment shift phenomenon. Similar to Section~\ref{section:experiment_details}, we also observe a consistent correlation between a fast increase in KTA and the sharpness-decreasing period of EoS (see Figure~\ref{fig:connection_eos_vgg}) and provide a quantitative analysis in Table~\ref{tab:quant_analysis_vgg} to show this connection. Similar to Section~\ref{section:central_flow}, we also study the above connection in this VGG setting using the central flow\cite{cohen2024understanding} in Appendix~\ref{section:supp_central_flow} and demonstrate the benefit of the sharpness penalty to the increase of KTA.
\begin{figure}[ht]
\centering
\begin{subfigure}{.33\textwidth}
		\centering
		\includegraphics[width=\linewidth]{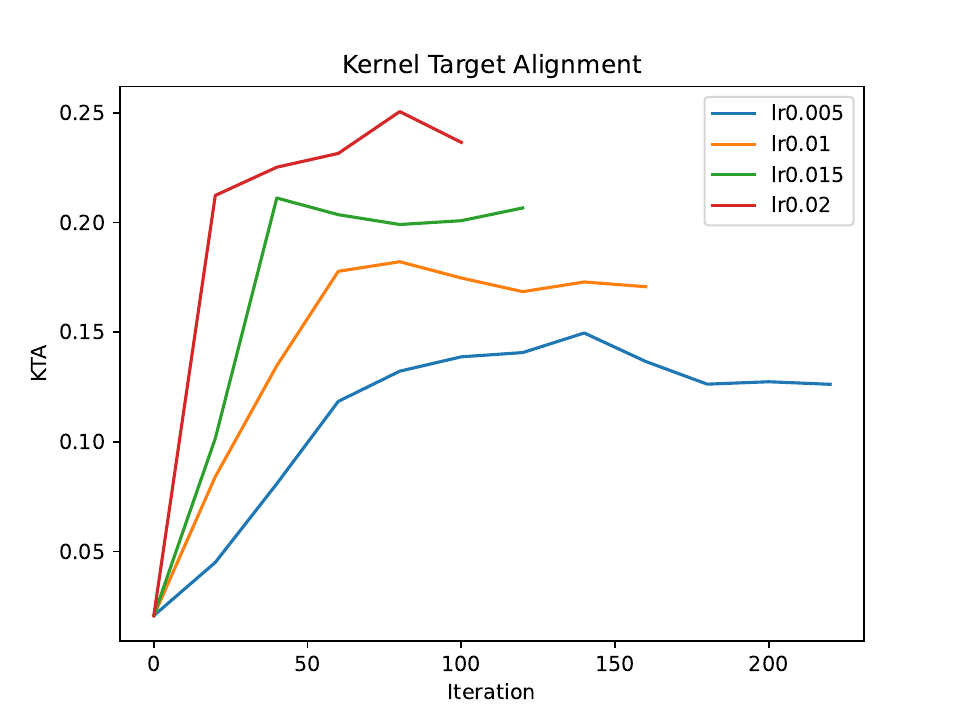}
		\caption{}
		\label{fig:kta_vgg}
\end{subfigure}
\begin{subfigure}{.32\textwidth}
		\centering
		\includegraphics[width=\linewidth]{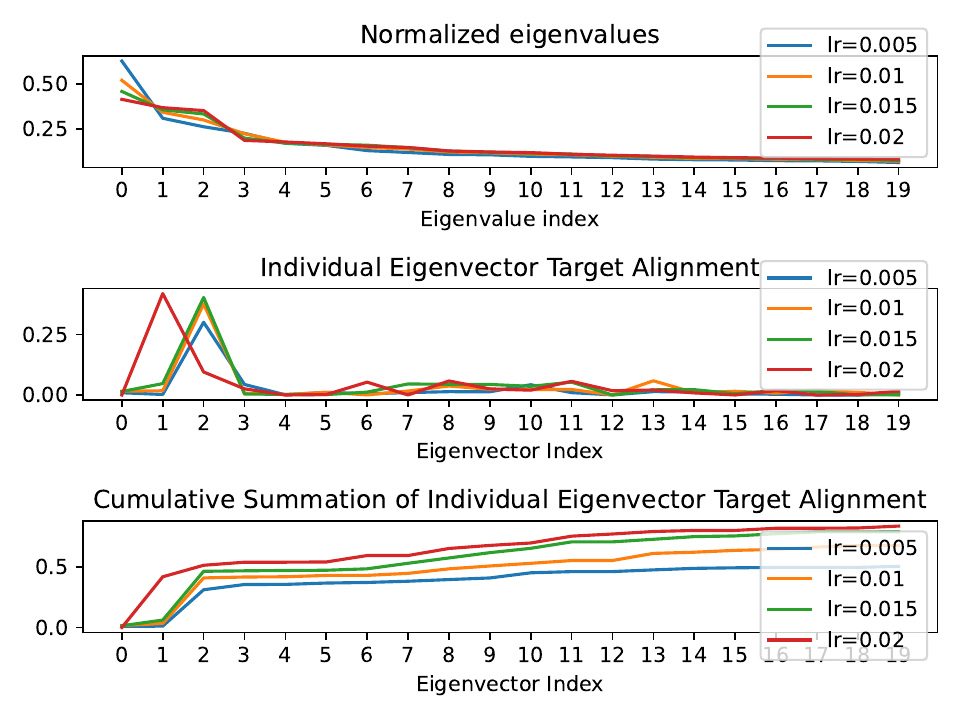}
		\caption{}
		\label{fig:alignment_shift_vgg}
\end{subfigure}
\begin{subfigure}{.31\textwidth}
		\centering
		\includegraphics[width=\linewidth]{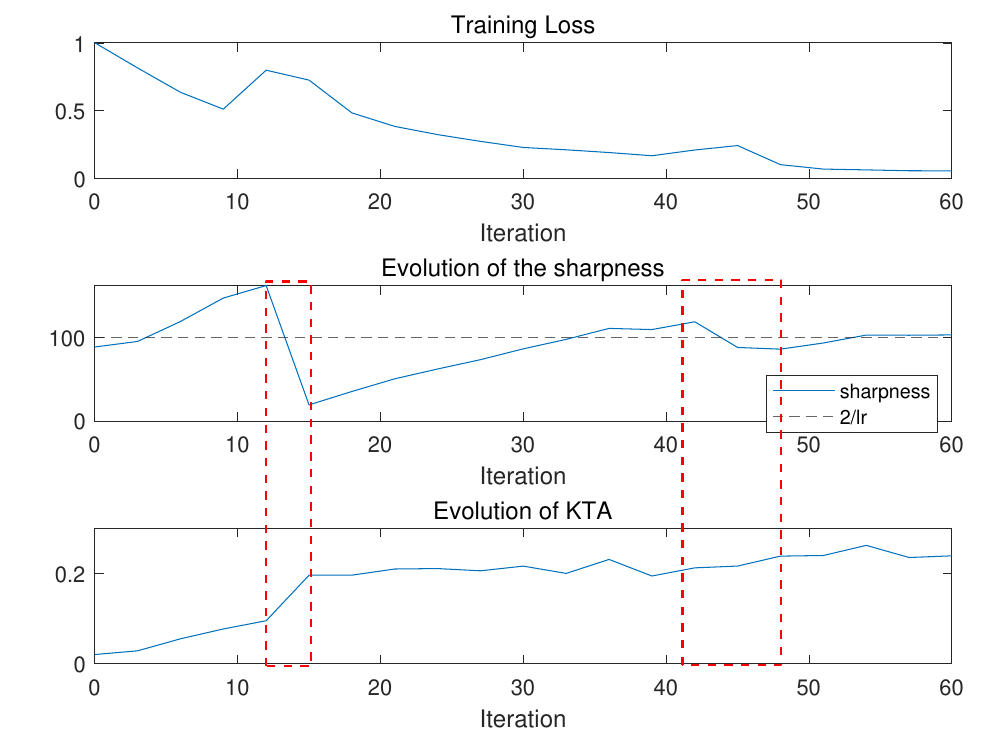}
		\caption{}
		\label{fig:connection_eos_vgg}
\end{subfigure}
\caption{(a) KTA across different learning rates, (b) the alignment shift, and (c) connection between the alignment dynamics and Edge of Stability in VGG-11 (see Section~\ref{section:experiments_vgg}). Similar to Figure~\ref{fig:alignment_shift_fc}, we plot the alignment of the first 20 eigenvectors in (b).}
\end{figure}
\begin{table}[ht]
\centering
\caption{Comparison of the changing speed of KTA between the sharpness-decreasing periods and sharpness-increasing periods on VGG-11. See Appendix~\ref{section:experiments_vgg} for the detailed experimental setup.}
\label{tab:quant_analysis_vgg}
\resizebox{0.7\columnwidth}{!}{
\begin{tabular}{ccccc}
    \toprule
	Iteration interval & 12-15 & 15-42    & 42-48 & 48-78\\
    \midrule
    sharpness behavior & decrease & increase & decrease & gradually stablized\\	
    \midrule
    total change of KTA & 0.1008   & 0.0163   & 0.0260  & 0.0083\\
    \midrule
    avg change of KTA per step & 0.0336  & 0.0006 & 0.0043 & 0.0003\\
    \bottomrule
\end{tabular}
}
\end{table}

\subsubsection{ResNet}\label{section:experiments_resnet}
On the same dataset as in Section~\ref{section:experiment_details}, we also trained a ResNet with 20 layers using GeLU activation under GD and MSE loss from the same initialization. Again, we trained the model under different learning rates to roughly the same training loss. We plotted the evolution of KTA (every 20 iterations) in Figure~\ref{fig:kta_resnet} and the individual eigenvector target alignment (along with its cumulative summation) for the final models in Figure~\ref{fig:alignment_shift_resnet}, where we see the increase of KTA and the alignment shift. In Section~\ref{section:supp_central_flow}, we study the connection between the increase of KTA and EoS using the central flow\cite{cohen2024understanding} and demonstrate the benefit of the sharpness penalty to the increase of KTA.
\begin{figure}[ht]
\centering
\begin{subfigure}{.4\textwidth}
		\centering
		\includegraphics[width=\linewidth]{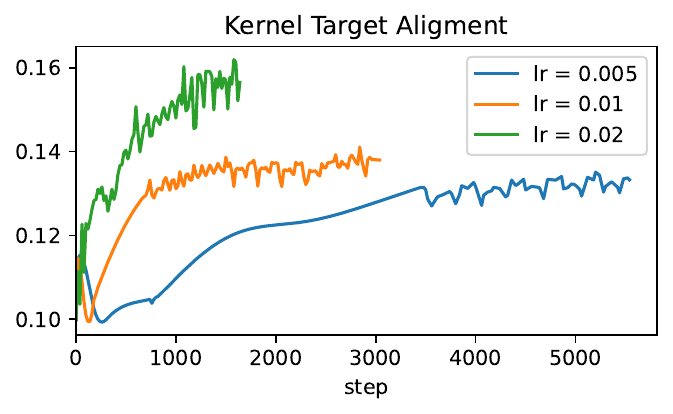}
		\caption{}
		\label{fig:kta_resnet}
\end{subfigure}
\begin{subfigure}{.33\textwidth}
		\centering
		\includegraphics[width=\linewidth]{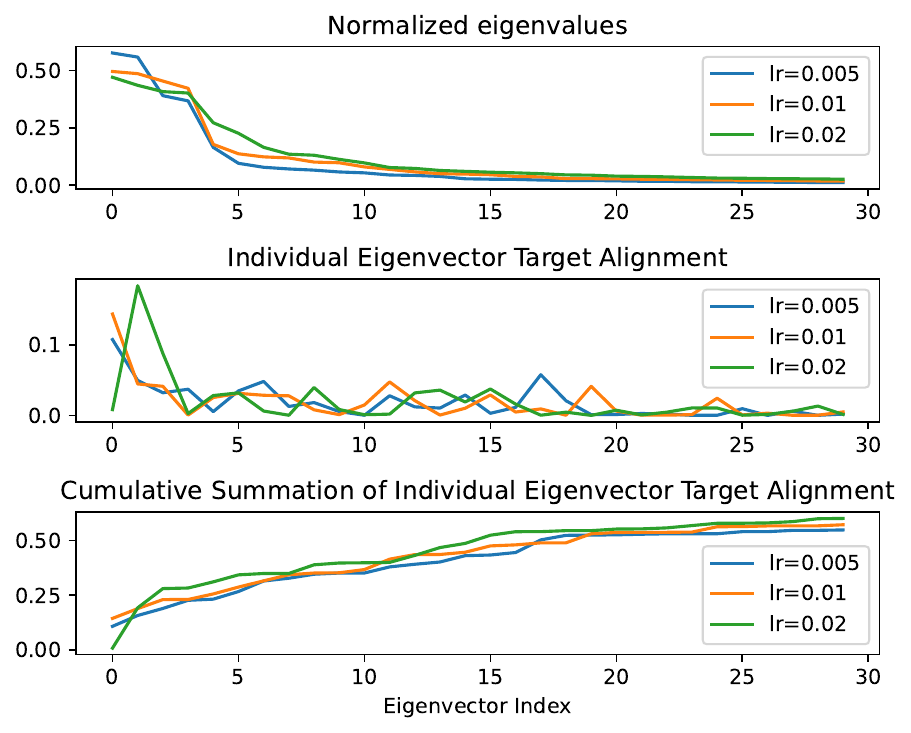}
		\caption{}
		\label{fig:alignment_shift_resnet}
\end{subfigure}
\caption{(a) KTA across different learning rates, (b) the alignment shift in ResNet-20. More details can be found in Appendix~\ref{section:experiments_resnet}.}
\end{figure}
\subsubsection{Language transformer}\label{section:experiments_bert}
In this section, we provide experiments to demonstrate that our results on the alignment behavior also generalize to language tasks. We fine-tuned BERT-small \citep{turc2019,bhargava2021generalization} on a subset (1024 randomly selected examples) of the IMDB dataset \citep{maas-EtAl:2011:ACL-HLT2011}. The task is to classify whether movie reviews are positive or negative\footnote{\url{https://huggingface.co/docs/transformers/v4.16.2/en/training}}, and the labels are $\pm1$. Again, we trained the model from the same initialization under different learning rates to roughly the same training loss. We plotted the evolution of KTA (every 20 steps) in Figure~\ref{fig:kta_bert} and the cumulative summation of the individual eigenvector target alignment for the final models in Figure~\ref{fig:alignment_shift_bert}. Here, we see the increase of KTA and the alignment shift phenomenon, similar to other tasks.
\begin{figure}[ht]
\centering
\begin{subfigure}{.4\textwidth}
		\centering
		\includegraphics[width=\linewidth]{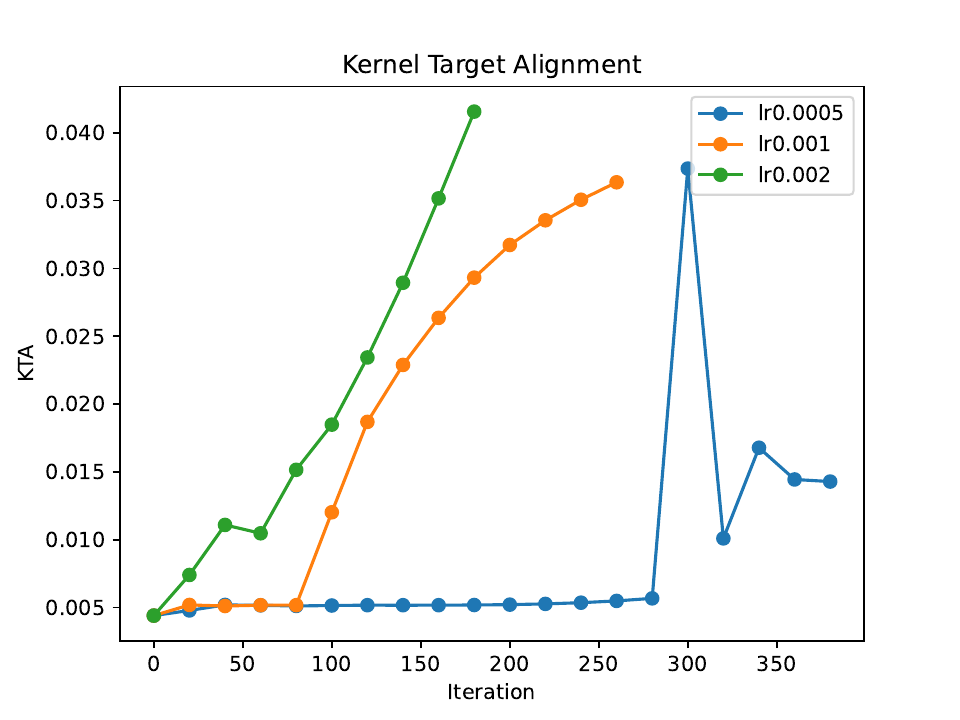}
		\caption{}
		\label{fig:kta_bert}
\end{subfigure}
\begin{subfigure}{.4\textwidth}
		\centering
		\includegraphics[width=\linewidth]{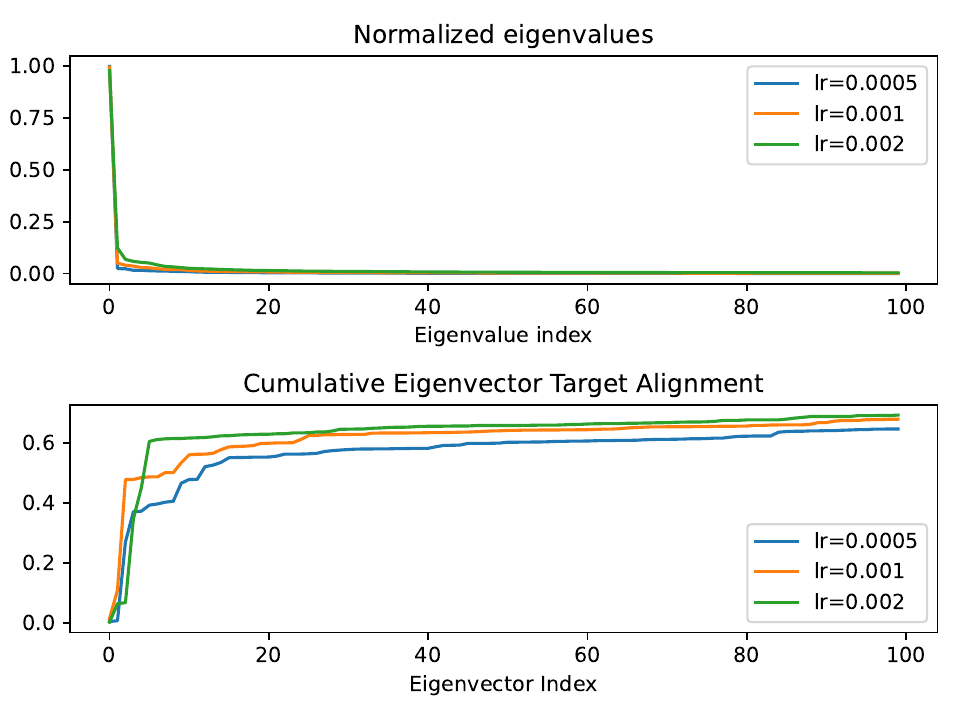}
		\caption{}
		\label{fig:alignment_shift_bert}
\end{subfigure}
\caption{(a) KTA across different learning rates, (b) the alignment shift in the sentence classification task using BERT-small. Detailed setup is described in Appendix~\ref{section:experiments_bert}.}
\end{figure}

\subsubsection{Multi-class tasks}\label{section:experiments_10_class}
Section~\ref{section:experiments} presents empirical results on binary classes. Here we provide more experiments on multi-class tasks. More precisely, we randomly selected 2000 examples in all ten classes in CIFAR-10 and trained a VGG-11 network using one-hot labels and MSE loss. We computed the cumulative summation of the individual eigenvector target alignment amd made a comparison after the same \emph{effective} steps: 400 (or 200) steps for a learning rate of 0.01 v.s. 200 (or 100) steps for a learning rate of 0.02, as is shown in Figure~\ref{fig:alignment_vgg_10_eff_steps}. Here we see that a larger learning rate has a larger cumulative alignment value, which indicates the alignment shift towards earlier eigenvectors and is consistent with the alignment shift phenomenon in the binary-class case.
\begin{figure}[ht]
    \centering
    \begin{subfigure}{.45\textwidth}
		\centering
        \includegraphics[width=\linewidth]{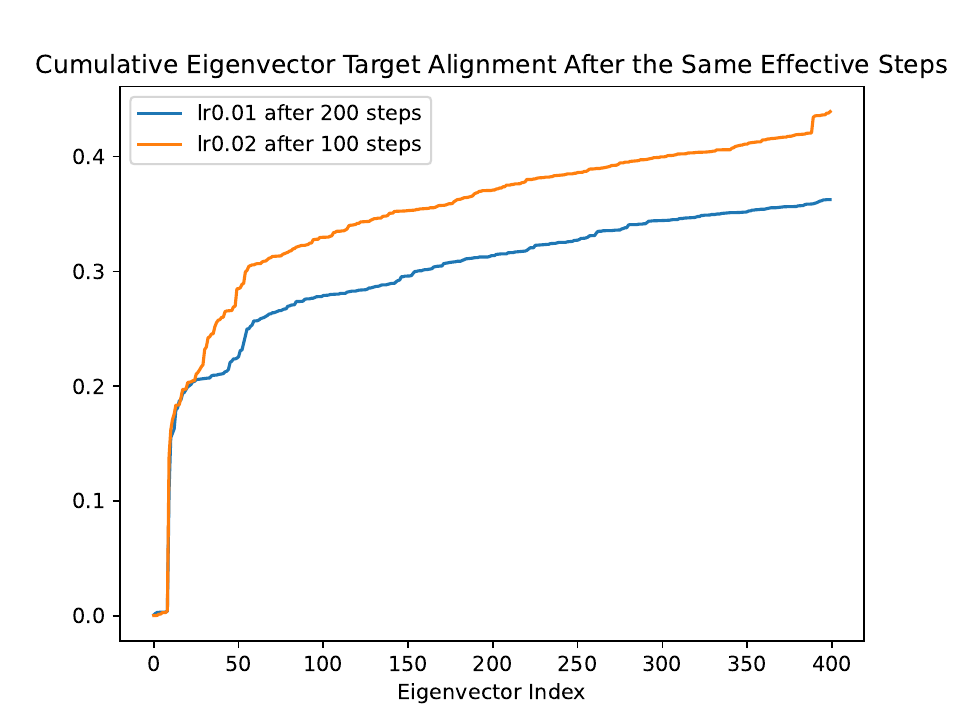}
\end{subfigure}
\begin{subfigure}{.45\textwidth}
		\centering
        \includegraphics[width=\linewidth]{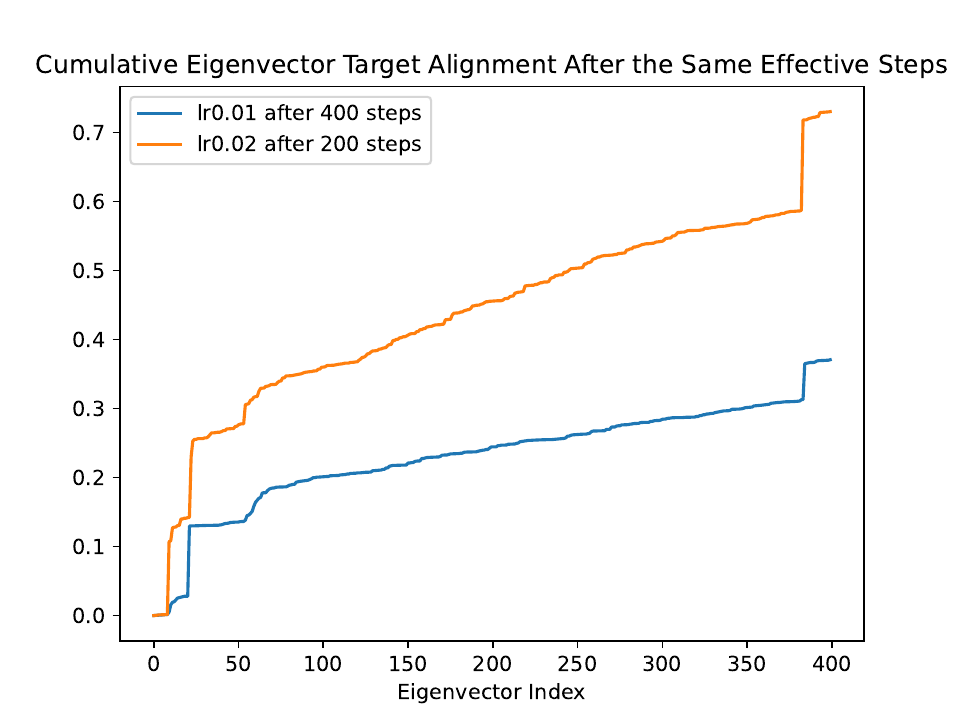}
\end{subfigure}
    \caption{Cumulative summation of the Individual Eigenvector Target Alignment values after the same effective steps on VGG-11 with CIFAR-10.}
    \label{fig:alignment_vgg_10_eff_steps}
\end{figure}

\subsection{Generalization to the SGD setting}\label{section:experiments_SGD}
This paper mainly focuses on full-batch GD, since the dynamics of GD are simpler than SGD, and the EoS of GD is well defined. In particular, for GD, the EoS period often consists of distinct phases (e.g., the four phases discussed in our theoretical analysis in Section~\ref{section:theory}) where the sharpness rises above and then falls below $2/$(step size). However, for SGD, \cite{DBLP:conf/iclr/CohenKLKT21} pointed out that ``the sharpness does not always settle at any fixed value, let alone one that can be numerically predicted from the hyperparameters''. That means for SGD, it's hard to link the increase in KTA to the period where the sharpness falls, as there is no clear division of the cycles and the so-called sharpness-decreasing phase. Nevertheless, we find that our observation that larger learning rates result in stronger KTA alignment still holds for SGD. We added the following experiments on the fully connected setting described in Section~\ref{section:experiment_details} with batch sizes 1024 and 256. Again, we trained the models under different learning rates to roughly the same training loss. Figure~\ref{fig:kta_bs1024} and Figure~\ref{fig:kta_bs256} show the increasing trend of KTA where larger learning rates lead to larger KTA values. Figure~\ref{fig:alignment_shift_bs1024} and Figure~\ref{fig:alignment_shift_bs256} plot the individual eigenvector target alignment for the final models as well as their cumulative summations, where we see the alignment shift phenomenon.

\begin{figure}[ht]
\centering
\begin{subfigure}{.4\textwidth}
		\centering
		\includegraphics[width=\linewidth]{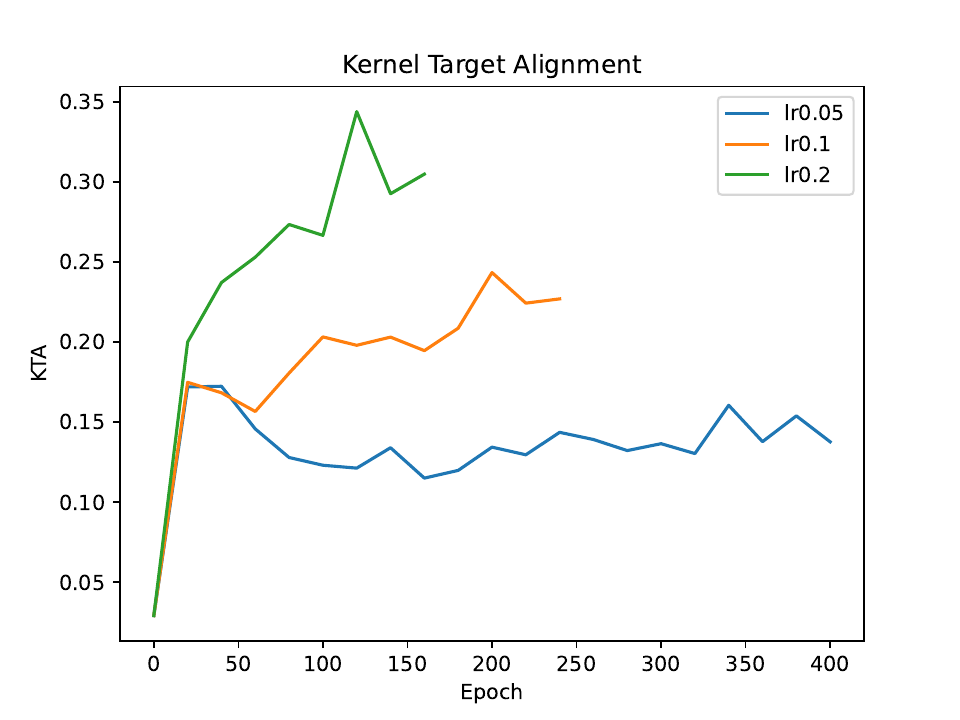}
		\caption{}
		\label{fig:kta_bs1024}
\end{subfigure}
\begin{subfigure}{.4\textwidth}
		\centering
		\includegraphics[width=\linewidth]{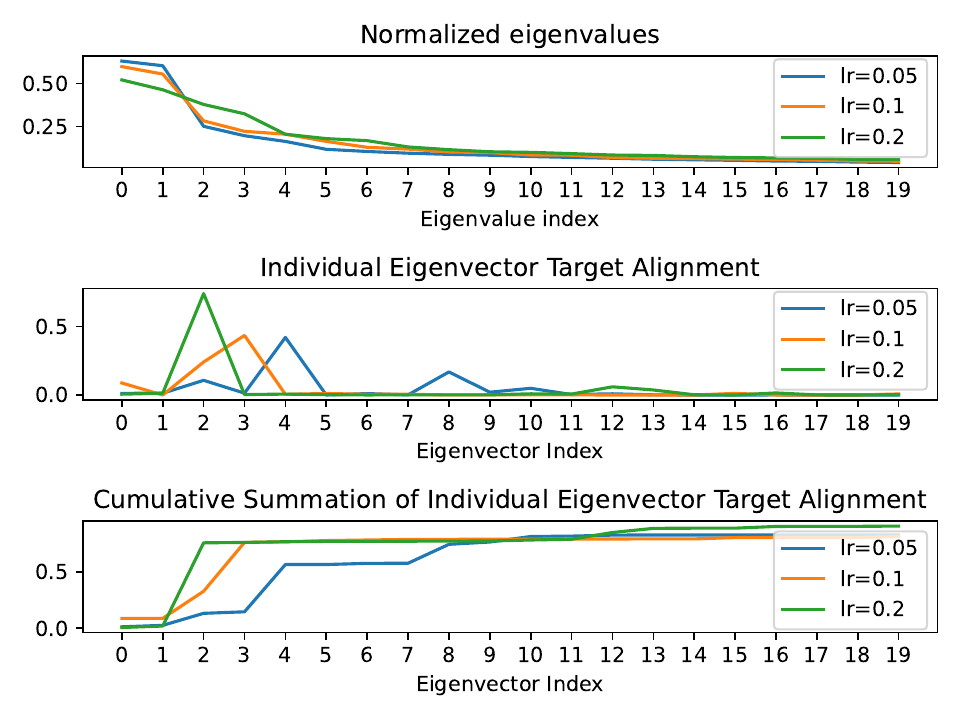}
		\caption{}
		\label{fig:alignment_shift_bs1024}
\end{subfigure}
\caption{SGD experiments in the fully connected network described in Section~\ref{section:experiment_details} with batch size 1024: (a) KTA across different learning rates, (b) the alignment shift. Similar to Figure~\ref{fig:alignment_shift_fc}, we plot the alignment of the first 20 eigenvectors in (b).}
\end{figure}

\begin{figure}[ht]
\centering
\begin{subfigure}{.4\textwidth}
		\centering
		\includegraphics[width=\linewidth]{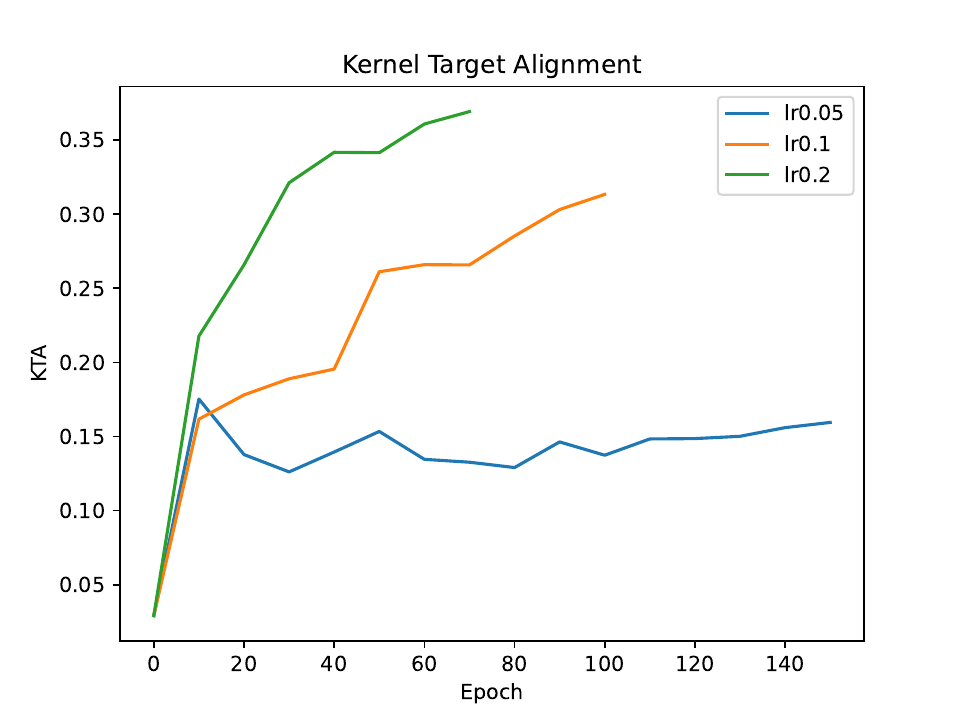}
		\caption{}
		\label{fig:kta_bs256}
\end{subfigure}
\begin{subfigure}{.4\textwidth}
		\centering
		\includegraphics[width=\linewidth]{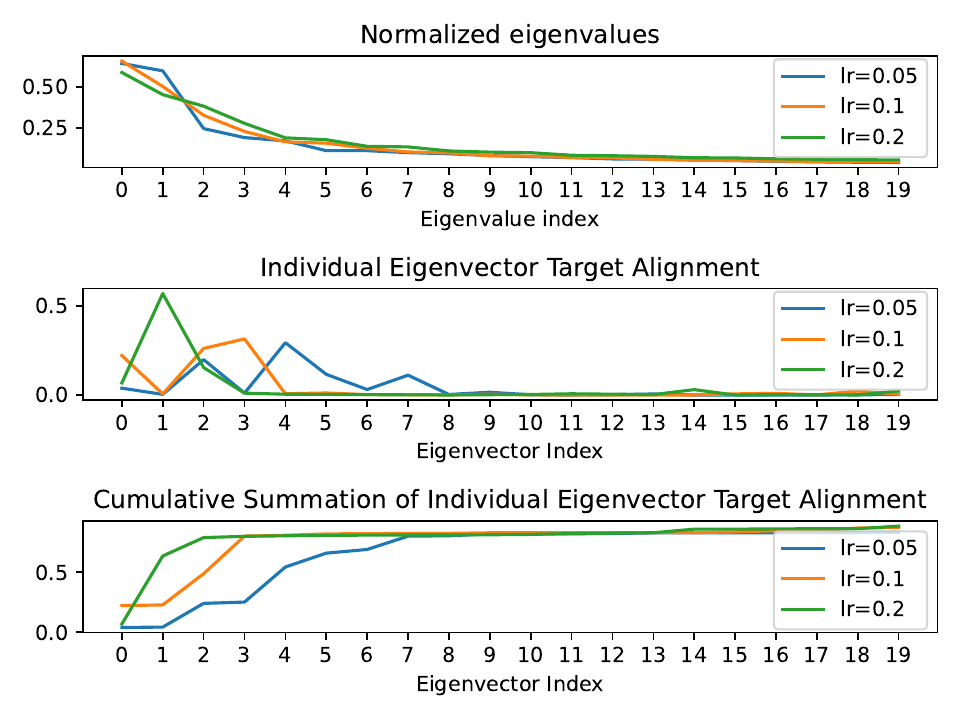}
		\caption{}
		\label{fig:alignment_shift_bs256}
\end{subfigure}
\caption{SGD experiments in the fully connected network described in Section~\ref{section:experiment_details} with batch size 256: (a) KTA across different learning rates, (b) the alignment shift. Similar to Figure~\ref{fig:alignment_shift_fc}, we plot the alignment of the first 20 eigenvectors in (b).}
\end{figure}

\subsection{Comparison with the AGOP alignment in \cite{zhu2023catapults}}\label{section:agop}
Here we provide evidence on the claim in Section~\ref{section:related_work} that on the two-layer linear network setting, the enhanced AGOP alignment is less correlated with large learning rates and sharpness reduction in EoS than the enhanced NTK alignment. We trained the 2-layer linear network described in Section~\ref{section:experiment_details} using GD with learning rates 0.005 and 0.01 and plotted the evolution of the AGOP alignment and KTA conditioned on the same effective iterations (defined as the learning rate times the number of iterations). Figure~\ref{fig:comp_agop} shows the evolution of these two alignment values during the whole training procedure and Figure~\ref{fig:comp_agop_eos} zooms in the EoS period where we see the sudden increase of KTA but no significant increase of AGOP.
\begin{figure}[ht]
\centering
\begin{subfigure}{.32\textwidth}
		\centering
        \includegraphics[width=\linewidth]{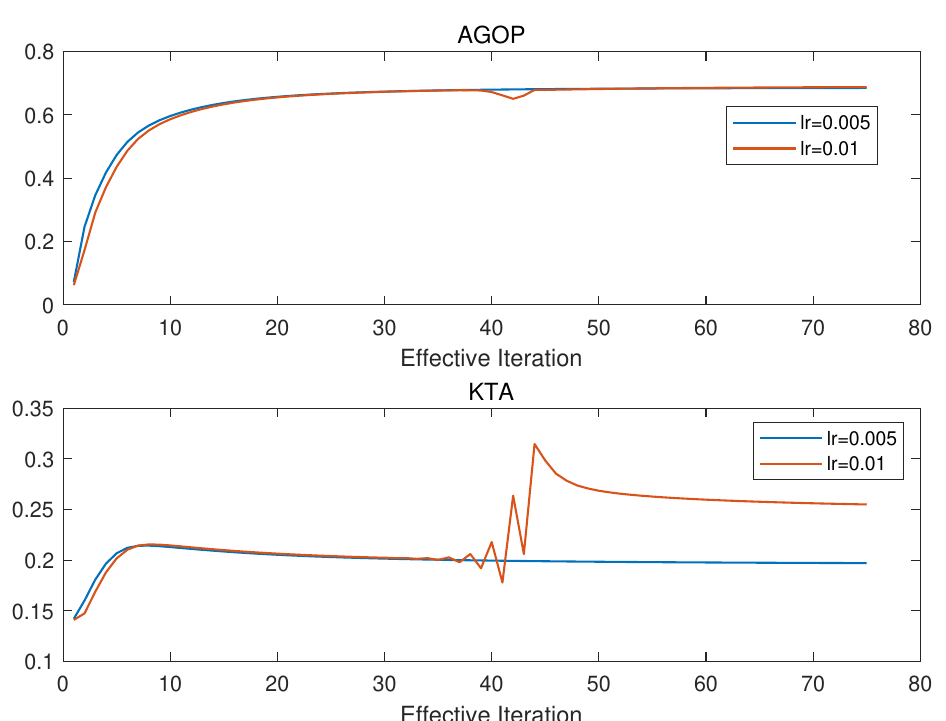}
        \caption{}
        \label{fig:comp_agop}
\end{subfigure}
\begin{subfigure}{.32\textwidth}
		\centering
        \includegraphics[width=\linewidth]{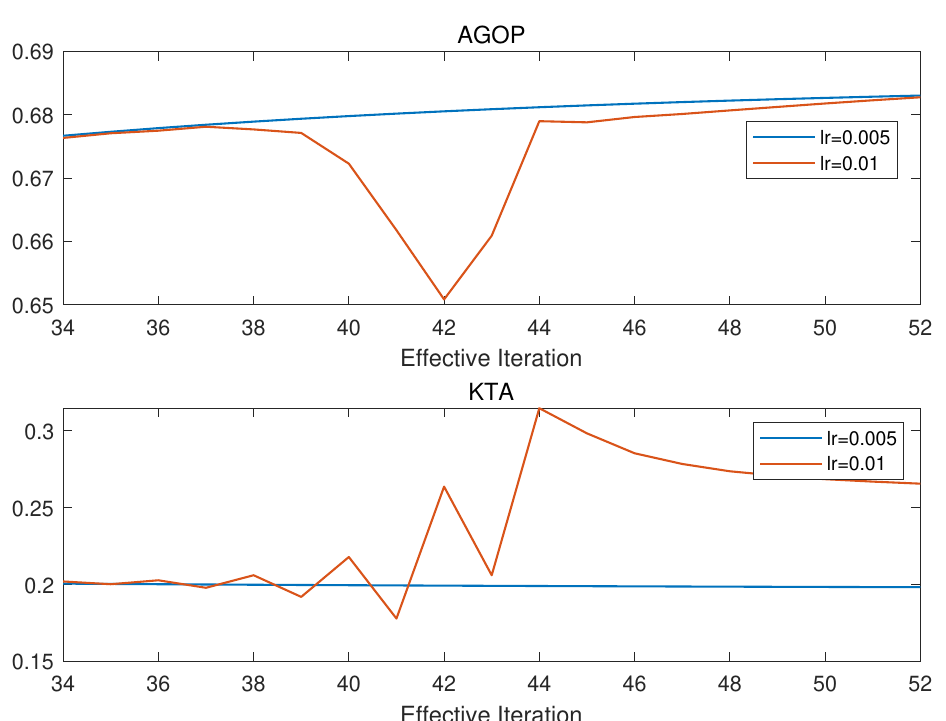}
        \caption{}
        \label{fig:comp_agop_eos}
\end{subfigure}
\caption{Comparison between the AGOP alignment and KTA in a 2-layer linear network described in Section~\ref{section:experiment_details}}
\end{figure}

\subsection{Connection to the generalization ability}\label{section:generalization}
As discussed in Section~\ref{section:experiments}, \cite{arora2019fine} presented a generalization bound proportional to $\sqrt{\frac{y^TK^{-1}y}{n}}$ where $K$ is the NTK and $n$ is the number of training examples. Here we study the evolution of the normalized $y^TK^{-1}y$, i.e. $\frac{y^TK^{-1}y}{\|y\|_2^2\|K^{-1}\|_F}$ (can be viewed as the ``inverse KTA'') to measure the alignment between the inverse NTK and the target $y$. On the 2-layer linear network described in Section~\ref{section:experiment_details}, we demonstrate that the sharpness reduction period of EoS and the alignment shift towards earlier eigenvectors are correlated with a sudden decrease of the above inverse KTA, see Figure~\ref{fig:inv_kta_details}. Figure~\ref{fig:inv_kta} demonstrates that larger learning rates lead to a more significant decrease of the inverse KTA than smaller learning rates.
\begin{figure}[ht]
    \centering
\begin{subfigure}{.35\textwidth}
		\centering
        \includegraphics[width=\linewidth]{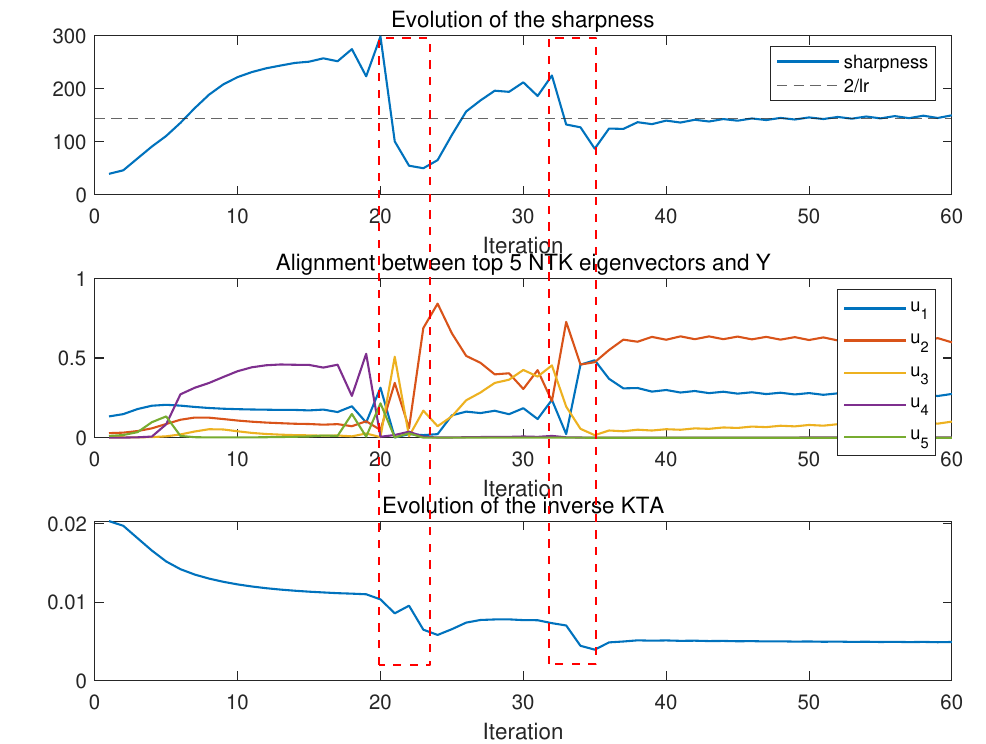}
        \caption{}
        \label{fig:inv_kta_details}
\end{subfigure}
\begin{subfigure}{.35\textwidth}
		\centering
        \includegraphics[width=\linewidth]{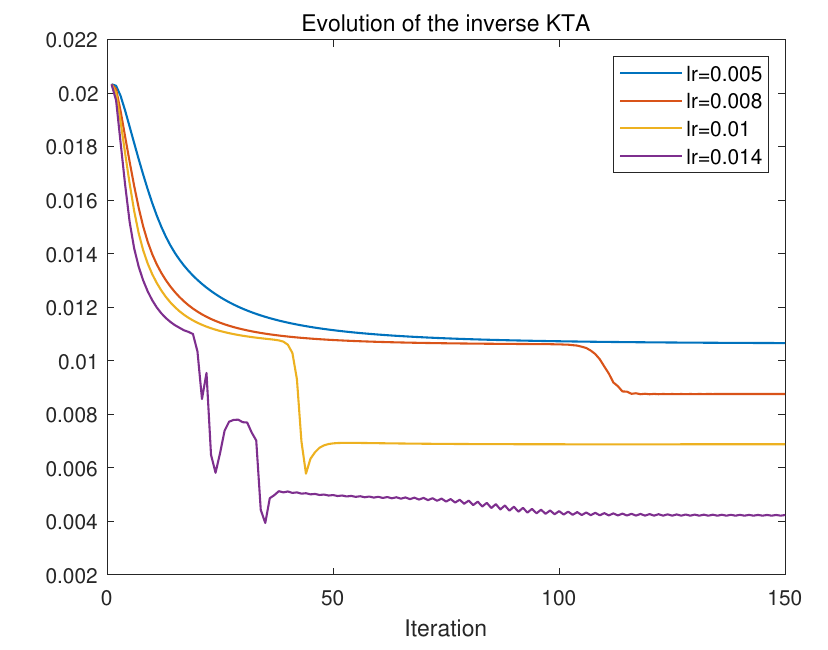}
        \caption{}
        \label{fig:inv_kta}
\end{subfigure}
    \caption{(a) Evolution of the sharpness, alignment between the target and the top 5 NTK eigenvectors, and the inverse KTA. (b) Evolution of the inverse KTA under different learning rates.}
\end{figure}

We also conducted experiments on the fully-connected network described in Section~\ref{section:experiment_details}. Here, we notice that the tail NTK eigenvalues are much smaller than earlier ones, as is shown in Figure~\ref{fig:ntk_spectrum}, which can make $K^{-1}$ ``noisy'' and cause numerical issues. Hence we add $\epsilon I$ to the original $K$ and then calculate the inverse. More formally, we compute the following stable version of the inverse KTA: $\frac{y^T(K+\epsilon I)^{-1}y}{\|y\|_2^2\|(K+\epsilon I)^{-1}\|_F}$. We set $\epsilon=0.0005\lambda_{\max}(K)$. The top and middle subfigures of Figure~\ref{fig:inv_kta_fc} plot the training loss and the evolution of the sharpness. The bottom subfigure of Figure~\ref{fig:inv_kta_fc} shows the change of the inverse KTA every 3 steps, where we see negative values, and that means the inverse KTA is decreasing over time. By comparing the middle and bottom subfigures, we see that the sharpness-decreasing periods correspond to larger decreasing values of the inverse KTA than the sharpness-increasing periods.
\begin{figure}[ht]
    \centering
\begin{subfigure}{.38\textwidth}
		\centering
        \includegraphics[width=\linewidth]{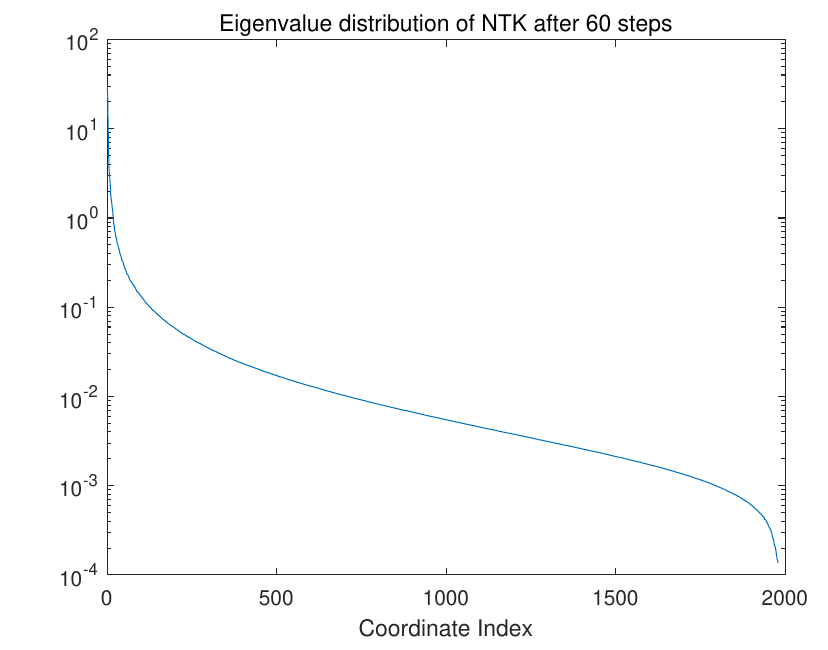}
        \caption{}
        \label{fig:ntk_spectrum}
\end{subfigure}
\begin{subfigure}{.4\textwidth}
		\centering
        \includegraphics[width=\linewidth]{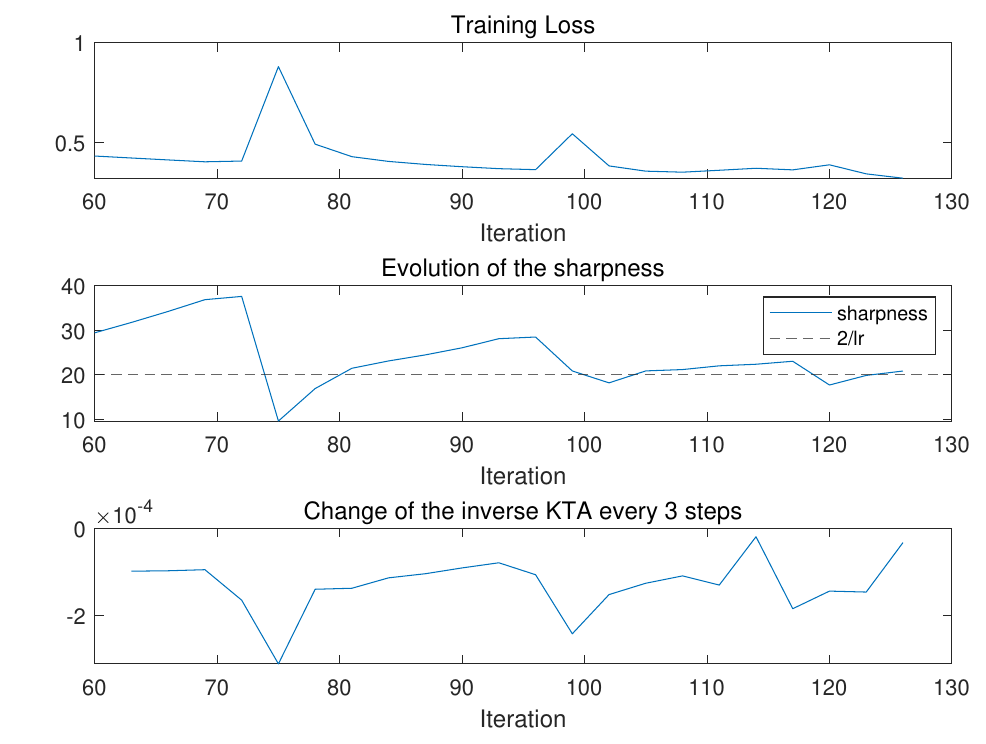}
        \caption{}
        \label{fig:inv_kta_fc}
\end{subfigure}
\caption{(a) NTK spectrum in the fully-connected network described in Section~\ref{section:experiment_details} after 60 steps. The spectra for other steps have roughly the same shape. (b) Evolution of the training loss, the sharpness, and change of the inverse KTA every 3 steps.}
\end{figure}

Now we provide empirical results on a direct connection between the alignment shift phenomenon and the generalization ability.
\begin{figure}[ht]
    \centering
\begin{subfigure}{.4\textwidth}
		\centering
        \includegraphics[width=\linewidth]{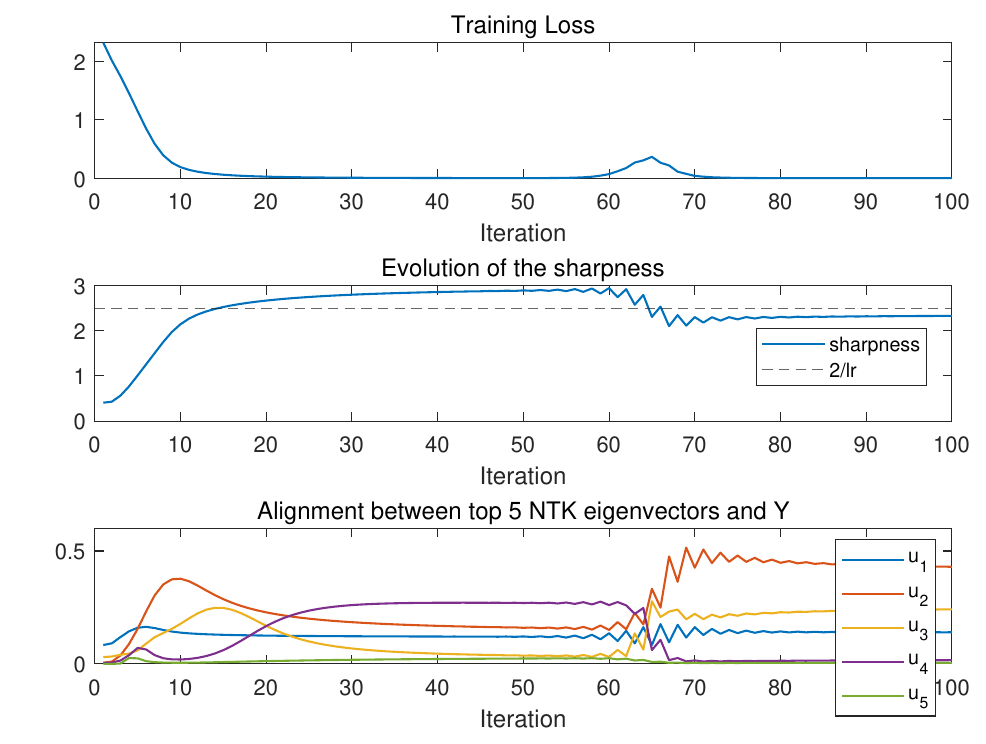}
        \caption{Small learning rate}
        \label{fig:small_lr_generalization}
\end{subfigure}
\begin{subfigure}{.4\textwidth}
		\centering
        \includegraphics[width=\linewidth]{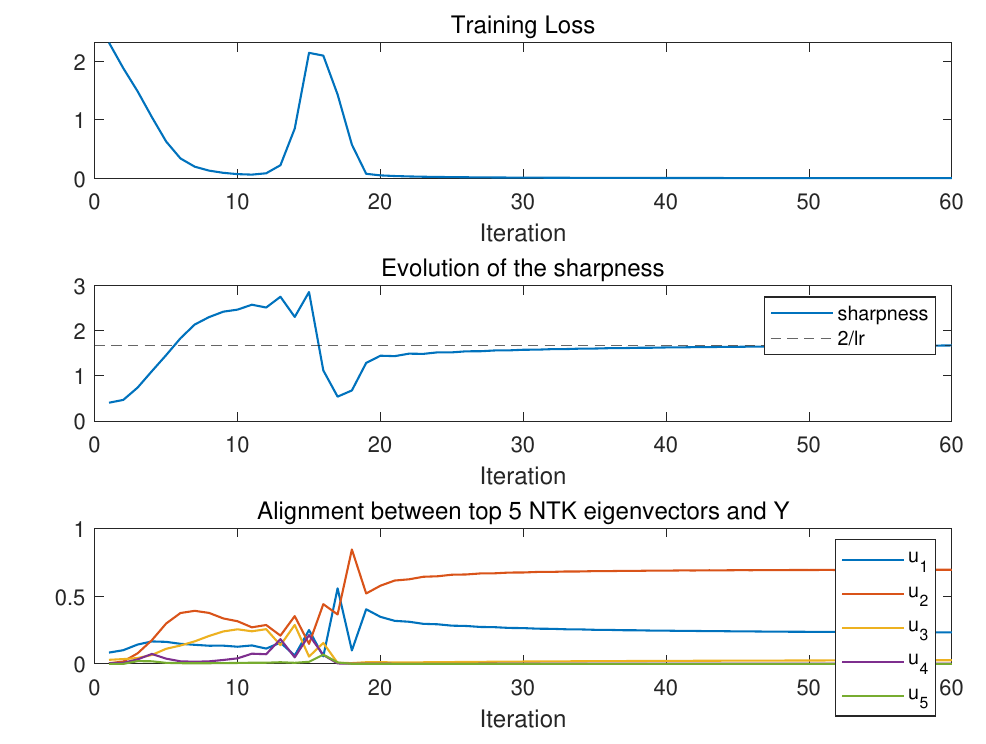}
        \caption{Large learning rate}
        \label{fig:large_lr_generalization}
\end{subfigure}
\caption{Alignment shift under different learning rates}
\end{figure}
Figure~\ref{fig:small_lr_generalization} and \ref{fig:large_lr_generalization} show the alignment shift phenomenon under different learning rates where we see that a larger learning rate leads to a more significant sharpness oscillation and a larger alignment shift towards earlier eigenvectors. To compare the generalization ability, we measure the decrease of the test loss in two ways.
\begin{enumerate}
    \item Measures the test loss $L_{\text{test}}$ after the same effective iterations, i.e. for different learning rates $\eta_1,...,\eta_n$, train the network for $t_1,...,t_n$ iterations such that $\eta_1t_1=\eta_2t_2=...=\eta_nt_n$.
    \item Measures the change of test loss $\Delta L_{\text{test}}$ during the oscillation period when sharpness decreases and the alignment shift phenomenon occurs. For example in Figure~\ref{fig:small_lr_generalization}, we compute the change between the 55th and the 75th step, and in Figure~\ref{fig:large_lr_generalization} compute the change between the 10th and the 20th step.
\end{enumerate}
As is shown in Table~\ref{table:test_loss}, a larger learning rate leads to a smaller test loss after the same effective steps, as well as a larger decrease in the test loss during the period when sharpness reduces and the alignment shift phenomenon occurs.
\begin{table}[ht]
\caption{Generalization ability under different learning rates}
\label{table:test_loss}
\begin{center}
\begin{small}
\begin{tabular}{ cccc } 
 \toprule
 Learning rate & 0.8 & 0.9 & 1.2 \\ 
 \midrule
 $L_{\text{test}}$ after the same effective steps & 1.211 & 1.206 & 1.199 \\
 $\Delta L_{\text{test}}$ during the oscillation period when alignment shift occurs & -0.008 & -0.020 & -0.062 \\ 
 \bottomrule
\end{tabular}
\end{small}
\end{center}
\end{table}

\subsection{Potential Generalizations to Two-layer Nonlinear Networks}\label{section:nonlinear}
In this section, we try to generalize our crucial quantity in Definition~\ref{def:alpha} to the two-layer network with ReLU activation. We again use the MSE loss $L(W) = \frac{1}{2n}\left\|W^{(2)}\sigma(W^{(1)}X)-Y\right\|_F^2$ where $\sigma(\cdot)$ is the elementwise ReLU operator. By simple calculation, the NTK at iteration $t$ in this case is given by
\begin{align*}
K_t &= K_t^{(1)}+K_t^{(2)}\\
\text{where}\quad K_t^{(1)} &= \sigma(W_t^{(1)}X)^T\sigma(W_t^{(1)}X), \quad K_t^{(2)}=D_t^TD_t,\\
\text{with}\quad D_t &= \left[
\begin{array}{c}
     W_t^{(2)}[1]X\text{diag}(1_{\{W_t^{(1)}[1,:]X>0\}}),\\
     \cdots,\\
     W_t^{(2)}[d_h]X\text{diag}(1_{\{W_t^{(1)}[d_h,:]X>0\}})
\end{array}\right].
\end{align*}

We first demonstrate that $K_t^{(1)}=\sigma(W_t^{(1)}X)^T\sigma(W_t^{(1)}X)$ still exhibits a low-rank structure where it is close to a rank-2 matrix. Here we train a 2-layer ReLU network and plot the eigenvalue distribution of $K_t^{(1)}$ after 5 and 40 steps in Figure~\ref{fig:rank2}. We can see that it quickly becomes an approximate rank-2 matrix after only 5 steps. 
\begin{figure}[ht]
    \centering
\begin{subfigure}{.35\textwidth}
		\centering
        \includegraphics[width=\linewidth]{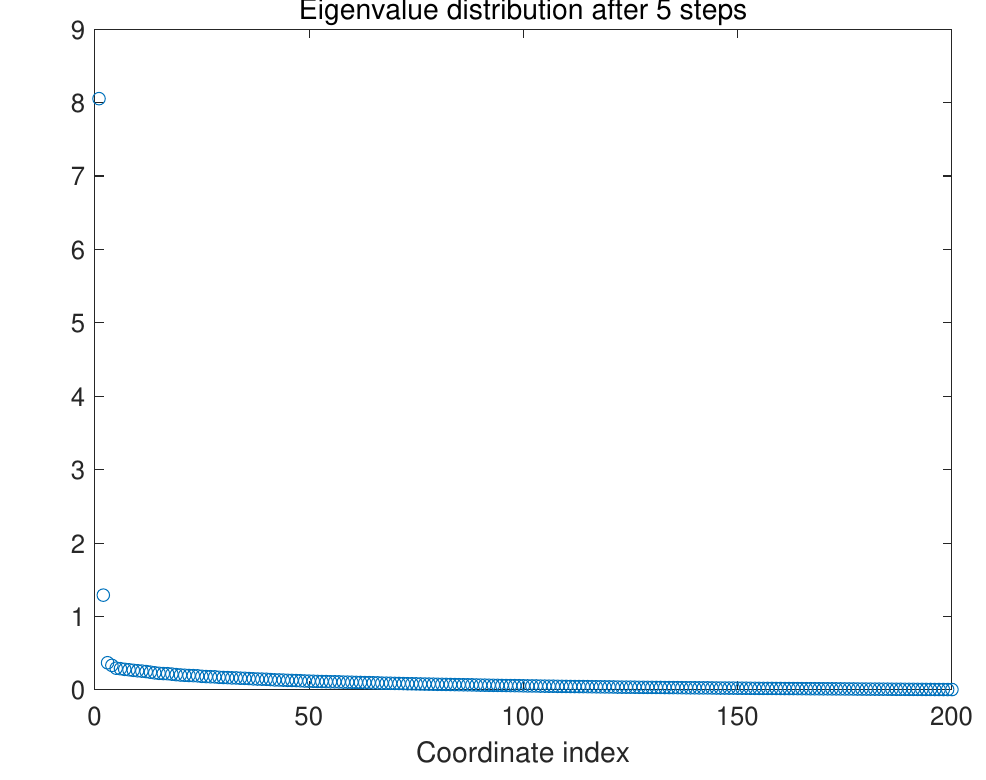}
\end{subfigure}
\begin{subfigure}{.35\textwidth}
		\centering
        \includegraphics[width=\linewidth]{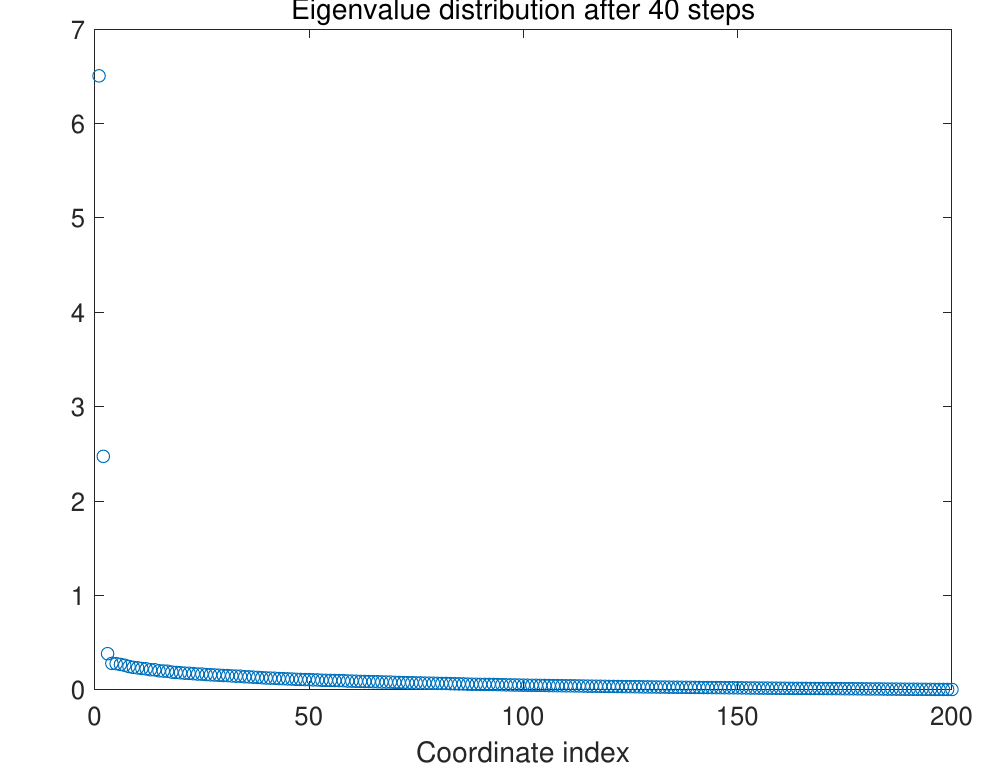}
\end{subfigure}
\caption{The approximate rank-2 structure of $\sigma(W_t^{(1)}X)^T\sigma(W_t^{(1)}X)$ in a two-layer ReLU network after (a) 5 steps and (b) 40 steps}
    \label{fig:rank2}
\end{figure}

This approximately rank-2 structure allows us to define a new version of $c_t^2$ as the leading eigenvalue of $K_t^{(2)}$ and $\|\boldsymbol{v}_t\|^2$ as the summation of the top 2 leading eigenvalues of $K_t^{(1)}$, and calculate the corresponding $\alpha_t$ as
\[
\alpha_t:=\frac{\|\boldsymbol{v}_t\|^2}{c_t^2}=\frac{\lambda_1(K_t^{(1)})+\lambda_2(K_t^{(1)})}{\lambda_1(K_t^{(2)})}
\]
Figure~\ref{fig:alignment_shift_relu} compares the evolution of the sharpness, our new version of $\alpha_t$, and the alignment of $Y$ with top 5 NTK eigenvectors (denoted as $u_1,...,u_5$) between small and large learning rate settings. The setup is the same as in the linear network setting described in Section~\ref{section:experiment_details} except that we use ReLU activation now. Here we see that in the large learning rate case, the alignment shift phenomenon happens towards earlier eigenvectors during the sharpness reduction period, and in the meantime, there is a sudden increase of $\alpha_t$. This is consistent with the behavior of $\alpha_t$ and the alignment trend discussed in the theoretical analysis in Section~\ref{section:theory}.
\begin{figure}[ht]
    \centering
\begin{subfigure}{.45\textwidth}
		\centering
		\includegraphics[width=\linewidth]{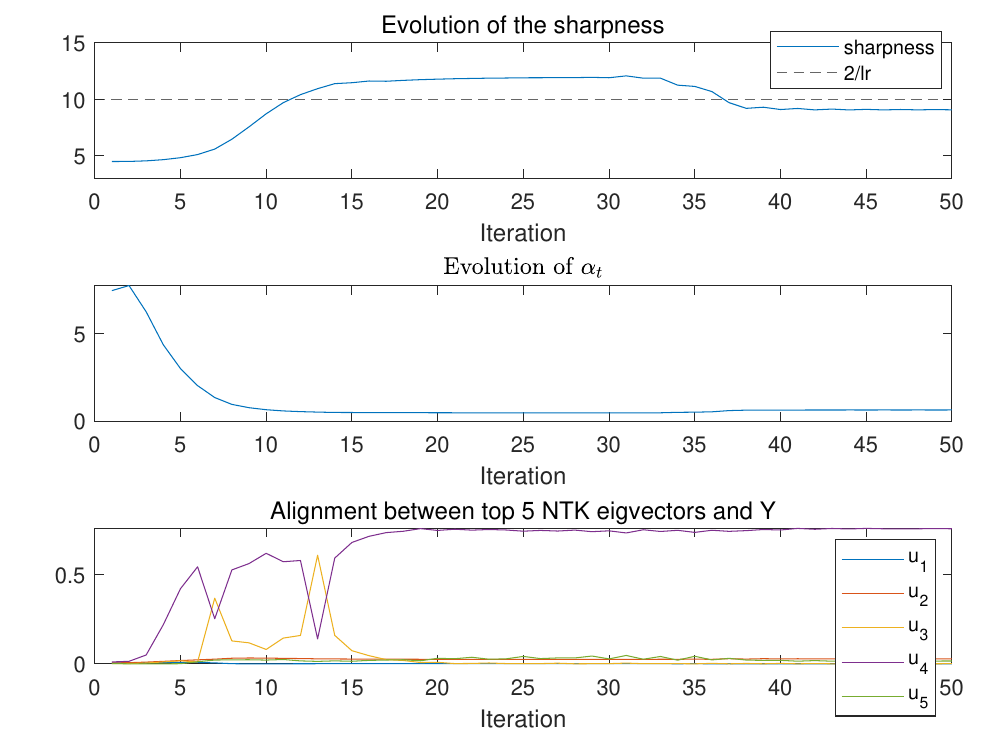}
		\caption{Small learning rate case}
\end{subfigure}
\begin{subfigure}{.45\textwidth}
		\centering
		\includegraphics[width=\linewidth]{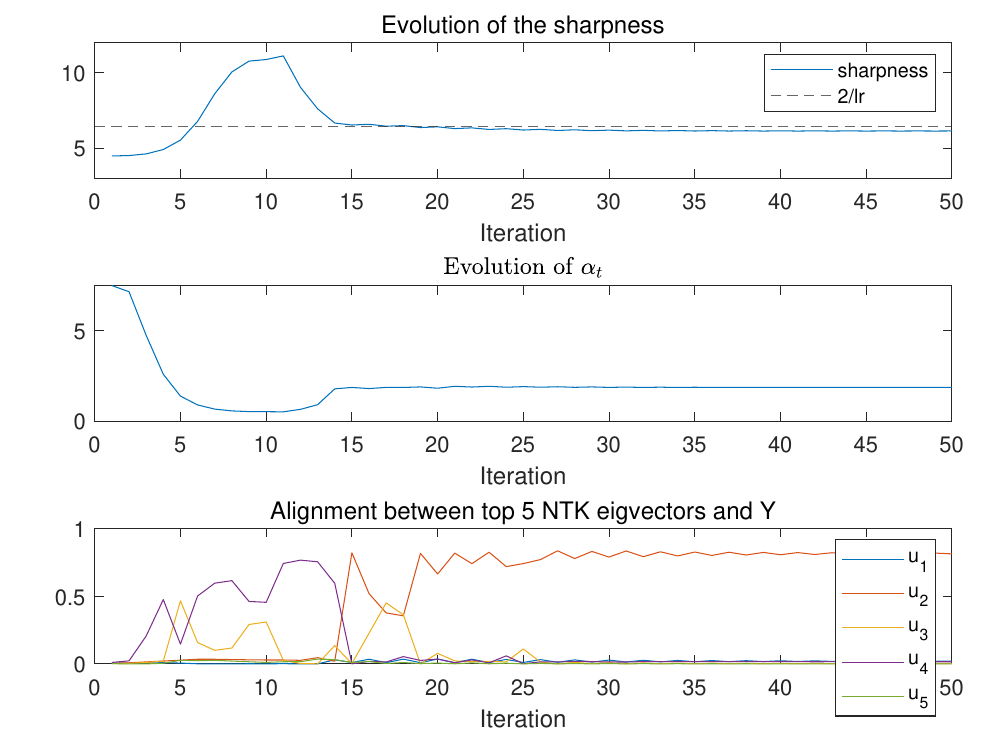}
		\caption{Large learning rate case}
\end{subfigure}
\caption{The alignment shift and evolution of $\alpha_t$ in a two-layer ReLU network}
\label{fig:alignment_shift_relu}
\vskip -0.2in
\end{figure}

\subsection{Supplementary Results on Central Flow}\label{section:supp_central_flow}
First, we provide evidence to demonstrate that the central flow is a good approximation to the time-averaged gradient descent. On the fully-connected network, VGG-11 and ResNet-20 described in Section~\ref{section:experiment_details} and Appendix~\ref{section:exp_more_settings}, we run both central flow and gradient descent under different learning rates. Figure~\ref{fig:central_flow_diff_archs} plots the evolution of the training losses, sharpness and the KTA, where we see that central flow approximately matches the time average of the GD trajectories.  (For the dashed lines in the training loss subplots, we report the ``central flow prediction'' for the training loss \citep{cohen2024understanding}, not the training loss along the central flow.)  For the VGG-11 in these experiments, we use average pooling (rather than maxpooling) and GeLU activation (rather than ReLU), so that the training objective is smooth, as is needed in order for the central flows to work well.
\begin{figure}[ht]
    \centering
\begin{subfigure}{\textwidth}
		\centering
		\includegraphics[width=0.9\linewidth]{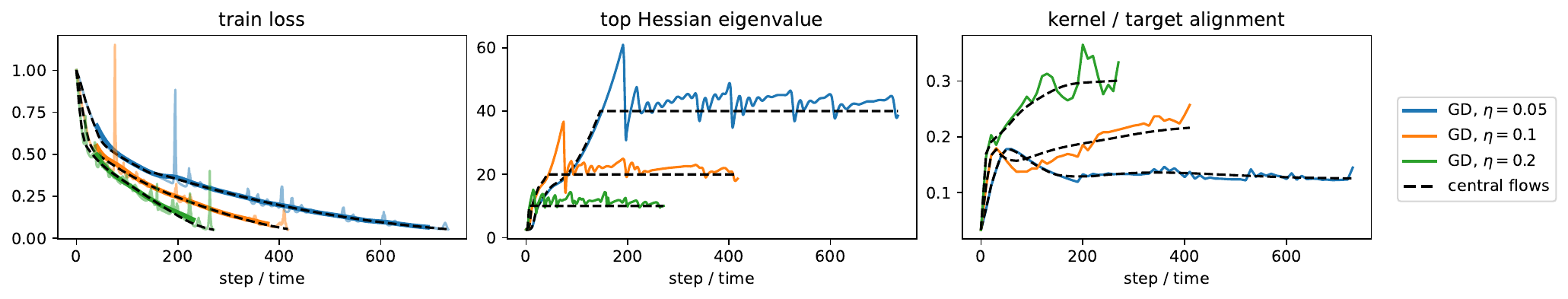}
		\caption{Fully connected network}
\end{subfigure}
\begin{subfigure}{\textwidth}
		\centering
		\includegraphics[width=0.9\linewidth]{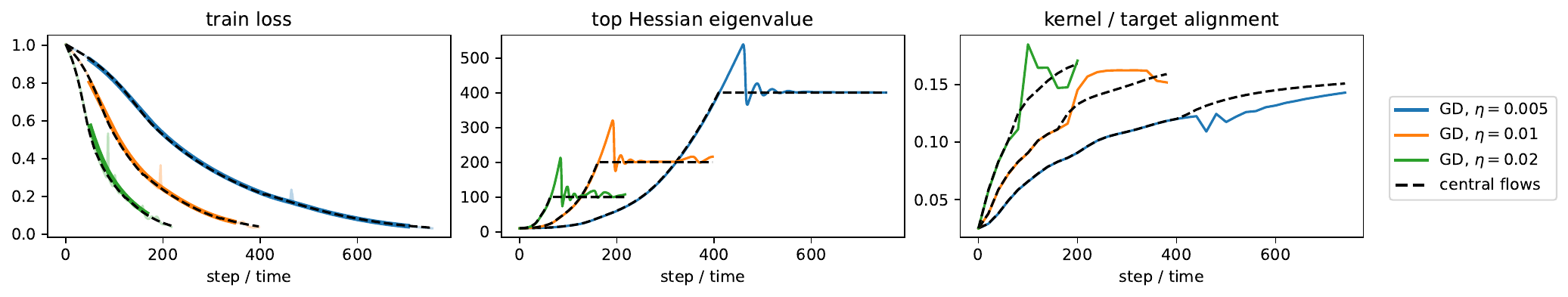}
		\caption{VGG-11}
\end{subfigure}
\begin{subfigure}{\textwidth}
		\centering
		\includegraphics[width=0.9\linewidth]{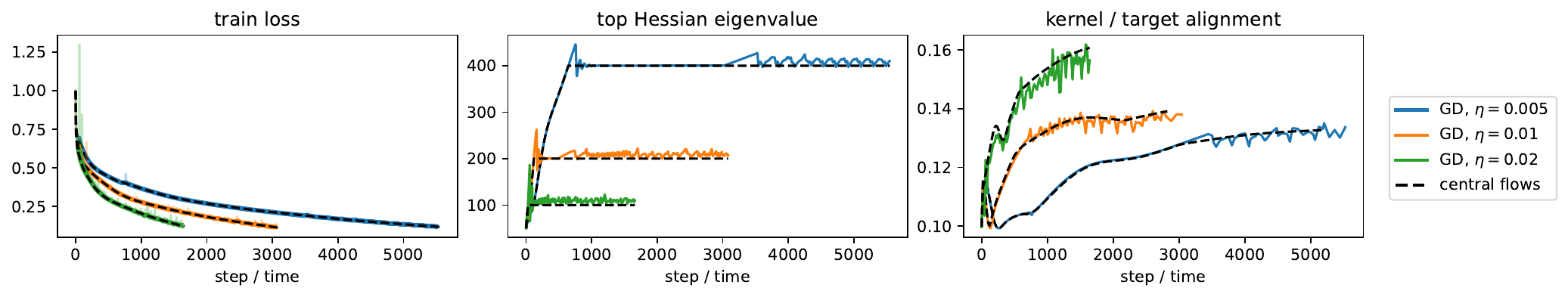}
		\caption{ResNet-20}
\end{subfigure}
\caption{Demonstration that the KTA along central flow reasonably matches that along the GD trajectory under different network architectures. In the subplots on training losses, the thin/light solid lines are the actual training loss curves under GD, and the thick/dark solid lines are the smoothed (time-averaged) version.}
\label{fig:central_flow_diff_archs}
\end{figure}

We now provide more results on the evolution of the KTA under central flow compared to gradient flow. Same as in Section~\ref{section:central_flow}, in Figure~\ref{fig:comp_cf_gf}. starting at various points along the central flow trajectory in different architectures, we branch off and run gradient flow $\frac{dW}{dt} = - \eta \nabla L(W)$ for some time (red). Again, we find that gradient flow takes a trajectory with lower KTA than the central flow.

\section{Detailed Theoretical Analysis and Proofs for the Four Phases}\label{section:pf_main}
\subsection{Cartoon Illustration}\label{section:illustration}
As is discussed in Section~\ref{section:four_phases}, the training dynamics at EoS can be divided into four phases based on the position and the sign of change in the sharpness. We plot a cartoon figure to illustrate this division in Figure~\ref{fig:eos_illustration}. A similar illustration figure was also provided in \cite{wang2022analyzing}.
\begin{figure}[ht]
    \centering
    \includegraphics[width=0.5\linewidth]{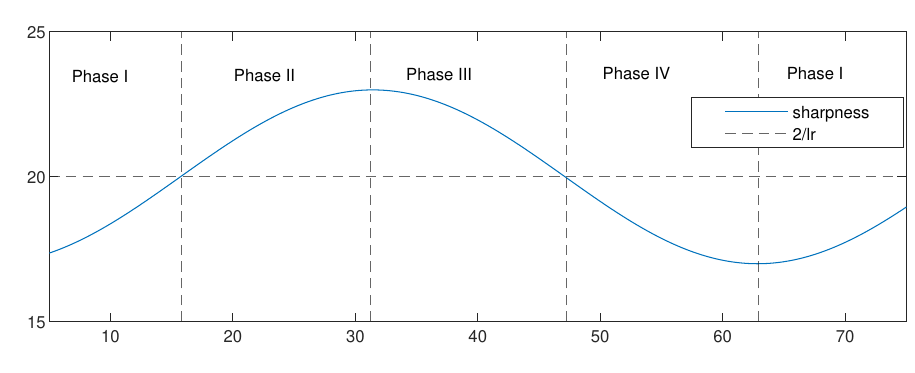}
    \caption{A Cartoon illustration of the four phases during EoS}
    \label{fig:eos_illustration}
\end{figure}
\begin{figure}[ht]
    \centering
\begin{subfigure}{0.9\textwidth}
		\centering
		\includegraphics[width=0.9\linewidth]{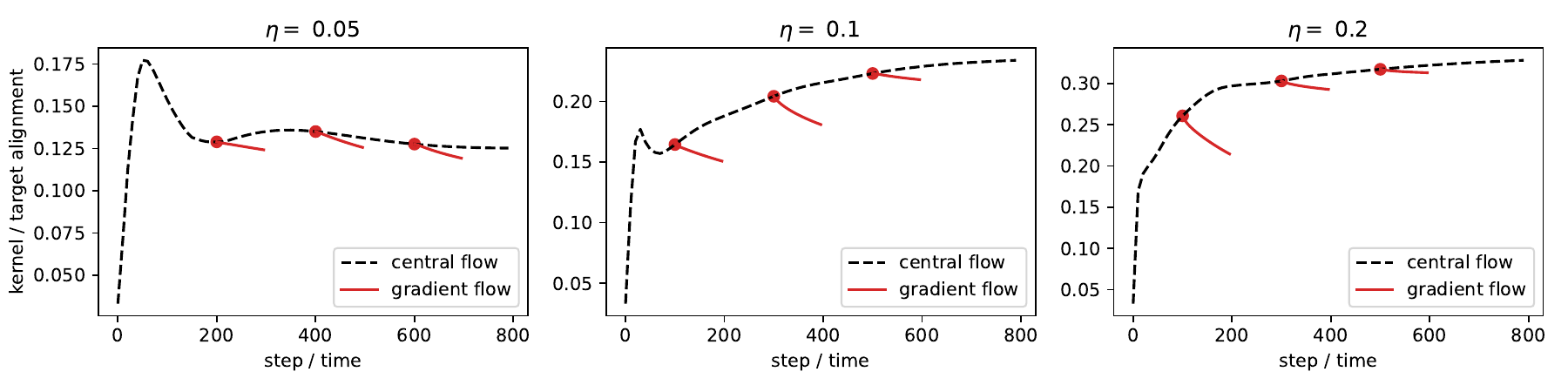}
		\caption{Fully connected network}
\end{subfigure}
\begin{subfigure}{0.9\textwidth}
		\centering
		\includegraphics[width=0.9\linewidth]{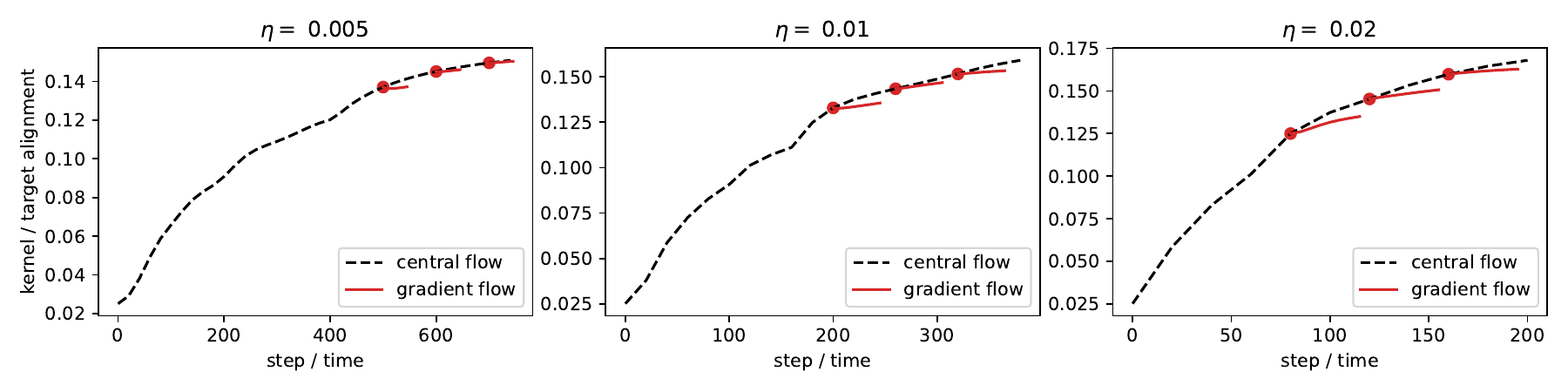}
		\caption{VGG-11}
\end{subfigure}
\begin{subfigure}{0.9\textwidth}
		\centering
		\includegraphics[width=0.9\linewidth]{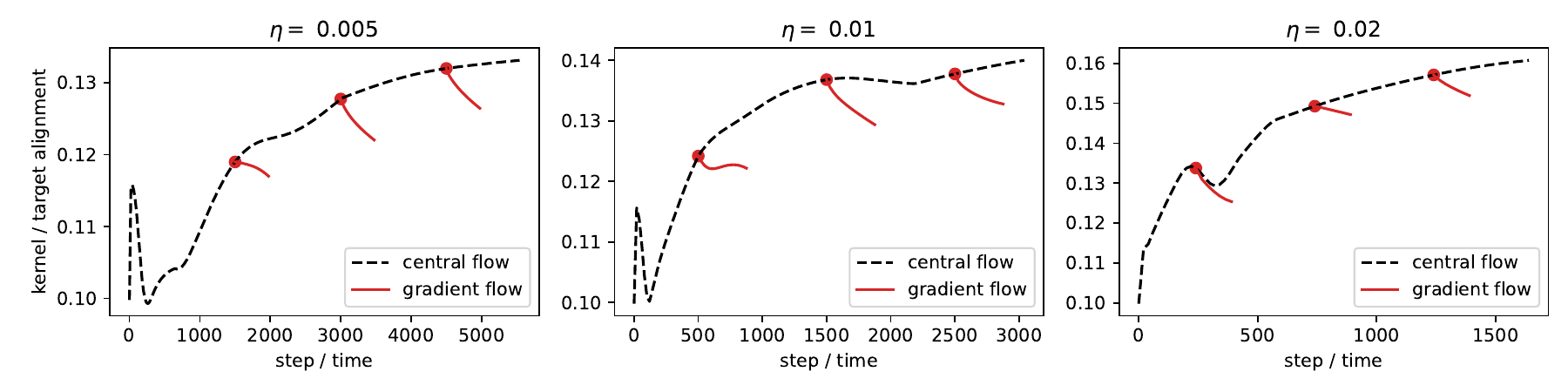}
		\caption{ResNet-20}
\end{subfigure}
\caption{Gradient flow (red), when branched off from the central flow at different times, takes a lower-KTA trajectory than the central flow (dashed black).}
\label{fig:comp_cf_gf}
\end{figure}
\subsection{Rank-1 structure}\label{section:rank1_preserved}
In this section, we verify that the structure in Assumption~\ref{assump:rank-1} will be preserved for $t>t_0$ where $t_0$ is defined in Assumption~\ref{assump:rank-1}.

We prove it by induction. Suppose at time $t$ after $t_0$, we have that $W_{t}^{(1)}X= \boldsymbol{u}\boldsymbol{v}_{t}^T,\quad
	W_{t}^{(2)}= c_{t}\boldsymbol{u}^T$. 
Then at time $t+1$, by the update rule of $W_t^{(1)}X$ and $W_t^{(2)}$, we have that:
\begin{align*}
	W_{t+1}^{(1)}X= W_t^{(1)}X -\frac{\eta}{n} W_t^{(2)T}E_tX^TX
    &=\boldsymbol{u}\boldsymbol{v}_{t}^T-\frac{\eta}{n}c_t\boldsymbol{u}E_tXX^T=\boldsymbol{u}(\boldsymbol{v}_t^T-\frac{\eta}{n}c_tE_tXX^T)\\
	W_{t+1}^{(2)}= W_{t}^{(2)} -\frac{\eta}{n} E_t(W_t^{(1)}X)^T
    &=c_t\boldsymbol{u}^T-\frac{\eta}{n}E_t\boldsymbol{v}_t\boldsymbol{u}^T=(c_t-\frac{\eta}{n}\langle E_t,\boldsymbol{v}_t\rangle)\boldsymbol{u}^T
\end{align*}
Hence we see that the rank-1 structure is preserved with $\boldsymbol{u}$ unchanged and
\begin{align*}
    c_{t+1}&=c_t-\frac{\eta}{n}\langle E_t,\boldsymbol{v}_t\rangle\\
    \boldsymbol{v}_{t+1}^T&=\boldsymbol{v}_t^T-\frac{\eta}{n}c_tE_tXX^T
\end{align*}
This completes the proof by induction.
\subsection{Analysis of Phase I}\label{section:proof_p1}
Note that under the rank-1 structure in Assumption~\ref{assump:rank-1}, the NTK at iteration $t$ can be written as
\[
K_t=\|W_{t}^{(2)}\|^2K_x+X^TW_t^{(1)T}W_t^{(1)}X=c_t^2X^TX+\boldsymbol{v}_t\boldsymbol{v}_t^T
\]
We want to prove that during Phase I, both $c_t^2$ and $\|\boldsymbol{v}_t\|^2$ increase over time. We first introduce another assumption on Phase I where we assume that the dynamics follow the GF trajectory. This assumption was also made and empirically verified in \cite{wang2022analyzing}.
\begin{assumption}\label{assump:GF}
	During Phase I, the dynamics follow GF trajectory.
\end{assumption}
Recall that we denote $F_t=W_{t}^{(2)}W_t^{(1)}X=c_t\boldsymbol{v}_t$ under the rank-1 structure, and $E_t = F_t-Y$ and use $\{q_i\}_{i=1}^n$ to represent the eigenvectors of $K_x=X^TX$. Then plugging in the update rule of $c_t$ and $\boldsymbol{v}_t$ in Section~\ref{section:rank1_preserved}, we immediately get the following results.
\begin{align*}
    \frac{d}{dt}c_t&=-\frac{\eta}{n}\langle E_t,\boldsymbol{v}_t\rangle\\
    \frac{d}{dt}\langle \boldsymbol{v}_t,q_i\rangle&=-\frac{\eta}{n}c_tE_tX^TXq_i=-\frac{\eta\lambda_i}{n}c_t\langle E_t,q_i\rangle
\end{align*}
By Assumption~\ref{assump:rank-1} we have that at time $t_0$, $c_{t_0}>0$ and $E_{t_0}[i]=-\Theta(Y[i])$ for all $i\le n$, which gives us that $\langle E_{t_0},q_i\rangle=-\Theta(\langle Y,q_i\rangle)$ for all $i\le n$. That means $\frac{d}{dt}\langle \boldsymbol{v}_t,q_i\rangle$ has the same sign as $\langle Y,q_i\rangle$ at $t=t_0$, which promotes $\langle\boldsymbol{v}_t,q_i\rangle$ to move towards $\langle Y,q_i\rangle$, a direction deviating from $\langle E_t,q_i\rangle$ since $\langle E_t,q_i\rangle$ has the opposite sign as $\langle Y,q_i\rangle$ at $t_0$ and for a short time period later.

If $\langle E_{t_0},\boldsymbol{v}_{t_0}\rangle<0$, then $\frac{d}{dt}c_t>0$ at $t=t_0$, which helps keep the positive sign of $c_t$ for a short time period later. On the other hand, the behavior of $\frac{d}{dt}\langle \boldsymbol{v}_t,q_i\rangle$ will help keep the negative sign of $\langle E_t,\boldsymbol{v}_t\rangle$. Repeating this procedure, we will get a consistent increase of $c_t$ and a consistent movement of $\langle \boldsymbol{v}_t,q_i\rangle$ towards $\langle Y,q_i\rangle$ for a short time period after $t_0$.

If $\langle E_{t_0},\boldsymbol{v}_{t_0}\rangle>0$, then $\frac{d}{dt}c_t<0$ at $t=t_0$, which makes $c_t$ decrease for a short time period later. Note that in this case, the behavior of $\frac{d}{dt}\langle \boldsymbol{v}_t,q_i\rangle$ will promote $\langle E_{t},\boldsymbol{v}_{t}\rangle$ to decrease and become negative during the following training time. Repeating this procedure, we have that:

1) If $\langle E_{t},\boldsymbol{v}_{t}\rangle$ becomes negative before $c_t$ changes the sign, then $c_t$ will start to increase and keep positive and we will go back to the case with $\langle E_t,\boldsymbol{v}_t\rangle<0$ and $c_t>0$.

2) If $c_t$ becomes negative before $\langle E_{t},\boldsymbol{v}_{t}\rangle$ changes the sign, then $\langle\boldsymbol{v}_t,q_i\rangle$ will start to move towards $-\langle Y,q_i\rangle$, the same sign as $\langle E_t,q_i\rangle$, which will prevent $\langle E_{t},\boldsymbol{v}_{t}\rangle$ from decreasing and changing the sign. Then $\langle E_{t},\boldsymbol{v}_{t}\rangle$ will keep positive, $c_t<0$ and will keep decreasing, which will in turn promote $\langle\boldsymbol{v}_t,q_i\rangle$ to move towards $-\langle Y,q_i\rangle$. If we rewrite $-c_t$, $-\boldsymbol{v}_t$ as $c_t$ and $\boldsymbol{v}_t$ respectively, then we essentially get the same dynamics as in the case with $\langle E_t,\boldsymbol{v}_t\rangle<0$ and $c_t>0$.

In summary, the dynamics will eventually move to a stage where we have an increase of $c_t$ and a movement of $\langle \boldsymbol{v}_t,q_i\rangle$ towards $\langle Y,q_i\rangle$, which has the opposite sign as $\langle E_t,q_i\rangle$. Now we assume that at some time point $t'>t_0$ in Phase I, for all $i\le n$, $\langle\boldsymbol{v}_{t'},q_i\rangle$ will have the opposite sign as $\langle E_{t'},q_i\rangle$.

\begin{assumption}\label{assump:same_sign}
    Assume that at some time $t'$ in Phase I, $c_{t'}>0$ and that $\langle\boldsymbol{v}_{t'},q_i\rangle$ has the opposite sign as $\langle E_{t'},q_i\rangle$ for all $i\le n$.
\end{assumption}

The following lemma ensures that for later training time $t>t'$ in phase I, before $|\langle E_t,q_i\rangle|$ becomes very small, it always holds that $c_t>0$ and that $\langle\boldsymbol{v}_t,q_i\rangle$ has the opposite sign as $\langle E_t,q_i\rangle$ for all $i\le n$. When $|\langle E_t,q_i\rangle|$ becomes very small, it already implies convergence of $E_t$ along $q_i$ direction. The detailed analysis is beyond the scope of this paper and has been studied in \cite{wang2022analyzing}. They show that we will either enter EoS or have a small $\|E_t\|$ which implies convergence.
\begin{lemma}\label{lemma:same_sign}
    Under Assumption~\ref{assump:GF},\ref{assump:same_sign}, for $t>t'$ with $t'$ defined in Assumption~\ref{assump:same_sign}. For $i\le n$, when $|\langle E_t,q_i\rangle|$ is not very small, we will have that $c_t>0$ and that $\langle \boldsymbol{v}_t,q_i\rangle$ has the opposite sign as $\langle E_t,q_i\rangle$. Then we immediately get that $\langle F_t,q_i\rangle=\langle c_t\boldsymbol{v}_t,q_i\rangle$ has the opposite sign as $\langle E_t,q_i\rangle$.
\end{lemma}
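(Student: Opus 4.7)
The plan is to combine a continuity argument with the gradient-flow ODEs derived in Section~\ref{section:rank1_preserved}:
\[
\dot c_t=-\frac{\eta}{n}\langle E_t,\boldsymbol{v}_t\rangle,\qquad \frac{d}{dt}\langle\boldsymbol{v}_t,q_i\rangle=-\frac{\eta\lambda_i}{n}\,c_t\,\langle E_t,q_i\rangle.
\]
Fix a small threshold $\varepsilon>0$ that formalizes ``not very small,'' and call an index $i$ \emph{active at $t$} when $|\langle E_t,q_i\rangle|\ge\varepsilon$. By Assumption~\ref{assump:same_sign}, at $t=t'$ we have $c_{t'}>0$ and $\langle E_{t'},q_i\rangle\langle\boldsymbol{v}_{t'},q_i\rangle<0$ for every $i$; the claim is that this product remains negative at every index that is active at time $t$, for all $t>t'$.

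I would argue by contradiction. Let $t^*>t'$ be the infimum of times at which either $c_t=0$ or the sign condition is violated at some index that remains active in a left neighborhood. By continuity of the GF solution, on $[t',t^*)$ the sign structure still holds on every active index and $c_t>0$. Splitting the inner product according to activity gives
\[
\langle E_t,\boldsymbol{v}_t\rangle=\sum_{i\ \text{active}}\langle E_t,q_i\rangle\langle\boldsymbol{v}_t,q_i\rangle+\sum_{i\ \text{inactive}}\langle E_t,q_i\rangle\langle\boldsymbol{v}_t,q_i\rangle,
\]
where the first sum is strictly negative while the second has magnitude at most $O(\varepsilon\|\boldsymbol{v}_t\|)$. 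Before convergence the active sum dominates, so $\dot c_t>0$ on $[t',t^*)$; hence $c_t$ cannot decrease to $0$, ruling out the first failure mode.

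For the remaining mode, suppose $\langle\boldsymbol{v}_{t^*},q_j\rangle=0$ for some $j$ that stays active near $t^*$. Just before $t^*$, the sign structure gives $\langle\boldsymbol{v}_t,q_j\rangle\langle E_t,q_j\rangle<0$ with $c_t>0$, so $\frac{d}{dt}\langle\boldsymbol{v}_t,q_j\rangle=-\frac{\eta\lambda_j}{n}c_t\langle E_t,q_j\rangle$ has sign opposite to $\langle E_t,q_j\rangle$, i.e.\ the same sign as $\langle\boldsymbol{v}_t,q_j\rangle$ itself, pushing it further away from $0$ rather than toward it. This contradicts the definition of $t^*$, so the sign structure is preserved on active indices for every $t>t'$. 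The statement about $\langle F_t,q_i\rangle=c_t\langle\boldsymbol{v}_t,q_i\rangle$ is then immediate since $c_t>0$.

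The main obstacle is quantitatively controlling the inactive contribution to $\langle E_t,\boldsymbol{v}_t\rangle$: one must ensure $\|\boldsymbol{v}_t\|$ remains bounded (which follows from $F_t=c_t\boldsymbol{v}_t$, the loss bound in Assumption~\ref{assump:loss_bound}, and a lower bound on $c_t$ during Phase I), and then pick $\varepsilon$ small enough relative to the spectrum of $K_x$ that the inactive contribution is a lower-order correction. A secondary subtlety is handling indices that oscillate across the activity threshold; these can be patched over by running the continuity argument separately on the open subintervals where a fixed subset of indices is active, which is consistent with the informal ``not very small'' qualifier in the statement and the authors' remark that the detailed tail behavior of $\langle E_t,q_i\rangle$ is handled in \cite{wang2022analyzing}.
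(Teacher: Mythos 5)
Your proposal correctly identifies the two GF ODEs and the basic continuity-bootstrap strategy, and your argument that $\langle\boldsymbol{v}_t,q_j\rangle$ cannot hit zero at an active index (since its derivative points away from zero) is sound and mirrors a step in the paper's proof. However, the active/inactive decomposition you introduce creates a gap that the paper does not face, and you yourself flag it as the main obstacle: to conclude $\dot c_t>0$ from your split of $\langle E_t,\boldsymbol{v}_t\rangle$, you need the active (negative) sum to dominate the inactive sum of uncontrolled sign, but you only bound the inactive sum by $O(\varepsilon\|\boldsymbol{v}_t\|)$ and give no lower bound on the magnitude of the active sum. Nothing prevents all active terms $\langle E_t,q_i\rangle\langle\boldsymbol{v}_t,q_i\rangle$ from being tiny (e.g.\ if some $\langle\boldsymbol{v}_t,q_i\rangle$ are small), so ``the active sum dominates'' is asserted rather than proved.

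The paper sidesteps this entirely by \emph{not} thresholding. It runs the bootstrap on the hypothesis that $c_t>0$ and that $\langle E_t,q_i\rangle$ has not changed sign for \emph{every} $i\le n$ on $[t',T]$. Under that hypothesis, the ODE $\frac{d}{dt}\langle\boldsymbol{v}_t,q_i\rangle=-\frac{\eta\lambda_i}{n}c_t\langle E_t,q_i\rangle$ pushes $\langle\boldsymbol{v}_t,q_i\rangle$ monotonically away from $\langle E_t,q_i\rangle$'s sign, so $\langle\boldsymbol{v}_t,q_i\rangle\langle E_t,q_i\rangle<0$ holds for \emph{all} $i$, and hence $\langle E_t,\boldsymbol{v}_t\rangle=\sum_i\langle E_t,q_i\rangle\langle\boldsymbol{v}_t,q_i\rangle<0$ term-by-term, with no need to compare partial sums. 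This gives $\dot c_t>0$ for free. To close the bootstrap, the paper then writes $\langle E_t,\boldsymbol{v}_t\rangle\langle\boldsymbol{v}_t,q_i\rangle=d_{i,t}\langle E_t,q_i\rangle$ with $d_{i,t}>0$, so $\frac{d}{dt}\langle E_t,q_i\rangle=-(\text{positive})\cdot\langle E_t,q_i\rangle$ is exponential decay, which in continuous time never crosses zero — the sign of $\langle E_t,q_i\rangle$ is automatically preserved for every $i$, not just the large ones. The phrase ``not very small'' in the lemma statement is about \emph{when to stop running the argument} (once a coordinate has essentially converged, the paper defers to \cite{wang2022analyzing}); it is not an invitation to split indices into active and inactive within the proof, and doing so is what manufactures the unsolved dominance problem. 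To fix your proof, drop the threshold in the contradiction argument, track sign-preservation of $\langle E_t,q_i\rangle$ for all $i$ simultaneously, and observe that $\langle E_t,\boldsymbol{v}_t\rangle$ is then a sum of strictly negative terms.
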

\begin{proof}
We have that
\begin{align*}
    \frac{d}{dt}\langle E_t,q_i\rangle &= -\frac{\eta \lambda_i}{n}c_t^2\langle E_{t},q_i\rangle - \frac{\eta}{n}\langle E_t,\boldsymbol{v}_t\rangle\langle \boldsymbol{v}_t,q_i\rangle\\
    \frac{d}{dt}c_t&=-\frac{\eta}{n}\langle E_t,\boldsymbol{v}_t\rangle\\
    \frac{d}{dt}\langle \boldsymbol{v}_t,q_i\rangle&=-\frac{\eta}{n}c_tE_tX^TXq_i=-\frac{\eta\lambda_i}{n}c_t\langle E_t,q_i\rangle
\end{align*}
Suppose for some $T>t'$ in Phase I such that for all $t\in[t',T]$, $c_t>0$ and that $\langle E_t,q_i\rangle$ doesn't change the sign for all $i\le n$. By the definition of $t'$, we know that $\forall i\le n:\langle \boldsymbol{v}_{t'},q_i\rangle$ has the opposite sign as $\langle E_{t'},q_i\rangle$. Then the dynamic of $\langle\boldsymbol{v}_t,q_i\rangle$ gives us that $\langle \boldsymbol{v}_t,q_i\rangle$ has the opposite sign as $\langle E_t,q_i\rangle$ for $t\in[t',T]$. Hence $\langle E_t,\boldsymbol{v}_t\rangle<0$. Then for $t\in[T,T+dt]$ we have that
\[
c_{t+dt}=c_t-\frac{\eta}{n}\langle E_t,\boldsymbol{v}_t\rangle dt>0
\]
and that $\langle E_t,\boldsymbol{v}_t\rangle\langle \boldsymbol{v}_t,q_i\rangle=d_{i,t}\langle E_t,q_i\rangle$ for some $d_{i,t}>0$. When $|\langle E_t,q_i\rangle|$ is not very small, we will have a bounded $d_{i,t}$, which yields
\[
\frac{d}{dt}\langle E_t,q_i\rangle = -\frac{\eta \lambda_i}{n}(c_t^2+d_{i,t})\langle E_{t},q_i\rangle
\]
For $t\in[T,T+dt]$ we have that
\[
\langle E_{t+dt},q_i\rangle = \langle E_{t},q_i\rangle-\frac{\eta \lambda_i}{n}(c_t^2+d_{i,t})dt\langle E_{t},q_i\rangle=\exp\left(-\frac{\eta \lambda_i}{n}(c_t^2+d_{i,t})dt\right)\langle E_{t},q_i\rangle
\]
which has the same sign as $\langle E_{t},q_i\rangle$. Processing the above steps completes the proof.
\end{proof}
Now notice that we have the following dynamics for $c_t^2$ and $\|\boldsymbol{v}_t\|^2$,
\begin{align*}
		\frac{d}{dt}(c_t^2)&=2c_t\frac{d}{dt}c_t=-\frac{2\eta}{n}\langle E_t,F_t\rangle =-\frac{2\eta}{n}\sum_{i=1}^n\langle E_t,q_i\rangle\langle F_t,q_i\rangle\\
		\frac{d}{dt}\|\boldsymbol{v}_t\|^2&=2\boldsymbol{v}_t^T\frac{d}{dt}\boldsymbol{v}_t=-\frac{2\eta}{n}\sum_{i=1}^n\lambda_i\langle E_t,q_i\rangle\langle F_t,q_i\rangle
\end{align*}
Then Lemma~\ref{lemma:same_sign} implies that $c_t^2$ and $\|\boldsymbol{v}_t\|^2$ are increasing for $t>t'$ in Phase I before $|\langle E_t,q_i\rangle|$ becomes very small. As mentioned above, a more detailed analysis in \cite{wang2022analyzing} gives us that we will either enter EoS due to the increase of $c_t^2$ or get convergence before EoS.

\subsection{Intuition on the choice of the learning rate}\label{section:lr_choice}
	
Let's consider the time period after $c_t^2\lambda_1/n$ reaches $\Theta(1/\eta)$ (denoted as $T_0$). From the proof of Lemma~\ref{lemma:same_sign},  we have that with $d_{i,t}>0$,
\[
\frac{d}{dt}\langle E_{t},q_i\rangle = -\frac{\eta \lambda_i}{n}(c_t^2+d_{i,t})\langle E_{t},q_i\rangle<-\frac{\eta \lambda_i}{n}c_t^2\langle E_{t},q_i\rangle=-\Theta(\frac{\lambda_i}{\lambda_1})\langle E_{t},q_i\rangle
\]
Then after $T=\mathcal{O}(\frac{\lambda_1}{\lambda_i}\log n)$ steps, $\langle E_{t},q_i\rangle$ will shrink by $\mathcal{O}(\frac{1}{\text{poly}(n)})$. Let's consider some index $k'$ such that $\lambda_{k'}=\Theta(n^{a+\epsilon})$ for some small $\epsilon>0$. Note that for $i\le k'$, $\frac{\lambda_1}{\lambda_i}=\mathcal{O}(n^{1-a-\epsilon})$, hence after $T=\mathcal{O}(n^{1-a-\epsilon}\log n)$ steps, $\langle E_{t+1},q_i\rangle,i\le k'$ will shrink by $\mathcal{O}(\frac{1}{\text{poly}(n)})$.

That means after $T=\mathcal{O}(n^{1-a-\epsilon}\log n)$ steps, $\langle-E_t,q_i\rangle\langle F_t,q_i\rangle$ for $i\le k'$ will be negligible compared to those $i> k'$. Moreover, by our data distribution  (see Section~\ref{section:theory_setup}), we know that $k'=\mathcal{O}(\log n)$ and $k=\mathcal{O}(\log n)$ (where $k$ is defined in our data distribution), which gives us $k-k'=\mathcal{O}(\log n)$ and hence the number of terms $\langle-E_t,q_i\rangle\langle F_t,q_i\rangle$ for $k'<i\le k$ will be negligible compared to the number of terms when $i>k$. Combining the above two facts yields that, $\sum_{i=1}^n\lambda_i\langle-E_t,q_i\rangle\langle F_t,q_i\rangle$ will be dominated by $\sum_{i>k}\lambda_i\langle-E_t,q_i\rangle\langle F_t,q_i\rangle$. Therefore,
\begin{align*}
\frac{d}{dt}\|\boldsymbol{v}_t\|^2&=\frac{2\eta}{n}\sum_{i=1}^n\lambda_i\langle-E_t,q_i\rangle\langle F_t,q_i\rangle=\frac{2\eta}{n}\Theta(\sum_{i> k}\lambda_i\langle-E_t,q_i\rangle\langle F_t,q_i\rangle)\\
&=\frac{2\eta}{n} n^a\Theta(\sum_{i> k}\langle-E_t,q_i\rangle\langle F_t,q_i\rangle)=\Theta(\frac{n^a}{n})\frac{d}{dt}(c_t^2\lambda_1)
\end{align*}
Denote the end of Phase I as $T_{1}$. Then we have
\[
\|\boldsymbol{v}_{T_{1}}\|^2-\|\boldsymbol{v}_{T_0+T}\|^2=\Theta(\frac{n^a}{n})(c_{T_{1}}^2-c_{T_0+T}^2)\lambda_1=\Theta(\frac{n^a}{\eta})
\]
Here we check that  $T_0+T$ will not exceed the length of Phase I, $T_{1}$. Note that
\[
\frac{d}{dt}(c_t^2)=-\frac{2\eta}{n}\langle E_t,F_t\rangle\le\frac{\eta}{n}\mathcal{O}(\|Y\|^2)=\mathcal{O}(\eta)
\]
Starting from $T_0$ to $T_{1}$, $c_t^2$ has a total increase of order $\Theta(\frac{1}{\eta})$, which means that $T_{1}=T_0+\Omega(\frac{1}{\eta^2})$. By the choice of learning rate discussed below, we have that $\eta^2=\Theta(\frac{1}{n^{1-a}})$ and hence $T_{1}=T_0+\Omega(n^{1-a})$, which is at a higher order than $T_0+T=T_0+\mathcal{O}(n^{1-a-\epsilon}\log n)$.

\paragraph{Choice of learning rate} The above calculation implies that the increasing speed of $\|\boldsymbol{v}_t\|^2$ is $\Theta(\frac{
n^a}{n})$ times the increasing speed of $c_t^2$ for later period (after $T+T_0$ of Phase I). If we further assume that for early training period, we have $\|\boldsymbol{v}_{T_0+T}\|^2=\mathcal{O}(\frac{n^a}{\eta})$, then we can get that at the end of Phase I, $\|\boldsymbol{v}_t\|^2=\Theta(n^a/\eta)$.

Note that at the end of Phase I, i.e. the beginning of EoS, we have $c_t^2=\Theta(1/\eta)$, then for the output norm, we get that $\|F_t\|^2=c_t^2\|\boldsymbol{v}_t\|^2=\Theta(n^a/\eta^2)$. When $\|F_t\|$ goes to order of $\|Y\|$, we want it to go pass the above order $\Theta(n^a/\eta^2)$ so that we will have EoS. (Otherwise $F_t$ will approach $Y$ before reach the order $\Theta(n^a/\eta^2)$ and we will have convergence before EoS) That means we require
\[
n^a/\eta^2\le\mathcal{O}(\|Y\|^2)=\mathcal{O}(n),\quad\Rightarrow\quad \eta^2=\Omega(\frac{1}{n^{1-a}})
\]
So we can pick $\eta^2=\Theta(\frac{1}{n^{1-a}})$.

\subsection{Dynamics of $c_t^2$}\label{section:p3_ct}
In Phase I under the gradient flow, we prove that the time derivative of $c_{t}^2$ is determined by the sign of $\langle E_t, F_t \rangle$. Now we prove that in Phase II and later phases, even though gradient descent no longer follows the gradient flow, the sign of $c_{t+1}^2 - c_t^2$ is still determined by the sign of $\langle E_t, F_t \rangle$.
Note that the update of $c_t^2$ under gradient descent is given by
\begin{align*}
c_{t+1}^2-c_t^2&=-\frac{2\eta}{n}\langle E_t,F_t\rangle+\frac{\eta^2}{n^2}\langle E_t,\boldsymbol{v}_{t}\rangle^2=-\frac{2c_t\eta}{n}\langle E_t,\boldsymbol{v}_{t}\rangle+\frac{\eta^2}{n^2}\langle E_t,\boldsymbol{v}_{t}\rangle^2\\
&=\frac{\eta}{n}\langle E_t,\boldsymbol{v}_{t}\rangle\left(\frac{\eta}{n}\langle E_t,\boldsymbol{v}_{t}\rangle-2c_t\right)=\frac{\eta}{n}\langle E_t,\boldsymbol{v}_{t}\rangle\left(\frac{\eta}{n}\langle c_t\boldsymbol{v}_t-Y,\boldsymbol{v}_{t}\rangle-2c_t\right)\\
&=\frac{\eta}{n}\langle E_t,\boldsymbol{v}_{t}\rangle\left(c_t\left(\frac{\eta}{n}\|\boldsymbol{v}_{t}\|^2-2\right)-\frac{\eta}{n}\langle Y,\boldsymbol{v}_{t}\rangle\right)
\end{align*}

By Assumption~\ref{assump:loss_bound}, we have that $c_t^2\|\boldsymbol{v}_t\|^2=\|F_t\|^2\le2\|E_t\|^2+2\|Y\|^2\le\mathcal{O}(\frac{n}{\eta})$. Combining with the fact that $c_t^2=\Theta(\frac{1}{\eta})$ during EoS yields that $\|\boldsymbol{v}_t\|^2\le\mathcal{O}(n)$. Then we have $\frac{\eta}{n}\|\boldsymbol{v}_{t}\|^2-2=-\Theta(1)$ and that $\left|\frac{\eta}{n}\langle Y,\boldsymbol{v}_t\rangle\right|\le\frac{\eta}{n}\|Y\|\|\boldsymbol{v}_t\|\le\mathcal{O}(\eta)$. Hence  $c_t\left(\frac{\eta}{n}\|\boldsymbol{v}_{t}\|^2-2\right)-\frac{\eta}{n}\langle Y,\boldsymbol{v}_{t}\rangle=-c_t\Theta(1)$, which implies that the update of $c_t^2$ has the same sign as $-\langle E_t, \boldsymbol{v}_t\rangle c_t=-\langle E_t, F_t\rangle$.

\subsection{Analysis of Phase II and Proof of Lemma~\ref{lemma:p2}}\label{section:proof_p2}
First note that during Phase II, we have that $\langle E_t,F_t\rangle<0$, which means that $\|E_t\|\le\|Y\|$ because otherwise, we would have that $\langle E_t,F_t\rangle=\|E_t\|^2+\langle E_t,Y\rangle\ge\|E_t\|^2-\|E_t\|\|Y\|>0$. Similarly, we have that $\|F_t\|\le\|Y\|$. Hence we have $|\langle E_t,F_t\rangle|\le\mathcal{O}(\|Y\|^2)=\mathcal{O}(n)$.

We prove Lemma~\ref{lemma:p2} by induction using the following two lemmas.
\begin{lemma}\label{lemma:error_p2}
	Under Assumption~\ref{assump:rank-1}-\ref{assump:loss_bound} during Phase II and pick $\eta=\Theta(n^{-\frac{1-a}{n}})$. Write $\Delta_t:=\frac{\lambda_1}{n}c_{t}^2-\frac{2}{\eta}$ with $\Delta_{t}>0$. Suppose for $s\le t-1$, we have $|\langle E_s,F_s\rangle|=\mathcal{O}(\delta_2\|Y\|^2)$, we can get that
	\begin{align*}
		|\langle E_{t}, q_1\rangle|&\ge(1+C\eta\Delta_{t-1})|\langle E_{t-1}, q_1\rangle|-\mathcal{O}(\eta^2\delta_2|\langle Y,q_1\rangle|)\\
		|\langle E_{t}, q_i\rangle|&\le\mathcal{O}\left(\delta_2\eta^2\frac{\lambda_1}{\lambda_i}|\langle Y,q_i\rangle|+|\langle E_{T_1}, q_1\rangle|\right),\quad 2\le i\le n
	\end{align*}
where $T_1$ is the end of Phase I.
\end{lemma}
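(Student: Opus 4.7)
The first step is to derive a clean update formula for $\langle E_{t+1}, q_i\rangle$. Starting from the rank-1 updates $c_{t+1}=c_t-\tfrac{\eta}{n}\langle E_t,\boldsymbol{v}_t\rangle$ and $\boldsymbol{v}_{t+1}=\boldsymbol{v}_t-\tfrac{\eta c_t}{n}K_xE_t$ established in Appendix~\ref{section:rank1_preserved}, I will expand the product $F_{t+1}=c_{t+1}\boldsymbol{v}_{t+1}$, subtract $Y$, and project onto $q_i$. Using the identity $c_t\langle E_t,\boldsymbol{v}_t\rangle=\langle E_t,F_t\rangle$ and $K_xq_i=\lambda_iq_i$, this yields
\begin{equation*}
\langle E_{t+1},q_i\rangle=\left(1-\tfrac{\eta\lambda_ic_t^2}{n}+\tfrac{\eta^2\lambda_i}{n^2}\langle E_t,F_t\rangle\right)\langle E_t,q_i\rangle-\tfrac{\eta}{nc_t^2}\langle E_t,F_t\rangle\langle F_t,q_i\rangle,
\end{equation*}
where I have also used $\langle\boldsymbol{v}_t,q_i\rangle=\langle F_t,q_i\rangle/c_t$. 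The induction hypothesis $|\langle E_s,F_s\rangle|=\mathcal{O}(\delta_2\|Y\|^2)=\mathcal{O}(\delta_2 n)$ and the EoS scaling $c_t^2=\Theta(1/\eta)$ (coming from $\tfrac{\lambda_1 c_t^2}{n}=\Theta(1/\eta)$ and $\lambda_1=\Theta(n)$) will be the two main quantitative inputs throughout.

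\textbf{The $q_1$ lower bound.} For $i=1$, the leading coefficient simplifies using $\tfrac{\eta\lambda_1 c_t^2}{n}=\eta\Delta_t+2$ to $-(1+\eta\Delta_t)$. The first correction inside the big bracket is $\tfrac{\eta^2\lambda_1}{n^2}\langle E_t,F_t\rangle=\mathcal{O}(\eta^2\delta_2)$. The second term on the right expands as $\tfrac{\eta}{nc_t^2}\langle E_t,F_t\rangle\langle F_t,q_1\rangle=\mathcal{O}(\eta^2\delta_2\,|\langle F_t,q_1\rangle|)$, and writing $\langle F_t,q_1\rangle=\langle Y,q_1\rangle+\langle E_t,q_1\rangle$ splits this further into a clean $\mathcal{O}(\eta^2\delta_2|\langle Y,q_1\rangle|)$ piece plus a self-referential $\mathcal{O}(\eta^2\delta_2|\langle E_t,q_1\rangle|)$ piece. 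Applying the reverse triangle inequality and absorbing the self-referential piece into the $(1+\eta\Delta_t)$ factor by picking the constant $C<1$ (which is valid because $\eta^2\delta_2\ll\eta\Delta_t$ in Phase II) yields the stated inequality.

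\textbf{The $q_i$ upper bound for $i\geq 2$.} For $i\geq 2$, Assumption~\ref{assump:ub_sharpness} ensures $\tfrac{\eta\lambda_ic_t^2}{n}<1$, so after absorbing the $\mathcal{O}(\eta^2\lambda_i\delta_2/n)$ correction, the leading coefficient $A_{t,i}$ lies in $(0,1-\Omega(\lambda_i/\lambda_1))$. The additive noise $\tfrac{\eta}{nc_t^2}\langle E_t,F_t\rangle\langle F_t,q_i\rangle$ is controlled by $\mathcal{O}(\eta^2\delta_2)(|\langle Y,q_i\rangle|+|\langle E_t,q_i\rangle|)$, again with the self-referential piece absorbable into $A_{t,i}$. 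Iterating the resulting affine recursion from $T_1$ to $t$ as a geometric series gives
\begin{equation*}
|\langle E_t,q_i\rangle|\le\prod_{s=T_1}^{t-1}|A_{s,i}|\cdot|\langle E_{T_1},q_i\rangle|+\mathcal{O}(\eta^2\delta_2)\cdot|\langle Y,q_i\rangle|\cdot\sum_{s=T_1}^{t-1}\prod_{r=s+1}^{t-1}|A_{r,i}|,
\end{equation*}
and bounding the geometric sum by $\tfrac{1}{1-|A|}=\mathcal{O}(\lambda_1/\lambda_i)$ yields the claimed $\mathcal{O}(\eta^2\delta_2\tfrac{\lambda_1}{\lambda_i}|\langle Y,q_i\rangle|)$ steady-state bound, while the initial condition contributes at most $|\langle E_{T_1},q_i\rangle|$ (which I take to be what the lemma intends by the ``$q_1$'' subscript).

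\textbf{Main obstacle.} The delicate part is the bookkeeping of the two layers of $\eta$-corrections: I must simultaneously show that (i) the self-referential $\mathcal{O}(\eta^2\delta_2|\langle E_t,q_i\rangle|)$ contributions can be absorbed into the leading geometric factor without degrading the exponent $(1+\eta\Delta_t)$ for $i=1$ or the contraction rate for $i\geq 2$, and (ii) the quantity $|\langle E_t,F_s\rangle|$ stays $\mathcal{O}(\delta_2\|Y\|^2)$ throughout Phase II, so the induction hypothesis can be propagated. The first requires the scaling $\eta=\Theta(n^{-(1-a)/2})$ from Appendix~\ref{section:lr_choice} together with $\Delta_t>0$ in Phase II; the second relies on Assumption~\ref{assump:loss_bound} to keep $\|E_t\|$ bounded, combined with the observation that $\langle E_t,F_t\rangle<0$ throughout Phase II (as argued in Section~\ref{section:four_phases}).
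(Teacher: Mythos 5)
Your overall decomposition matches the paper exactly: you derive the same one-step affine recursion for $\langle E_{t+1}, q_i\rangle$ (eq.~\eqref{eqn:E_update} in the paper), split off $\langle F_t,q_i\rangle = \langle Y,q_i\rangle + \langle E_t,q_i\rangle$, use $c_t^2=\Theta(1/\eta)$, and for $i\ge 2$ telescope the geometric recursion from $T_1$ to $t$ after absorbing the $\mathcal{O}(\eta^2\delta_2)$ self-referential correction into the $1-\Theta(\lambda_i/\lambda_1)$ contraction (which is valid because $\eta^2\delta_2 = o(\lambda_i/\lambda_1)$ under the chosen $\eta$). That half is essentially the paper's argument, and your reading of the $|\langle E_{T_1},q_1\rangle|$ in the statement as a typo for $|\langle E_{T_1},q_i\rangle|$ agrees with how the paper actually uses the lemma.

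However, your handling of the $i=1$ case has a genuine gap. You propose to absorb the self-referential correction $\mathcal{O}(\eta^2\delta_2)|\langle E_t,q_1\rangle|$ into the $(1+\eta\Delta_t)$ factor by shrinking the constant $C$ below $1$, and you justify this by asserting $\eta^2\delta_2 \ll \eta\Delta_t$. But Phase II is defined only by $\Delta_t > 0$; near the Phase I/II boundary $\Delta_t$ can be arbitrarily small, in which case $\eta\delta_2 \ll \Delta_t$ fails and no positive constant $C$ rescues the inequality in the claimed form $(1+C\eta\Delta_{t-1})|\langle E_{t-1},q_1\rangle|$. The paper avoids this entirely by using the sign of the cross term: in Phase II one has $\langle E_{t-1},F_{t-1}\rangle < 0$ (this is what keeps $c_t^2$ increasing), so the full coefficient of $\langle E_{t-1},q_1\rangle$ is
\begin{equation*}
-(1+\eta\Delta_{t-1}) + \frac{\eta^2\lambda_1(1+\eta\Delta_{t-1})}{n^2(2+\eta\Delta_{t-1})}\,\langle E_{t-1},F_{t-1}\rangle \;\le\; -(1+\eta\Delta_{t-1}),
\end{equation*}
so its magnitude is at least $1+\eta\Delta_{t-1}$ with no absorption and no constraint relating $\Delta_{t-1}$ to $\eta\delta_2$. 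You do mention that $\langle E_t,F_t\rangle < 0$ in Phase II, but you invoke it only to propagate the induction hypothesis on $|\langle E_s,F_s\rangle|$, not where it is actually load-bearing — namely, to fix the sign of the self-referential correction and make the $i=1$ bound immediate with $C=1$.
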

\begin{proof}
During EoS we have that $\frac{\lambda_1}{n}c_t^2=\Theta(\frac{1}{\eta})$. Combining with $\lambda_1=\Theta(n)$ assumed in our data distribution (see Section~\ref{section:theory_setup}) gives us that $c_t^2=\Theta(\frac{1}{\eta})$. For the update of $\langle E_t,q_i\rangle$, we have that
\begin{equation}\label{eqn:E_update}
    \begin{aligned}
        \langle E_{t},q_i\rangle&=\langle E_{t-1},q_i\rangle+\langle E_{t}-E_{t-1},q_i\rangle=\langle E_{t-1},q_i\rangle+\langle c_{t}\boldsymbol{v}_{t}-c_{t-1}\boldsymbol{v}_{t-1},q_i\rangle\\
    &=\langle E_{t-1},q_i\rangle+(c_{t}-c_{t-1})\langle\boldsymbol{v}_t,q_i\rangle+c_{t-1}\langle\boldsymbol{v}_{t}-\boldsymbol{v}_{t-1},q_i\rangle+(c_{t}-c_{t-1})\langle\boldsymbol{v}_{t}-\boldsymbol{v}_{t-1},q_i\rangle\\
    &=\langle E_{t-1},q_i\rangle-\frac{\eta}{n}\langle E_{t-1},\boldsymbol{v}_{t-1}\rangle\langle\boldsymbol{v}_{t-1},q_i\rangle-\frac{\eta\lambda_i}{n}c_{t-1}^2\langle E_{t-1},q_i\rangle+\frac{\eta^2\lambda_i}{n^2}\langle E_{t-1},F_{t-1}\rangle\langle E_{t-1},q_i\rangle\\
    &=\langle E_{t-1},q_i\rangle-\frac{\eta}{nc_{t-1}^2}\langle E_{t-1},F_{t-1}\rangle\langle F_{t-1},q_i\rangle-\frac{\eta\lambda_i}{n}c_{t-1}^2\langle E_{t-1},q_i\rangle+\frac{\eta^2\lambda_i}{n^2}\langle E_{t-1},F_{t-1}\rangle\langle E_{t-1},q_i\rangle\\
    &=\left(1-\frac{\eta\lambda_i}{n}c_{t-1}^2-\frac{\eta}{nc_{t-1}^2}\langle E_{t-1},F_{t-1}\rangle+\frac{\eta^2\lambda_i}{n^2}\langle E_{t-1},F_{t-1}\rangle\right)\langle E_{t-1},q_i\rangle -\frac{\eta}{nc_{t-1}^2}\langle E_{t-1},F_{t-1}\rangle\langle Y,q_i\rangle
    \end{aligned}
\end{equation}
For $i=1$, we have that
\begin{align*}
    \langle E_{t},q_1\rangle&\overset{(i)}{=}\left(1-2-\eta\Delta_{t-1}+\frac{\eta^2\lambda_1(1+\eta\Delta_{t-1})}{n^2(2+\eta\Delta_{t-1})}\langle E_{t-1},F_{t-1}\rangle\right)\langle E_{t-1},q_1\rangle-\frac{\eta}{nc_{t-1}^2}\langle E_{t-1},F_{t-1}\rangle\langle Y,q_1\rangle\\
    &\overset{(ii)}{=}\left(-1-\eta\Delta_{t-1}-\frac{\eta^2\lambda_1(1+\eta\Delta_{t-1})}{n^2(2+\eta\Delta_{t-1})}|\langle E_{t-1},F_{t-1}\rangle|\right)\langle E_{t-1},q_1\rangle-\Theta\left(\frac{\eta^2}{n}\right)\langle E_{t-1},F_{t-1}\rangle\langle Y,q_1\rangle
\end{align*}
where $(i)$ uses $\frac{\lambda_1}{n}c_{t-1}^2=\frac{2}{\eta}+\Delta_{t-1}$ and $(ii)$ uses $\frac{\lambda_1}{n}c_{t-1}^2=\Theta(\frac{1}{\eta})$ and $\lambda_1=\Theta(n)$ and the fact that during phase II, $\langle E_{t-1},F_{t-1}\rangle<0$.

Then by the assumption $|\langle E_{t-1},F_{t-1}\rangle|=\mathcal{O}(\delta_2\|Y\|^2)=\mathcal{O}(\delta_2n)$, it immediately follows that
\[
|\langle E_{t}, q_1\rangle|\ge(1+\eta\Delta_{t-1})|\langle E_{t-1}, q_1\rangle|-\mathcal{O}(\eta^2\delta_2|\langle Y,q_1\rangle|)
\]
Now we bound $|\langle E_s,q_i\rangle|$ for any $s\le t-1$ with $i\ge2$. First notice that $\frac{\eta\lambda_i}{n}c_{s-1}^2=\Theta(\frac{\lambda_i}{\lambda_1})$ and $\frac{\eta^2\lambda_i}{n^2}|\langle E_{s-1},F_{s-1}\rangle|=\mathcal{O}(\frac{\eta^2\delta_2\lambda_i}{n^2}\|Y\|^2)=\Theta(\frac{\eta^2\delta_2\lambda_i}{\lambda_1})$ by the assumption $|\langle E_{s-1},F_{s-1}\rangle|\le\mathcal{O}(\delta_2\|Y\|^2)=\mathcal{O}(\delta_2n)$, then we can merge these two terms together and write
\begin{equation}\label{eqn:merge_eta_square}
    -\frac{\eta\lambda_i}{n}c_{s-1}^2+\frac{\eta^2\lambda_i}{n^2}\langle E_{s-1},F_{s-1}\rangle=-\Theta\left(\frac{\lambda_i}{\lambda_1}\right)
\end{equation}
Substituting into eq.~\eqref{eqn:E_update} and replacing $t$ by $s$ give us
\begin{align*}
    |\langle E_{s},q_i\rangle|&\overset{(iii)}{\le}\left(1-\Theta\left(\frac{\lambda_i}{\lambda_1}\right)+\Theta\left(\frac{\eta^2}{n}\right)|\langle E_{s-1},F_{s-1}\rangle|\right)|\langle E_{s-1},q_i\rangle|+\Theta\left(\frac{\eta^2}{n}\right)|\langle E_{s-1},F_{s-1}\rangle||\langle Y,q_i\rangle|\\
    &\overset{(iv)}{\le}\left(1-\Theta\left(\frac{\lambda_i}{\lambda_1}\right)+\mathcal{O}(\eta^2\delta_2)\right)|\langle E_{s-1},q_i\rangle|+\Theta(\eta^2\delta_2)|\langle Y,q_i\rangle|\\
    &\overset{(v)}{\le} \left(1-\Theta\left(\frac{\lambda_i}{\lambda_1}\right)\right)|\langle E_{s-1},q_i\rangle|+\Theta(\eta^2\delta_2)|\langle Y,q_i\rangle|
\end{align*}
where $(iii)$ substitutes the order $c_{s-1}^2=\Theta(\frac{1}{\eta})$, $(iv)$ uses $|\langle E_{s-1},F_{s-1}\rangle|\le\mathcal{O}(\delta_2n)$ again and $(v)$ is because under our data distribution (see Section~\ref{section:theory_setup}) and the choice of learning rate $\eta=\Theta(n^{-\frac{1-a}{2}})$, we have $\eta^2\delta_2=\Theta(\frac{n^a}{n}\delta_2)\le\mathcal{O}(\frac{\lambda_i}{\lambda_1}\delta_2)=o(\frac{\lambda_i}{\lambda_1})$. Telescoping this from $T_1$ to $t$ completes the proof.

\end{proof}
\begin{lemma}\label{lemma:error_p2_induction}
    Under the conditions of Lemma~\ref{lemma:error_p2}, we have that at time $t$, $|\langle E_{t},F_{t}\rangle|=\mathcal{O}(\delta_2\|Y\|^2)$
\end{lemma}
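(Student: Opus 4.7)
My plan is to bootstrap from the coarse a priori bound $\|E_t\|\le\|Y\|$ (which is automatic in Phase~II) to the much tighter $\|E_t\|^2 = \mathcal{O}(\delta_2\|Y\|^2)$, using the spectral tail bound that Lemma~\ref{lemma:error_p2} already supplies under the inductive hypothesis. Concretely, I will write $\langle E_t,F_t\rangle = \|E_t\|^2 + \langle E_t, Y\rangle$ and control $\|E_t\|^2$ through the Phase~II sign condition $\langle E_t,F_t\rangle<0$, which is equivalent to $\|E_t\|^2 < -\langle E_t, Y\rangle \le |\langle E_t, Y\rangle|$. Once I establish $|\langle E_t,Y\rangle| = \mathcal{O}(\delta_2\|Y\|^2)$, both $\|E_t\|^2$ and $|\langle E_t, F_t\rangle|$ inherit this order.

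The first step is to apply Lemma~\ref{lemma:error_p2} under the inductive hypothesis to extract the per-component estimate $|\langle E_t, q_i\rangle| \le \mathcal{O}\bigl(\delta_2\eta^2\tfrac{\lambda_1}{\lambda_i}|\langle Y, q_i\rangle| + |\langle E_{T_1}, q_i\rangle|\bigr)$ for $i\ge 2$. Squaring and summing, using the spectrum assumption $\lambda_i = \Theta(n^a)$ for $i>k$, the learning-rate choice $\eta = \Theta(n^{-(1-a)/2})$ (so that $\eta^4\lambda_1^2/\lambda_i^2 = \Theta(1)$ in the tail), the fact that the leading block has only $k=\mathcal{O}(\log n)$ indices with geometrically decreasing $\lambda_i$, and $\|E_{T_1}\|^2 \le \delta_0^2 \|Y\|^2 \le \delta_2^2\|Y\|^2$, I expect to recover the aggregate bound $\sum_{i\ge 2}\langle E_t, q_i\rangle^2 \le \mathcal{O}(\delta_2^2\|Y\|^2)$ that appears in Lemma~\ref{lemma:p2}.

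Next, I will split $\langle E_t, Y\rangle$ along the eigenbasis of $X^T X$ as
\[
|\langle E_t, Y\rangle| \le |\langle E_t, q_1\rangle|\,|\langle Y, q_1\rangle| + \sqrt{\sum_{i\ge 2}\langle E_t, q_i\rangle^2}\,\sqrt{\sum_{i\ge 2}\langle Y, q_i\rangle^2}.
\]
The tail part is at most $\mathcal{O}(\delta_2\|Y\|)\cdot\|Y\|=\mathcal{O}(\delta_2\|Y\|^2)$ by the previous step. For the leading part, the a priori estimate $|\langle E_t, q_1\rangle|\le\|E_t\|\le\|Y\|$ (valid in Phase~II by the sign condition) together with $|\langle Y, q_1\rangle| = \delta_1\|Y\|$ yields at most $\delta_1\|Y\|^2$; and since $\delta_2\ge\sqrt{\delta_1}$ and $\delta_2\le1$, we have $\delta_1\le\delta_2^2\le\delta_2$. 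Thus $|\langle E_t, Y\rangle| = \mathcal{O}(\delta_2\|Y\|^2)$, whence $\|E_t\|^2 = \mathcal{O}(\delta_2\|Y\|^2)$ by the sign condition, and finally $|\langle E_t, F_t\rangle| \le \|E_t\|^2 + |\langle E_t, Y\rangle| = \mathcal{O}(\delta_2\|Y\|^2)$, which is the claim.

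The main obstacle is the lack of a direct upper bound on $|\langle E_t, q_1\rangle|$ from Lemma~\ref{lemma:error_p2}, which only gives an exponentially growing lower bound. The saving grace is that the cross term $|\langle E_t, q_1\rangle|\,|\langle Y, q_1\rangle|$ only needs to be controlled to order $\delta_2\|Y\|^2$, and the smallness $\delta_1\le\delta_2^2$ combined with the cheap Phase~II estimate $|\langle E_t, q_1\rangle|\le\|E_t\|\le\|Y\|$ is already sufficient. This sidesteps the need to iterate an upper-bound companion of the $\langle E_t, q_1\rangle$ recursion all the way back to $T_1$, which would have required accumulating the geometric factors $\prod_s(1+\eta\Delta_s)$ over the entire Phase~II.
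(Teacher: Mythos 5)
Your proof is correct, and it takes a cleaner algebraic route than the paper's. Both arguments rest on the same three ingredients: the spectral tail bound $\sum_{i\ge 2}\langle E_t,q_i\rangle^2 = \mathcal{O}(\delta_2^2\|Y\|^2)$ assembled from Lemma~\ref{lemma:error_p2} under the inductive hypothesis; the smallness $|\langle Y,q_1\rangle| = \delta_1\|Y\|$ with $\delta_1 \le \delta_2^2$; and the Phase~II sign condition $\langle E_t,F_t\rangle < 0$, which gives the coarse a priori bound $\|E_t\|\le\|Y\|$. Where the two proofs diverge is in how they convert these ingredients into the final estimate. The paper takes a geometric route in two steps: it first constructs an orthonormal vector $\hat p$ in $\mathrm{span}\{q_1,Y\}$ perpendicular to $\hat Y$, uses $\|q_1-\hat p\| = \Theta(\delta_1)$ to bound $\langle E_t,\hat Y\rangle^2$ from above, and then passes to a second 2D subspace $\mathrm{span}\{E_t,Y\}$ with orthonormal basis $\hat Y,\hat r_t$, using the negativity of $\langle E_t,F_t\rangle$ to show $|\langle E_t,\hat r_t\rangle\langle F_t,\hat r_t\rangle|\le|\langle E_t,\hat Y\rangle\langle F_t,\hat Y\rangle|$ and hence $|\langle E_t,F_t\rangle|\le 2|\langle E_t,\hat Y\rangle|\|F_t\|$. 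You avoid both auxiliary constructions: you bound $|\langle E_t,Y\rangle|$ directly in the $q_i$-basis via Cauchy--Schwarz, splitting off the $q_1$ term and using $|\langle E_t,q_1\rangle|\le\|E_t\|\le\|Y\|$ together with $\delta_1\le\delta_2$; then the identity $\langle E_t,F_t\rangle = \|E_t\|^2 + \langle E_t,Y\rangle$ plus the sign condition yields $\|E_t\|^2 < |\langle E_t,Y\rangle|$ and hence $|\langle E_t,F_t\rangle| \le 2|\langle E_t,Y\rangle| = \mathcal{O}(\delta_2\|Y\|^2)$ without any angle-chasing. What the paper's geometric argument buys is a slightly more uniform handling of the $q_1$-contribution (it keeps $\langle E_t,q_1\rangle^2$ inside the calculation rather than bounding it crudely by $\|Y\|^2$), but since that term only needs to be controlled against $\delta_1\le\delta_2^2$, the crude bound is already ample, and your version is shorter and easier to audit.
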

\begin{proof}
The upper bound of $|\langle E_{t}, q_i\rangle|$ for $i\ge2$ in Lemma~\ref{lemma:error_p2} can be further upper bounded by
\[
\mathcal{O}\left(\delta_2\eta^2\frac{\lambda_1}{\lambda_i}\sqrt{\frac{\lambda_i}{\sum_k\lambda_k}}\|Y\|+|\langle E_{t}, q_1\rangle|\right)=\mathcal{O}\left(\delta_2\frac{\eta^2n}{\sqrt{\lambda_i}}\frac{\|Y\|}{\sqrt{n^{1+a}}}+|\langle E_{t}, q_1\rangle|\right)
\]
which gives us
\begin{align*}
  \sum_{i=2}^n\langle E_t,q_i\rangle^2\le\sum_{i=2}^n\mathcal{O}\left(\frac{\delta_2^2\eta^4n^2}{\lambda_in^{1+a}}\|Y\|^2+\langle E_{T_1},q_i\rangle^2\right)\le\mathcal{O}\left(\delta_2^2\eta^4n^{2-2a}\right)\|Y\|^2+\|E_{T_1}\|^2=\mathcal{O}(\delta_2^2\|Y\|^2)  
\end{align*}
where the last equality uses $\eta=\Theta(n^{-\frac{1-a}{2}})$ and that $\|E_{T_1}\|^2=\delta_0^2\|Y\|^2=\mathcal{O}(\delta_2^2\|Y\|^2)$.

By the analysis of Phase I, we know that $|\langle E_t.q_i\rangle|$ for $2\le i\le k$ shrink faster than those for $i>k$ in Phase I. Hence at $T_1$, we have that
\[
\sum_{i=2}^k\lambda_i\langle E_{T_1},q_i\rangle^2\le\frac{\lambda_1k}{n-k}\sum_{i>k}\langle E_{T_1},q_i\rangle^2=\mathcal{O}(\frac{\lambda_1\log n}{n})\sum_{i>k}\langle E_{T_1},q_i\rangle^2=\mathcal{O}(\log n)\sum_{i>k}\langle E_{T_1},q_i\rangle^2
\]
Therefore,
\[
\sum_{i=2}^n\lambda_i\langle E_{T_1},q_i\rangle^2\le\sum_{i>k}\mathcal{O}(\log n+\lambda_i)\langle E_{T_1},q_i\rangle^2\le\mathcal{O}(n^a\|E_{T_1}\|^2)=\mathcal{O}(\delta_0^2n^a\|Y\|^2)
\]
Then for time $t$ in Phase II, we can get that
\begin{equation}\label{eqn:tail_bound_p2}
\begin{aligned}
    \sum_{i=2}^n\lambda_i\langle E_t,q_i\rangle^2\le\sum_{i=2}^n\mathcal{O}\left(\frac{\delta_2^2\eta^4n^2}{n^{1+a}}\|Y\|^2+\lambda_i\langle E_{T_1},q_i\rangle^2\right)=\mathcal{O}\left(\delta_2^2\eta^4n^{2-a}+\delta_0^2n^a\right)\|Y\|^2=\mathcal{O}(\delta_2^2n^a)\|Y\|^2
\end{aligned}
\end{equation}
We have that $\delta_1=\mathcal{O}\left(\sqrt{\frac{\lambda_1}{\sum_k\lambda_k}}\right)=\mathcal{O}(n^{-a/2})$ and that $\sin(Y,q_1)=\sqrt{1-\delta_1^2}=1-\Theta(\delta_1^2)$.

In the subspace spanned by $q_1$ and $Y$, choose $\hat Y:=Y/\|Y\|$ and $\hat p$ as orthonormal basis. We have that
\[
\|q_1-\hat p\|^2=2-2\cos(q_1,\hat p)=2-2\sin(q_1,Y)=\Theta(\delta_1^2)
\]
Hence we know that
\begin{align*}
    \langle E_t,\hat p\rangle^2&=\langle E_t,q_1\rangle^2-2\langle E_t,q_1\rangle\langle E_t,q_1-\hat p\rangle+\langle E_t,q_1-\hat p\rangle^2\\
    &\ge\langle E_t,q_1\rangle^2-2|\langle E_t,q_1\rangle|\|E_t\|\|q_1-\hat p\|-\|E_t\|^2\|q_1-\hat p\|^2\\
    &\ge\langle E_t,q_1\rangle^2-\Theta(\delta_1)\|E_t\||\langle E_t,q_1\rangle|-\Theta(\delta_1^2)\|E_t\|^2\\
    &\ge\langle E_t,q_1\rangle^2-\Theta(\delta_1)\|E_t\|^2\\
    \Rightarrow \langle E_t,\hat Y\rangle^2 &\le\|E_t\|^2-\langle E_t,\hat p\rangle^2\le\|E_t\|^2-\langle E_t,q_1\rangle^2+\Theta(\delta_1)\|E_t\|^2\\
    &=\sum_{i=2}^n\langle E_t,q_i\rangle^2+\Theta(\delta_1)\|E_t\|^2\\
    &\le\mathcal{O}(\delta_2^2\|Y\|^2)+\mathcal{O}(\delta_1\|Y\|^2)\le\mathcal{O}(\delta_2^2\|Y\|^2)
\end{align*}
In the subspace spanned by $E_t$ and $Y$, denote $\hat Y$ and $\hat r_t$ as orthonormal basis. Note that $F_t=E_t+Y$ also lies in this subspace. Since $\langle E_t,F_t \rangle<0$, by geometry relationships between $E_t,Y,F_t$ (note that they form a triangle with $\text{cos}(E_t,F_t)<0$), we know that $|\langle E_t,\hat r_t\rangle\langle F_t,\hat r_t\rangle|\le|\langle E_t,\hat Y\rangle\langle F_t,\hat Y\rangle|$, which yields that
\begin{align*}
    |\langle E_t,F_t\rangle|&\le|\langle E_t,\hat r_t\rangle\langle F_t,\hat r_t\rangle|+|\langle E_t,\hat Y\rangle\langle F_t,\hat Y\rangle|\le2|\langle E_t,\hat Y\rangle\langle F_t,\hat Y\rangle|\le\mathcal{O}(|\langle E_t,\hat Y\rangle|\|Y\|)\le\mathcal{O}(\delta_2\|Y\|^2)
\end{align*}
\end{proof}
Now we are ready to prove Lemma~\ref{lemma:p2}. Combining Lemma~\ref{lemma:error_p2} and \ref{lemma:error_p2_induction} together, and noting that at the beginning of phase II, we have $|\langle E_t,F_t\rangle|\le\|E_t\|\|F_t\|\le\mathcal{O}(\delta_0\|Y\|^2)\le\mathcal{O}(\delta_2\|Y\|^2)$ by Assumption~\ref{assump:loss_bound}, we can prove by induction that Lemma~\ref{lemma:p2} holds for $t$ in Phase II.

By the proof details of Lemma~\ref{lemma:error_p2_induction}, we know that $|\langle E_t,\hat Y\rangle|\le\mathcal{O}(\delta_2\|Y\|)$. This allows us to prove that  $t$ in Phase II,
\[
\cos(Y,\boldsymbol{v}_t)=\cos(Y,F_t)=\frac{\langle F_t,\hat Y\rangle}{\|F_t\|}\ge\frac{\langle Y,\hat Y\rangle+\langle E_t,\hat Y\rangle}{\|Y\|}\ge1-\frac{|\langle E_t,\hat Y\rangle|}{\|Y\|}\ge1-\mathcal{O}(\delta_2)
\]
\subsection{Analysis of Phase III and Proof of Theorem~\ref{thm:main} Part (A)}\label{section:proof_p3}
Since in Phase III, we have $\langle E_t,F_t\rangle<0$, then by the dynamics of $c_t^2$ discussed in Appendix~\ref{section:p3_ct}, we know that $c_{t+1}^2<c_{t}^2$.

Now we analyze the dynamics of $\|\boldsymbol{v}_{t}\|^2$. We start by introducing the following lemma on the dynamics of $\langle E_t,q_i\rangle$.
\begin{lemma}\label{lemma:error_p3}
    Under Assumption~\ref{assump:rank-1}-\ref{assump:loss_bound} during Phase III and pick $\eta=\Theta(n^{-\frac{1-a}{n}})$. Write $\Delta_t:=\frac{\lambda_1}{n}c_{t}^2-\frac{2}{\eta}$ with $\Delta_{t}>0$. Then we have that
	\begin{align*}
		|\langle E_{t}, q_1\rangle|&\ge(1+C_2\eta\Delta_{t-1})|\langle E_{t-1}, q_1\rangle|-\mathcal{O}(\Delta'_{t-1}|\langle Y,q_1\rangle|)\\
		|\langle E_{t}, q_i\rangle|&\le\left(1-\Theta(\Delta'_{t-1})-\Theta\left(\frac{\lambda_i}{\lambda_1}\right)\right)|\langle E_{t-1}, q_i\rangle|+\mathcal{O}(\Delta'_{t-1}|\langle Y,q_i\rangle|),\quad 2\le i\le n
	\end{align*}
	where $\Delta'_{t}=\frac{\eta^2}{n}\langle E_t,F_t\rangle$.
\end{lemma}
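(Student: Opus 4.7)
The plan is to reuse the master update rule \eqref{eqn:E_update}, which was derived purely from the rank-1 parameter dynamics under Assumption~\ref{assump:rank-1} and therefore holds verbatim in Phase III. The only substantive change from Phase II is the sign of $\langle E_{t-1}, F_{t-1}\rangle$: it is now \emph{positive} in Phase III, since this is exactly what forces $c_t^2$ to decrease (as shown in Appendix~\ref{section:p3_ct}). So I would take the master recursion, substitute $\frac{\lambda_1}{n}c_{t-1}^2 = \frac{2}{\eta}+\Delta_{t-1}$ for the $i=1$ case, combine the two $\langle E_{t-1},F_{t-1}\rangle$-dependent pieces into a single quantity of order $\Delta'_{t-1}$ using $\lambda_1=\Theta(n)$ and $c_{t-1}^2=\Theta(1/\eta)$, and then treat $i=1$ and $i\geq 2$ separately.

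For $i=1$, writing $\frac{\eta}{nc_{t-1}^2} = \frac{\eta^2\lambda_1}{n^2(2+\eta\Delta_{t-1})}$, the two middle terms of the master recursion combine to $\frac{\eta^2\lambda_1}{n^2}\cdot\frac{1+\eta\Delta_{t-1}}{2+\eta\Delta_{t-1}}\langle E_{t-1},F_{t-1}\rangle$, which in Phase III is positive and, by $\lambda_1=\Theta(n)$ together with the bound $\eta\Delta_{t-1}=\mathcal{O}(1)$ from Assumption~\ref{assump:ub_sharpness}, is of order $\Theta(\Delta'_{t-1})$. The coefficient of $\langle E_{t-1}, q_1\rangle$ therefore equals $-1 - \eta\Delta_{t-1} + \Theta(\Delta'_{t-1})$, whose absolute value is at least $1+C_2\eta\Delta_{t-1}$ for some $C_2\in(0,1)$, since Assumption~\ref{assump:loss_bound} gives $\Delta'_{t-1}=\mathcal{O}(\eta)$ and this is much smaller than a constant-order $\eta\Delta_{t-1}$. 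The cross term $-\frac{\eta}{nc_{t-1}^2}\langle E_{t-1},F_{t-1}\rangle\langle Y,q_1\rangle$ is exactly $-\Theta(\Delta'_{t-1})\langle Y,q_1\rangle$, and the triangle inequality on absolute values yields the claimed lower bound.

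For $i\geq 2$, I would substitute $c_{t-1}^2=\Theta(1/\eta)$ and $\lambda_1=\Theta(n)$ into the coefficient of $\langle E_{t-1}, q_i\rangle$ to get
\[
1 - \Theta\!\left(\tfrac{\lambda_i}{\lambda_1}\right) - \Theta(\Delta'_{t-1}) + \mathcal{O}\!\left(\tfrac{\lambda_i}{\lambda_1}\Delta'_{t-1}\right).
\]
The last piece is dominated by the second (since $\lambda_i/\lambda_1\leq 1$) and absorbed; the whole expression lies in $[0,1)$ because $\Delta'_{t-1}=\mathcal{O}(\eta)$ is small and $\lambda_i/\lambda_1\leq 1$, so taking absolute values causes no sign flip. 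The cross term is once again $\Theta(\Delta'_{t-1})\langle Y,q_i\rangle$ in magnitude, and the triangle inequality delivers the stated contraction.

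The main obstacle is managing the sign flip of $\langle E_{t-1}, F_{t-1}\rangle$ relative to Phase II. In Phase II, the $\frac{\eta^2\lambda_i}{n^2}\langle E,F\rangle$ piece reinforced the expansion factor along $q_1$ and slightly eased contraction along $q_i$ for $i\geq 2$; in Phase III, it \emph{cuts into} the $1+\eta\Delta_{t-1}$ factor along $q_1$ and \emph{strengthens} contraction along $q_i$. I need to verify this correction is small enough to be absorbed into $(1+C_2\eta\Delta_{t-1})$ rather than destroying the expansion, which reduces to the requirement $\Delta'_{t-1}\ll \eta\Delta_{t-1}$. That is exactly where the hypothesis $\Delta_t\geq \Omega(\max\{1,1/(\eta n^{a/4})\})$ imposed in Theorem~\ref{thm:main}(A) later becomes natural; here it just fixes the constant $C_2$ and hides the residual in the $\mathcal{O}(\cdot)$ on the cross term.
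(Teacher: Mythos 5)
Your proposal is correct and follows essentially the same route as the paper's proof: reuse the master recursion \eqref{eqn:E_update}, substitute $\tfrac{\lambda_1}{n}c_{t-1}^2 = \tfrac{2}{\eta}+\Delta_{t-1}$ for $i=1$, note the sign flip $\langle E_{t-1},F_{t-1}\rangle>0$ in Phase~III, and bound the resulting $\langle E,F\rangle$-dependent correction by $\Theta(\Delta'_{t-1})$ via Assumption~\ref{assump:loss_bound}. Your articulation that the $\Theta(\Delta'_{t-1})$ correction now subtracts from the $\eta\Delta_{t-1}$ expansion along $q_1$, and that one needs $\Delta'_{t-1}\ll\eta\Delta_{t-1}$ (hence $\Delta_{t-1}$ bounded away from zero, as later imposed in Theorem~\ref{thm:main}(A)) to keep the expansion factor $\geq 1+C_2\eta\Delta_{t-1}$, is actually cleaner than the paper's somewhat garbled remark ``$\eta\Delta_{t-1}=\Omega(\text{poly}(n))$'' at the corresponding step.
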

\begin{proof}
    The proof of this lemma is similar to that of Lemma~\ref{lemma:error_p2}. In particular, eq.~\eqref{eqn:E_update} still holds.

    For $i=1$, we have that
    \begin{align*}
    \langle E_{t},q_1\rangle&=\left(1-2-\eta\Delta_{t-1}+\frac{\eta^2\lambda_1(1+\eta\Delta_{t-1})}{n^2(2+\eta\Delta_{t-1})}\langle E_{t-1},F_{t-1}\rangle\right)\langle E_{t-1},q_1\rangle -\frac{\eta}{nc_{t-1}^2}\langle E_{t-1},F_{t-1}\rangle\langle Y,q_1\rangle\\
    &\overset{(i)}{=}\left(-1-\eta\Delta_{t-1}+\mathcal{O}(\eta)\right)\langle E_{t-1},q_1\rangle-\Theta\left(\frac{\eta^2}{n}\right)\langle E_{t-1},F_{t-1}\rangle\langle Y,q_1\rangle
\end{align*}
where $(i)$ is because under Assumption~\ref{assump:loss_bound} we have that $|\langle E_{t-1},F_{t-1}\rangle|\le\mathcal{O}(\frac{n}{\eta})$ and thus
\[
\frac{\eta^2\lambda_1(1+\eta\Delta_{t-1})}{n^2(2+\eta\Delta_{t-1})}|\langle E_{t-1},F_{t-1}\rangle|\le \mathcal{O}\left(\frac{\eta\lambda_1}{n}\right)=\mathcal{O}(\eta)
\]
By our choice of $\Delta_{t-1}$, we have $\eta\Delta_{t-1}=\Omega(\text{poly}(n))$. Plugging in this and the definition of $\Delta'_t$ gives us that
\begin{align*}
  |\langle E_{t},q_1\rangle|&\ge\left(1+\eta\Delta_{t-1}-\mathcal{O}(\eta)\right)|\langle E_{t-1},q_1\rangle|-\mathcal{O}(\Delta'_{t-1}|\langle Y,q_1\rangle|)\\
  &\ge\left(1+\eta\Delta_{t-1}\right)|\langle E_{t-1},q_1\rangle|-\mathcal{O}(\Delta'_{t-1}|\langle Y,q_1\rangle|)  
\end{align*}

    For $i\ge 2$, although we do not have the assumption $|\langle E_{t-1},F_{t-1}\rangle|\le\mathcal{O}(\delta_2\|Y\|^2)=\mathcal{O}(\delta_2n)$, by Assumption~\ref{assump:loss_bound} we have that $\frac{\eta^2\lambda_i}{n^2}|\langle E_{t-1},F_{t-1}\rangle|\le\mathcal{O}(\frac{\eta\lambda_i}{n})=\Theta(\frac{\eta\lambda_i}{\lambda_1})$, which is at a lower order than $\frac{\eta\lambda_i}{n}c_{s-1}^2=\Theta(\frac{\lambda_i}{\lambda_1})$. That means we can still merge these two terms together and eq.~\eqref{eqn:merge_eta_square} still holds. Then it follows that
    \begin{align*}
        |\langle E_{t},q_i\rangle|&\le\left(1-\Theta\left(\frac{\lambda_i}{\lambda_1}\right)-\Theta\left(\frac{\eta^2}{n}\right)\langle E_{t-1},F_{t-1}\rangle\right)|\langle E_{t-1},q_i\rangle|+\Theta\left(\frac{\eta^2}{n}\right)\langle E_{t-1},F_{t-1}\rangle|\langle Y,q_i\rangle|\\
        &\le\left(1-\Theta\left(\frac{\lambda_i}{\lambda_1}\right)-\Theta\left(\Delta'_{t-1}\right)\right)|\langle E_{t-1},q_i\rangle|+\Theta\left(\Delta'_{t-1}|\langle Y,q_i\rangle|\right)
    \end{align*}
Note that in the last term of the first inequality, we use $\langle E_{t-1},F_{t-1}\rangle>0$ in Phase III.
\end{proof}
The first inequality in the above lemma tells us that $|\langle E_t,q_1\rangle|$ keeps the increasing trend as in Phase II. When it increases to $\Omega(\delta_2\|Y\|)$, we have the following lemma.
\begin{lemma}\label{lemma:q1_dominate}
    During time intervals in Phase II and III when $\|E_t\|=\mathcal{O}(\sqrt{n})$ and $\Delta_t\ge\Omega(\sqrt{\delta_1})$, denote $T:=\inf\{t:|\langle E_t,q_1\rangle|>\delta_2\|Y\|\}$ and $\delta_t:=|\langle E_t,q_1\rangle|/\|Y\|$, then we have that for $t\ge T$, $|\langle E_{t}, q_i\rangle|\le\mathcal{O}(\delta_t\eta^2\frac{\lambda_1}{\lambda_i}|\langle Y,q_i\rangle|+|\langle E_{T_1},q_i\rangle|)$ where $T_1$ is the end of Phase I.
\end{lemma}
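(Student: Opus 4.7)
The plan is to establish the bound by induction on $t \geq T$, for each fixed $i \geq 2$, using the one-step recursion for $\langle E_t, q_i\rangle$ derived in eq.~\eqref{eqn:E_update}. The base case $t = T$ follows directly from Lemma~\ref{lemma:error_p2}: for times $s < T$ we have $|\langle E_s, q_1\rangle| \leq \delta_2 \|Y\|$ and the tail coordinates remain controlled, so the hypothesis $|\langle E_s, F_s\rangle| = \mathcal{O}(\delta_2 \|Y\|^2)$ of Lemma~\ref{lemma:error_p2} is met, yielding $|\langle E_T, q_i\rangle| \leq \mathcal{O}(\delta_2 \eta^2 \tfrac{\lambda_1}{\lambda_i}|\langle Y, q_i\rangle|) + |\langle E_{T_1}, q_i\rangle|$, which upgrades to the claimed bound via $\delta_T \geq \delta_2$.

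For the inductive step, using $\tfrac{\eta\lambda_i}{n}c_{t-1}^2 = \Theta(\lambda_i/\lambda_1)$ and merging the $\eta^2$ term with the $\eta$ term exactly as in the proof of Lemma~\ref{lemma:error_p2} (see eq.~\eqref{eqn:merge_eta_square}), eq.~\eqref{eqn:E_update} yields
$$|\langle E_t, q_i\rangle| \leq \Bigl(1 - \Theta\bigl(\tfrac{\lambda_i}{\lambda_1}\bigr)\Bigr)\,|\langle E_{t-1}, q_i\rangle| + \Theta\bigl(\tfrac{\eta^2}{n}\bigr)\,|\langle E_{t-1}, F_{t-1}\rangle|\cdot|\langle Y, q_i\rangle|.$$
The crux is the sharper estimate $|\langle E_{t-1}, F_{t-1}\rangle| \leq \mathcal{O}(\delta_{t-1} \|Y\|^2)$. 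Writing $\langle E_{t-1}, F_{t-1}\rangle = \|E_{t-1}\|^2 + \langle E_{t-1}, Y\rangle$, the inductive bound on the $i\geq 2$ coordinates together with the data-distribution tail sum (the same calculation as in eq.~\eqref{eqn:tail_bound_p2}) gives $\sum_{i\geq 2}\langle E_{t-1}, q_i\rangle^2 = \mathcal{O}(\delta_2^2 \|Y\|^2)$, so $\|E_{t-1}\|^2 = \delta_{t-1}^2\|Y\|^2 + \mathcal{O}(\delta_2^2 \|Y\|^2) = \mathcal{O}(\delta_{t-1}\|Y\|^2)$ since $\delta_{t-1} \geq \delta_2$ and $\delta_{t-1} \leq 1$. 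Similarly $|\langle E_{t-1}, Y\rangle|$ is dominated by $|\langle E_{t-1}, q_1\rangle||\langle Y, q_1\rangle| \leq \delta_{t-1}\delta_1\|Y\|^2$ plus the analogous tail contribution, all of which are $\mathcal{O}(\delta_{t-1}\|Y\|^2)$.

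Plugging back and using $\|Y\|^2 = \Theta(n)$, the forcing term simplifies to $\mathcal{O}(\eta^2 \delta_{t-1})\,|\langle Y, q_i\rangle|$. Telescoping the recursion from the base time $T$ to $t$ gives
$$|\langle E_t, q_i\rangle| \leq \Bigl(1 - \Theta\bigl(\tfrac{\lambda_i}{\lambda_1}\bigr)\Bigr)^{t-T}|\langle E_T, q_i\rangle| + \mathcal{O}(\eta^2)|\langle Y, q_i\rangle|\sum_{s=T+1}^{t}\Bigl(1-\Theta\bigl(\tfrac{\lambda_i}{\lambda_1}\bigr)\Bigr)^{t-s}\delta_{s-1}.$$
Since $\delta_s$ is non-decreasing in magnitude (by the geometric growth $\delta_s \geq (1+\eta\Delta_{s-1})\delta_{s-1} - \mathcal{O}(\eta^2\delta_2\delta_1)$ from Lemma~\ref{lemma:p2} and Lemma~\ref{lemma:error_p3}, with the subtracted term absorbed once $\delta_s \gg \delta_2$), the weighted sum is dominated by its last $\Theta(\lambda_1/\lambda_i)$ terms, all of which are comparable to $\delta_t$; hence the sum is $\mathcal{O}(\delta_t \tfrac{\lambda_1}{\lambda_i})$, producing the leading term $\mathcal{O}(\delta_t \eta^2 \tfrac{\lambda_1}{\lambda_i}|\langle Y, q_i\rangle|)$, while the $|\langle E_T, q_i\rangle|$ contribution is bounded by the base-case estimate and thus fits inside the same leading term plus the residual $|\langle E_{T_1}, q_i\rangle|$.

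The main obstacle is the last step: verifying that the discounted sum $\sum_s (1-\Theta(\lambda_i/\lambda_1))^{t-s}\delta_s$ is indeed $\mathcal{O}(\delta_t \lambda_1/\lambda_i)$ rather than something larger. This needs (a) near-monotonicity of $\delta_s$ throughout the relevant window, which is ensured by Lemma~\ref{lemma:p2} (Phase II) and Lemma~\ref{lemma:error_p3} (Phase III) provided the subtracted $\mathcal{O}(\eta^2\delta_2 |\langle Y, q_1\rangle|)$ correction is dominated by the multiplicative growth, which holds once $\delta_s \geq \delta_2$ and $\Delta_s \geq \Omega(\sqrt{\delta_1})$; and (b) that over the relevant $\Theta(\lambda_1/\lambda_i)$-step horizon, $\delta_s$ does not grow so fast (e.g.\ by more than a constant factor per $\lambda_1/\lambda_i$ steps) that the sum is overwhelmed by the final terms alone. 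Both inputs are already furnished by the Phase II/III dynamics, so the induction closes.
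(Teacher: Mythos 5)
Your proof is correct and follows essentially the same route as the paper's: induction on $t \geq T$, the recursion from eq.~\eqref{eqn:E_update}, the key estimate $|\langle E_{s}, F_{s}\rangle| \leq \mathcal{O}(\delta_t\|Y\|^2)$ via the inductive hypothesis, and monotonicity of $\delta_s$ under the hypothesis $\Delta_t \geq \Omega(\sqrt{\delta_1})$. The one cosmetic difference is that you telescope from $T$ with a time-varying forcing term $\delta_{s-1}$, whereas the paper first uses monotonicity to upper-bound $|\langle E_s, F_s\rangle|$ by $\mathcal{O}(\delta_t\|Y\|^2)$ \emph{uniformly} in $s \leq t$ and then telescopes with a constant forcing term from $T_1$; these are equivalent. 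Your concluding worry (b) about whether $\delta_s$ ``grows too fast over a $\Theta(\lambda_1/\lambda_i)$-step horizon'' is unnecessary: once monotonicity $\delta_s \leq \delta_t$ is in hand (your item (a)), the discounted sum is bounded trivially by $\delta_t \sum_{s}(1-\Theta(\lambda_i/\lambda_1))^{t-s} = \mathcal{O}(\delta_t \lambda_1/\lambda_i)$ with no further input about growth rates, so the induction closes cleanly. One small item you gloss over: $T$ may fall in Phase III, in which case Lemma~\ref{lemma:error_p2} (stated for Phase II) cannot be cited verbatim for the base case; the paper handles this by observing that the key inequality $\sum_{i\geq 2}\langle E_t, q_i\rangle^2 \leq \mathcal{O}(\delta_2^2\|Y\|^2)$ still holds for $t \leq T$ even if $\langle E_t, F_t\rangle > 0$, which salvages the base-case bound.
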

\begin{proof}
    We first prove that for $t\ge T$, $|\langle E_t,q_1\rangle|$ increases monotonically, which gives us that $\delta_t\ge\delta_2$. To see this, 
    note that both Lemma~\ref{lemma:error_p2} and Lemma~\ref{lemma:error_p3} give us
    $|\langle E_{t+1}, q_1\rangle|\ge(1+C_2\eta\Delta_{t})|\langle E_{t}, q_1\rangle|-\mathcal{O}(|\Delta'_{t}||\langle Y,q_1\rangle|)$ with $\Delta'_t=\frac{\eta^2}{n}\langle E_t,F_t\rangle$. Suppose at time $t\ge T$, $\delta_{t}\ge\delta_2$, then we have $|\langle E_{t},q_1\rangle|\ge\delta_2\|Y\|\ge\frac{1}{\sqrt{\delta_1}}|\langle Y,q_1\rangle|$ by $|\langle Y,q_1\rangle|=\delta_1\|Y\|$ and $\delta_2\ge\sqrt{\delta_1}$. Combining with the condition $\Delta_t\ge\Omega(\sqrt{\delta_1})$ gives us that $C_2\eta\Delta_{t}|\langle E_t,q_1\rangle|\ge\Omega(\eta|\langle Y,q_1\rangle|)\ge\Omega(\Delta_t'|\langle Y,q_1\rangle|$ by $|\Delta_t'|\le\mathcal{O}(\frac{\eta^2}{n}\cdot\frac{n}{\eta})\le\mathcal{O}(\eta)$. Hence $|\langle E_{t+1},q_1\rangle|\ge|\langle E_{t},q_1\rangle|$ and $\delta_{t+1}\ge\delta_2$.
    
    Now we prove this lemma by induction.
    
    Let's first prove the base case $t=T$. For $t\le T$, if $T$ is in Phase II, then we know that Lemma~\ref{lemma:error_p2_induction} holds. If $T$ is in Phase III, we can modify the proof of Lemma~\ref{lemma:error_p2_induction} to get that Lemma~\ref{lemma:error_p2_induction} still holds for $t\le T$. To see this, note that the only part we need to modify is the part using the geometry relationship between the vectors $E_t,F_t$ and $Y$. The geometry relationship $|\langle E_t,\hat r_t\rangle\langle F_t,\hat r_t\rangle|\le|\langle E_t,\hat Y\rangle\langle F_t,\hat Y\rangle|$ doesn't hold in Phase III because now we have $\langle E_t, F_t\rangle>0$. However, the inequality $\sum_{i=2}^n\langle E_t,q_i\rangle^2\le\mathcal{O}(\delta_2^2\|Y\|^2)$ still holds. That means when $t\le T$, i.e. when $\langle E_t,q_1\rangle^2\le\delta_2^2\|Y\|^2$, we have that $\|E_t\|^2=\sum_{i=2}^n\langle E_t,q_i\rangle^2\le\mathcal{O}(\delta_2^2\|Y\|^2)$ and hence $\|E_t\|\le\mathcal{O}(\delta_2\|Y\|)$. Then we get that $\|F_t\|\le\|E_t\|+\|Y\|\le\mathcal{O}(\|Y\|)$ and that $|\langle E_t,F_t\rangle|\le\|E_t\|\|F_t\|\le\mathcal{O}(\delta_2\|Y\|^2)$. That means for $t\le T$, specially $t=T$, the proof of Lemma~\ref{lemma:p2} still holds, and we have that $|\langle E_{t}, q_i\rangle|\le\mathcal{O}(\delta_2\eta^2\frac{\lambda_1}{\lambda_i}|\langle Y,q_i\rangle|+|\langle E_{T_1},q_i\rangle|)$. Note that we have proved $\delta_t\ge\delta_2$, then we prove the base case $t=T$.

    Now let's deal with the case when $t>T$. Suppose for $T<s\le t$, we have that $|\langle E_{s}, q_i\rangle|\le\mathcal{O}(\delta_s\eta^2\frac{\lambda_1}{\lambda_i}|\langle Y,q_i\rangle|+|\langle E_{T_1},q_i\rangle|)$. Then applying the calculation in the proof of Lemma~\ref{lemma:error_p2_induction}, we get that $\sum_{i=2}^n\langle E_{s},q_i\rangle^2\le\mathcal{O}(\delta_s^2\|Y\|^2)$, which means $\langle E_s,q_1\rangle^2\ge\Omega(\sum_{i=2}^n\langle E_s,q_i\rangle^2)$. Hence for $T<s\le t$, $\|E_s\|^2=\mathcal{O}(\langle E_s,q_1\rangle^2)\le\mathcal{O}(\langle E_t,q_1\rangle^2)=\mathcal{O}(\delta_t^2\|Y\|^2)$, which implies $\|E_s\|\le\mathcal{O}(\delta_t\|Y\|)$. Note that $\|E_s\|=\mathcal{O}(\sqrt{n})=\mathcal{O}(\|Y\|)$, we have that $\|F_s\|\le\|E_s\|+\|Y\|\le\mathcal{O}(\|Y\|)$. Therefore, $|\langle E_s,F_s\rangle|\le\|E_s\|\|F_s\|\le\mathcal{O}(\delta_t\|Y\|^2)$. Then at time $t+1$, imitating the calculation in Lemma~\ref{lemma:error_p2}, we get that $|\langle E_{t+1}, q_i\rangle|\le\mathcal{O}(\delta_t\eta^2\frac{\lambda_1}{\lambda_i}|\langle Y,q_i\rangle|+|\langle E_{T_1},q_i\rangle|)$. Note that $\delta_t\eta^2\frac{\lambda_1}{\lambda_i}|\langle Y,q_i\rangle|=\frac{|\langle E_t,q_1\rangle|}{\|Y\|}\eta^2\frac{\lambda_1}{\lambda_i}|\langle Y,q_i\rangle|\le\mathcal{O}(\frac{|\langle E_{t+1},q_1\rangle|}{\|Y\|}\eta^2\frac{\lambda_1}{\lambda_i}|\langle Y,q_i\rangle|)=\mathcal{O}(\delta_{t+1}\eta^2\frac{\lambda_1}{\lambda_i}|\langle Y,q_i\rangle|)$. Then we have $|\langle E_{t+1}, q_i\rangle|\le\mathcal{O}(\delta_t\eta^2\frac{\lambda_1}{\lambda_i}|\langle Y,q_i\rangle|+|\langle E_{T_1},q_i\rangle|)\le \mathcal{O}(\delta_{t+1}\eta^2\frac{\lambda_1}{\lambda_i}|\langle Y,q_i\rangle|+|\langle E_{T_1},q_i\rangle|)$. Hence we can complete the proof by induction.
\end{proof}
Now we are ready to prove the first part of Theorem~\ref{thm:main}. The goal is to give a sufficient condition of $\Delta_t$ to ensure the increase of $\|\boldsymbol{v}_t\|^2$. Substituting the results in Lemma~\ref{lemma:q1_dominate} to the calculation in Lemma~\ref{lemma:error_p2_induction}, we know that during time intervals in Phase II and III when $\|E_t\|=\mathcal{O}(\sqrt{n})$ and $\Delta_t\ge\Omega(\sqrt{\delta_1})$, for $t\ge T$, we have $\sum_{i=2}^n\langle E_{t},q_i\rangle^2\le\mathcal{O}(\delta_t^2\|Y\|^2)$, $\sum_{i=2}^n\lambda_i\langle E_t,q_i\rangle^2\le\mathcal{O}(\delta_t^2n^a)\|Y\|^2$ and that $\sum_{i=2}^n\lambda_i\langle E_t,q_i\rangle\langle Y,q_i\rangle\le\mathcal{O}(\delta_t\eta^2\lambda_1)\|Y\|^2=\mathcal{O}(\delta_tn^a)\|Y\|^2$.

Now we start analyzing the update of $\|\boldsymbol{v}_t\|^2$. Recall that we denote $K_x=X^TX$. We have for $\|\boldsymbol{v}_{t}\|^2$,
\begin{align*}
	\|\boldsymbol{v}_{t+1}\|^2-\|\boldsymbol{v}_{t}\|^2&=-\frac{2\eta}{n}E_tK_xF_t^T+\frac{\eta^2}{n^2}c_t^2E_tK_x^2E_t^T
	=-\frac{2\eta}{n}E_tK_xE_t^T-\frac{2\eta}{n}E_tK_xY^T+\frac{2\eta+\Delta_t\eta^2}{n}\cdot\frac{E_tK_x^2E_t^T}{\lambda_1}\\
	&=\frac{2\eta+\Delta_t\eta^2}{n}\cdot\sum_{i=2}^n\frac{\lambda_i^2}{\lambda_1}\langle E_t,q_i\rangle^2+\frac{\Delta_t\eta^2}{n}\lambda_1\langle E_t,q_1\rangle^2-\frac{2\eta}{n}\sum_{i=2}^n\lambda_i\langle E_t,q_i\rangle^2-\frac{2\eta}{n}E_tK_xY^T
\end{align*}
We want to have that $\frac{\Delta\eta^2}{n}\lambda_1\langle E_t,q_1\rangle^2-\frac{2\eta}{n}\sum_{i=2}^n\lambda_i\langle E_t,q_i\rangle^2-\frac{2\eta}{n}E_tK_xY^T>0$, which means that
\[
\Delta_t\eta\lambda_1\langle E_t,q_1\rangle^2-2\sum_{i=2}^n\lambda_i\langle E_t,q_i\rangle^2-2\lambda_1\langle E_t,q_1\rangle\langle Y,q_1\rangle-2\sum_{i=2}^n\lambda_i\langle E_t,q_i\rangle\langle Y,q_i\rangle>0
\]
When $\|E_t\|^2\le\mathcal{O}(n)$ and $t\ge T$, we can use Lemma~\ref{lemma:q1_dominate} and get that $\lambda_1\langle E_t,q_1\rangle^2=\Theta(\delta_t^2n\|Y\|^2)>\lambda_1\delta_2|\langle E_t,q_1\rangle|\|Y\|\ge\frac{1}{\sqrt{\delta_1}}\lambda_1|\langle E_t,q_1\rangle\langle Y,q_1\rangle|$ where we use $|\langle Y,q_1\rangle|=\delta_1\|Y\|$ and $\delta_2\ge\sqrt{\delta_1}$. We also get that $\sum_{i=2}^n\lambda_i\langle E_t,q_i\rangle^2\le\mathcal{O}(\delta_t^2n^a)\|Y\|^2$ and $\sum_{i=2}^n\lambda_i\langle E_t,q_i\rangle\langle Y,q_i\rangle\le\mathcal{O}(\delta_t^2n^a)\|Y\|^2$. Then we need a lower bound of $\Delta_t$: $\Delta_t=\max\{\Omega(\frac{\sqrt{\delta_1}}{\eta}),\Omega(\frac{1}{\eta n^{1-a}})\}=\max\{\Omega(\frac{1}{\eta n^{a/4}}),\Omega(\frac{1}{\eta n^{1-a}})\}$.

When $\|E_t\|^2=\Omega(n)$, Lemma~\ref{lemma:error_p3} tells us that for $i\ge2$,
\begin{align*}
    |\langle E_t, q_i\rangle|&\le\left(1-\Theta\left(\frac{\lambda_i}{\lambda_1}\right)-\Theta\left(\Delta'_{t-1}\right)\right)\cdot|\langle E_{t-1}, q_i\rangle|+\mathcal{O}(\Delta'_{t-1}|\langle Y,q_i\rangle|)\\
    &\le\left(1-\Theta(\Delta'_{t-1})\right)\cdot|\langle E_{t-1}, q_i\rangle|+\mathcal{O}(\Delta'_{t-1}|\langle Y,q_i\rangle|),
\end{align*}
Denote the start of Phase III, which is also the end of Phase II, as $T_{2}$. We have proved in Lemma~\ref{lemma:p2} that $|\langle E_{T_{2}},q_i\rangle|\le\mathcal{O}(\delta_2\eta^2\frac{\lambda_1}{\lambda_i}|\langle Y,q_i\rangle|+|\langle E_{T_1},q_i\rangle|)=o(|\langle Y,q_i\rangle|)$ by our data distribution (see Section~\ref{section:theory_setup}) and our choice of $\eta$. For $t>T_{2}$, i.e. in Phase III, $|\langle E_t,q_i\rangle|$ may increase but will not exceed the order $\mathcal{O}(|\langle Y,q_i\rangle|)$. This is because once $|\langle E_t,q_i\rangle|\ge\Omega(|\langle Y,q_i\rangle|)$ such that RHS is smaller than $|\langle E_{t-1},q_i\rangle|$, then we will have $|\langle E_{t},q_i\rangle|<|\langle E_{t-1},q_i\rangle|$. Hence we will have a stable upper bound $|\langle E_t,q_i\rangle|\le\mathcal{O}(|\langle Y,q_i\rangle|)$. Hence $\sum_{i=2}^n\langle E_t,q_i\rangle^2\le\mathcal{O}(\|Y\|^2)=\mathcal{O}(n)$. In other words, when $\|E_t\|^2=\Omega(n)$, $\langle E_t,q_1\rangle^2$ dominates $\sum_{i=2}^n\langle E_t,q_i\rangle^2$. More precisely, denote $T_3$ as the first time in Phase III when $\|E_t\|^2>2\sum_{i=2}^n\langle E_t,q_i\rangle^2$. Consider time $t$ such that $t\ge T_3$, we get that $\langle E_t,q_1\rangle^2>\sum_{i=2}^n\langle E_t,q_1\rangle^2$. Then we have $\langle E_t,F_t\rangle=\Theta(\|E_t\|^2)=\Theta(\langle E_t,q_1\rangle^2)$. Now for $s$ in Phase III  such that $s\le t$, by Lemma~\ref{lemma:error_p3}, we know that for $i\ge2$,
\begin{align*}
    |\langle E_{s}, q_i\rangle|&\le\left(1-\Theta\left(\frac{\lambda_i}{\lambda_1}\right)-\Theta\left(\Delta'_{s-1}\right)\right)\cdot|\langle E_{s-1}, q_i\rangle|+\mathcal{O}(\Delta'_{s-1}|\langle Y,q_i\rangle|)\\
    &\le\left(1-\Theta\left(\frac{\lambda_i}{\lambda_1}\right)\right)\cdot|\langle E_{s-1}, q_i\rangle|+\mathcal{O}(\Delta'_{s-1}|\langle Y,q_i\rangle|)
\end{align*}
Telescoping this inequality from time $T$ defined in Lemma~\ref{lemma:q1_dominate} to $t$ yields that
\begin{align*}
    |\langle E_{t}, q_i\rangle|&\le\mathcal{O}\left(\frac{\max_{T\le s\le t}\Delta'_{s-1}\lambda_1}{\lambda_i}|\langle Y,q_i\rangle|+|\langle E_{T},q_i\rangle|\right)\\
    &=\mathcal{O}\left(\frac{\max_{T\le s\le t}\langle E_s,q_1\rangle^2\eta^2\lambda_1}{n\lambda_i}|\langle Y,q_i\rangle|+|\langle E_{T},q_i\rangle|\right)\\
    &\le\mathcal{O}\left(\frac{\eta^2\langle E_t,q_1\rangle^2\lambda_1}{n\lambda_i}|\langle Y,q_i\rangle|+|\langle E_{T},q_i\rangle|\right)=\mathcal{O}\left(\frac{\eta^2\langle E_t,q_1\rangle^2}{\lambda_i}|\langle Y,q_i\rangle|+|\langle E_{T},q_i\rangle|\right)
\end{align*}
Analyses before already give us that $\sum_{i=2}^n\lambda_i\langle E_T,q_i\rangle^2\le\mathcal{O}(\delta_2^2n^a)\|Y\|^2$ and that $\sum_{i=2}^n\lambda_i\langle E_T,q_i\rangle\langle Y,q_i\rangle\le\mathcal{O}(\delta_2^2n^a)\|Y\|^2$. Then we have that
\begin{align*}
\sum_{i=2}^n\lambda_i\langle E_t,q_i\rangle\langle Y,q_i\rangle&\le\mathcal{O}(\delta_2^2n^a)\|Y\|^2+\sum_{i=2}^n\mathcal{O}\left(\eta^2\langle E_t,q_1\rangle^2\langle Y,q_i\rangle^2\right)\\
&\le\mathcal{O}(\delta_2^2n^a)\|Y\|^2+\mathcal{O}\left(\eta^2\|Y\|^2\right)\langle E_t,q_1\rangle^2=\mathcal{O}(\eta^2n)\langle E_t,q_1\rangle^2
\end{align*}
where the last equality uses $\eta^2=\Theta(n^{-(1-a)})$ and $\langle E_t,q_1\rangle^2=\Omega(n)$. We also have
\begin{align*}
	\sum_{i=2}^n\lambda_i\langle E_t,q_i\rangle^2&\le\mathcal{O}(\delta_2^2n^a)\|Y\|^2+\sum_{i=2}^n\mathcal{O}\left(\lambda_i\frac{\eta^4\langle E_t,q_1\rangle^4}{\lambda_i^2}\langle Y,q_i\rangle^2\right)\\
    &=\mathcal{O}(\delta_2^2n^a)\|Y\|^2+\eta^4\langle E_t,q_1\rangle^4\sum_{i=2}^n\mathcal{O}\left(\frac{1}{\lambda_i}\langle Y,q_i\rangle^2\right)\\
    &=\mathcal{O}(\delta_2^2n^a)\|Y\|^2+\eta^4\langle E_t,q_1\rangle^4\sum_{i=2}^n\mathcal{O}\left(\frac{\|Y\|^2}{\sum_k\lambda_k}\right)\\
    &\le\mathcal{O}(\delta_2^2n^a)\|Y\|^2+\eta^4n^2\langle E_t,q_1\rangle^2\mathcal{O}\left(\frac{\langle E_t,q_1\rangle^2}{n^{1+a}}\right)\\
    &\overset{(i)}{\le}\mathcal{O}(\delta_2^2n^a)\|Y\|^2+\mathcal{O}(\eta^3n^{2-a})\langle E_t,q_1\rangle^2\overset{(ii)}{\le}\mathcal{O}(\eta n)\langle E_t,q_1\rangle^2
\end{align*}
where (i) uses Assumption~\ref{assump:loss_bound} that $\langle E_t,q_1\rangle^2\le\|E_t\|^2\le\mathcal{O}(n/\eta)$ and (ii) plugs in $\eta^2=\Theta(n^{-(1-a)})$ and $\langle E_t,q_1\rangle^2=\Omega(n)$.

Combining these two cases together yields that
\[
\sum_{i=2}^n\lambda_i\langle E_t,q_i\rangle^2+\sum_{i=2}^n\lambda_i\langle E_t,q_i\rangle\langle Y,q_i\rangle\le\mathcal{O}(\max\{\eta,1\})\eta\lambda_1\langle E_t,q_1\rangle^2=\mathcal{O}(1)\eta\lambda_1\langle E_t,q_1\rangle^2
\]
On the other hand, $\lambda_1\langle E_t,q_1\rangle\langle Y,q_1\rangle=\delta_1\lambda_1\langle E_t,q_1\rangle\|Y\|\le\delta_1\lambda_1\langle E_t,q_1\rangle^2$. That means we require that $\Delta_t\ge\Omega\left(\max\left\{1,\frac{\delta_1}{\eta}\right\}\right)=\max\{\Omega(1),\Omega(\frac{1}{\eta n^{a/2}})\}$.

Combining all the bounds of $\Delta_t$ together and noticing the choice $\eta=\Theta(n^{-\frac{1-a}{2}})$, we can get that $\Delta_t\ge\max\{\Omega(1),\Omega(\frac{1}{\eta n^{a/4}})\}$.

	

\subsection{Analysis of Phase IV and Proof of Theorem~\ref{thm:main} Part (B)}
By definition of $t_1$ and $t_2$, we know that they are the turning points for $\langle E_t,F_t\rangle$ to change the sign, which means at these two time points, we have $\langle E_{t_1},F_{t_1}\rangle\approx0$ and $\langle E_{t_2},F_{t_2}\rangle\approx0$. More precisely, by the analysis in Appendix~\ref{section:proof_p2} and Lemma~\ref{lemma:error_p2_induction}, we know that $\|E_{t_1}\|\le o(\|Y\|)$ and $|\langle E_{t_1},F_{t_1}\rangle|\le\mathcal{O}(\delta_2\|Y\|^2)=\mathcal{O}(\delta_2)\|F_{t_1}\|^2$. The final part of Appendix~\ref{section:proof_p2} also implies that $\cos(Y,\boldsymbol{v}_{t_1})\ge1-\mathcal{O}(\delta_2)$. Under Assumption~\ref{assump:decreasing loss}, we know that at time $t_2$, $\|E_{t_2}\|\le\|E_{t_1}\|=o(\|Y\|)$ and hence $|\langle E_{t_2},F_{t_2}\rangle|\le\mathcal{O}(\delta_2\|Y\|^2)=\mathcal{O}(\delta_2)\|F_{t_2}\|^2=\mathcal{O}(\delta_2)\|F_{t_1}\|^2$. By a similar argument in the final part of Appendix~\ref{section:proof_p2}, we also get that $\cos(Y,\boldsymbol{v}_{t_2})\ge1-\mathcal{O}(\delta_2)$.

Note that $\|Y\|^2=\|F_t-E_t\|^2=\|F_t\|^2+\|E_t\|^2-2\langle E_t,F_t\rangle$, we have
\begin{align*}
  \|F_{t_2}\|^2&=\|Y\|^2-\|E_{t_2}\|^2+2\langle E_{t_2},F_{t_2}\rangle\ge\|Y\|^2-\|E_{t_1}\|^2+2\langle E_{t_2},F_{t_2}\rangle\\
  &=\|F_{t_1}\|^2+2\langle E_{t_2},F_{t_2}\rangle-2\langle E_{t_1},F_{t_1}\rangle\\
  &\ge\|F_{t_1}\|^2-\mathcal{O}(\delta_2)\|F_{t_1}\|^2=(1-\mathcal{O}(\delta_2))\|F_{t_1}\|^2
\end{align*}
Hence if $\Delta_{t_1}\ge C\frac{\delta_2}{\eta}$ for some constant $C>0$, we can get that
\[
\frac{\lambda_1}{n}c_{t_1}^2-\frac{\lambda_1}{n}c_{t_2}^2>\frac{\lambda_1}{n}c_{t_1}^2-\frac{2}{\eta}=\Delta_{t_1}\ge\Omega(\delta_2)\frac{\lambda_1}{n}c_{t_1}^2
\]
and hence
\[
c_{t_2}^2=c_{t_1}^2+c_{t_2}^2-c_{t_1}^2\le (1-\Omega(\delta_2))c_{t_1}^2
\]
which gives us
\[
\|\boldsymbol{v}_{t_2}\|^2=\frac{\|F_{t_2}\|^2}{c_{t_2}^2}\ge\frac{1-\mathcal{O}(\delta_2)}{1-\Omega(\delta_2)}\frac{\|F_{t_1}\|^2}{c_{t_1}^2}=\frac{1-\mathcal{O}(\delta_2)}{1-\Omega(\delta_2)}\|\boldsymbol{v}_{t_1}\|^2
\]
If the constant $C$ in the inequality $\Delta_{t_1}\ge C\frac{\delta_2}{\eta}$ is sufficiently large, we can ensure that $\|\boldsymbol{v}_{t_2}\|^2>\|\boldsymbol{v}_{t_1}\|^2$ and therefore $\alpha_{t_2}>\alpha_{t_1}$.

\section{Proof of Lemma~\ref{lemma:eig_shift}}\label{section:vec_align}
Let $X^TX:=Q\text{diag}(\lambda_1,...,\lambda_n)Q^T$ be the eigenvalue decomposition of the data kernel $X^TX$. The approximate NTK matrix in Lemma~\ref{lemma:eig_shift} can be written as $c^2(X^TX+\alpha\hat{\boldsymbol{y}}\hat{\boldsymbol{y}}^T)=c^2Q(\text{diag}(\lambda_1,...,\lambda_n)+\alpha(Q^T\hat{\boldsymbol{y}})(Q^T\hat{\boldsymbol{y}})^T)Q^T$. Then it suffices to consider the matrix $A:=\text{diag}(\lambda_1,...,\lambda_n)+\alpha\boldsymbol{b}\boldsymbol{b}^T$ with $\alpha>0$ and $\boldsymbol{b}:=Q^T\hat{\boldsymbol{y}}$. Denote $\tilde\lambda_1\ge\tilde\lambda_2\ge...\ge\tilde\lambda_n$ as the eigenvalues of $A$ with corresponding eigenvectors $\tilde q_i$. Let $b[i]$ be the $i$-th component of $\boldsymbol{b}$. 

\subsection{Warm-up: A Simple Case}\label{section:warm-up}
Let's first start with a warm-up case where $\lambda_1>\lambda_2>\lambda_3=\lambda_4=...=\lambda_n$, i.e there are only 3 distinct eigenvalues for the data kernel. We first conduct an experiment with $n=100$, $\lambda_1=300,\lambda_2=150$ and $\lambda_3=\lambda_4=...=\lambda_n=100$ and $\boldsymbol{b}$ is a random vector with unit norm. Figure~\ref{fig:alignment_warmup} plots the absolute values of the cosine alignment between $\boldsymbol{b}$ and the first 3 leading eigenvectors of $A$, denoted as $\tilde q_1,\tilde q_2,\tilde q_3$. We observe that as $\alpha$ increases, $\boldsymbol{b}$ gradually gets larger alignment values with earlier eigenvectors. We also notice sharp transitions around values 50 and 200, which are equal to $\lambda_2-\lambda_3$ and $\lambda_1-\lambda_3$, respectively.
\begin{figure}[ht]
    \centering
    \includegraphics[width=0.4\linewidth]{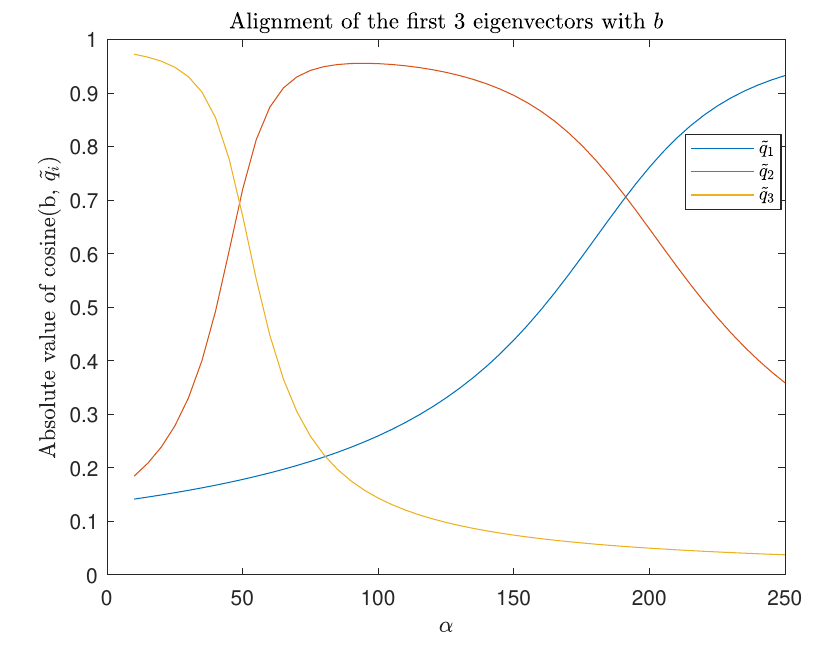}
    \caption{Illustration of Lemma~\ref{lemma:eig_shift} in a warm-up setting described in Appendix~\ref{section:warm-up} }
    \label{fig:alignment_warmup}
\end{figure}

Now we start the theoretical analysis in this warm-up setting.

\begin{assumption}\label{assump:warm-up}
    Let $A:=\text{diag}(\lambda_1,\lambda_2,\lambda_3,...,\lambda_3)+\alpha\boldsymbol{b}\boldsymbol{b}^T\in\mathbb{R}^{n\times n}$ with $\alpha>0,\|\boldsymbol{b}\|^2=1$. We assume that $\lambda_1-\lambda_2=\Theta(n)$, $\lambda_2-\lambda_3=\Theta(n), \lambda_3=\Theta(n)$ and $|b[i]|=\Theta(1/\sqrt{n}),\forall i\in[n]$.
\end{assumption}

\begin{lemma}
	Under Assumption~\ref{assump:warm-up}, we have that
	\begin{enumerate}
		\item When $\alpha<\frac{\lambda_2-\lambda_3}{1+\mathcal{O}(n^{-1/2})}$, we have that $\arg\max_{1\le i\le n}|\langle \tilde q_i,b\rangle|=3$.
		\item When $\frac{\lambda_2-\lambda_3}{1-\mathcal{O}(n^{-1/2})}<\alpha<\frac{\lambda_1-\lambda_3}{1+\mathcal{O}(n^{-1/2})}$, we have that $\arg\max_{1\le i\le n}|\langle \tilde q_i,b\rangle|=2$.
		\item When $\alpha>\frac{\lambda_1-\lambda_3}{1-\mathcal{O}(n^{-1/2})}$, we have that $\arg\max_{1\le i\le n}|\langle \tilde q_i,b\rangle|=1$.
	\end{enumerate}
\end{lemma}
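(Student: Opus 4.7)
The plan is to use the classical secular equation for a rank-one perturbation of a diagonal matrix. First I would reduce to an effectively three-dimensional problem: writing $b = b[1] e_1 + b[2] e_2 + b_V$ where $b_V$ lies in the $(n-2)$-dimensional eigenspace $V$ of $D := \text{diag}(\lambda_1, \lambda_2, \lambda_3, \ldots, \lambda_3)$ corresponding to $\lambda_3$, every vector in $V$ orthogonal to $b_V$ is an eigenvector of $A = D + \alpha b b^T$ with eigenvalue $\lambda_3$ and zero inner product with $b$. These $n-3$ trivial directions play no role in the argmax. The three non-trivial eigenvalues $\tilde\mu_1 > \tilde\mu_2 > \tilde\mu_3$ strictly interlace $\lambda_1, \lambda_2, \lambda_3$ and are the roots of the secular equation
\[
g(\mu) \,:=\, \frac{b[1]^2}{\mu - \lambda_1} + \frac{b[2]^2}{\mu - \lambda_2} + \frac{\|b_V\|^2}{\mu - \lambda_3} \,=\, \frac{1}{\alpha},
\]
with eigenvectors $\tilde q_j \propto \sum_{i} b[i] (\tilde\mu_j - \lambda_i)^{-1} e_i$. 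A direct computation using the secular equation to simplify $\langle \tilde q_j, b\rangle$ then yields the clean formula
\[
|\langle \tilde q_j, b\rangle|^2 \;=\; \frac{1/\alpha^2}{\sum_{i} b[i]^2 /(\tilde\mu_j - \lambda_i)^2},
\]
so the lemma reduces to identifying, in each regime of $\alpha$, which $\tilde\mu_j$ minimizes the denominator sum.

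The key structural observation is the presence of a single \emph{popped-out} eigenvalue. Write $p_k := b[k]^2$ for $k=1,2$ and $p_3 := \|b_V\|^2$, so $p_1, p_2 = \Theta(1/n)$ and $p_3 = 1 - \Theta(1/n)$. The $p_3$-term dominates the secular equation, giving a distinguished root $\mu^\star \approx \lambda_3 + \alpha p_3$ at leading order. Depending on $\alpha$, this root falls into the $\tilde\mu_3$ slot $(\lambda_3, \lambda_2)$ when $\alpha p_3 < \lambda_2 - \lambda_3$ (Case 1), the $\tilde\mu_2$ slot $(\lambda_2, \lambda_1)$ (Case 2), or the $\tilde\mu_1$ slot $(\lambda_1, \infty)$ (Case 3). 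For this shifted root, $|\mu^\star - \lambda_3| = \Theta(n)$, so its denominator in the alignment formula is dominated by the $i=3$ term, yielding $|\langle \tilde q_{\mu^\star}, b\rangle|^2 = \Theta(1)$. For each of the two remaining ``stuck'' eigenvalues, localizing the secular equation at the corresponding pole $\lambda_k$ shows $|\tilde\mu - \lambda_k| = \Theta(1)$, so its denominator is dominated by $p_k/(\tilde\mu - \lambda_k)^2 = \Theta(1/n)$, giving alignment only $\Theta(1/n)$. Hence in each case the popped-out root wins by a factor $\Theta(n)$, identifying the maximizer.

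The main obstacle is pinning down the thresholds with the advertised $(1 \pm \mathcal{O}(n^{-1/2}))$ correction factor rather than the naive $1 \pm \Theta(1/n)$ that would come directly from $p_3 = 1 - \Theta(1/n)$. The subtlety is a transition window around each threshold $\alpha_k := (\lambda_k - \lambda_3)/p_3$, where the popped-out root collides with the adjacent pole $\lambda_k$ and mixes with the neighboring stuck root. To quantify this window, I would localize the secular equation at $\mu = \lambda_k + \epsilon$, expand to quadratic order, and obtain an approximate equation of the form
\[
X^2 - \delta X - p_k/p_3 \,\approx\, 0, \qquad X := \epsilon/(\lambda_k - \lambda_3),\;\; \delta := \alpha/\alpha_k - 1,
\]
whose roots are $X = \tfrac{1}{2}\bigl(\delta \pm \sqrt{\delta^2 + 4 p_k/p_3}\bigr)$. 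Since $p_k/p_3 = \Theta(1/n)$, the crossover width in $\delta$ is exactly $\Theta(n^{-1/2})$: for $|\delta| \ll n^{-1/2}$ the two local roots are both $\Theta(n^{-1/2})$ in $X$ (i.e., within $\Theta(\sqrt n)$ of $\lambda_k$) and have comparable $\Theta(1)$ alignment with $b$, but for $|\delta| \ge C n^{-1/2}$ with $C$ a sufficiently large constant they separate into one root with $|X| \asymp |\delta|$ (the truly shifted root) and another with $|X| \asymp 1/(n |\delta|)$ (the truly stuck root), at which point the leading-order estimates above become rigorous with strict inequalities and the argmax is unambiguously identified. The rest of the proof is bookkeeping of these error terms and checking that the three cases stated in the lemma exactly correspond to the shifted root sitting in the $\tilde\mu_3$, $\tilde\mu_2$, or $\tilde\mu_1$ slot.
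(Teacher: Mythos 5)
Your proposal is correct and follows the same overall route as the paper: reduce to the effectively three-dimensional problem inside $\mathrm{span}\{e_1,e_2,b_V\}$, invoke the secular equation and the explicit eigenvector formula for a diagonal-plus-rank-one matrix (the paper cites the Bunch--Nielsen--Sorensen formula for both), derive the alignment identity $|\langle\tilde q_j, b\rangle|^2 = (1/\alpha^2)\big/\sum_i b[i]^2/(\tilde\mu_j-\lambda_i)^2$, and then observe that the single ``popped-out'' root $\mu^\star\approx\lambda_3+\alpha p_3$ carries almost all the alignment mass while the stuck roots near $\lambda_1,\lambda_2$ carry only $O(1/n)$ of it. Where you genuinely diverge is in how the $\mathcal{O}(n^{-1/2})$ thresholds are extracted. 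The paper imposes a priori that $\tilde\lambda_j$ sit at relative distance at least $n^{-1/2}$ from the nearest bare eigenvalue, truncates the secular equation to the dominant $R^2/(\lambda_3-\tilde\lambda_j)$ term, and propagates the resulting $(1\pm\mathcal{O}(n^{-1/2}))$ multiplicative error to the $\alpha$ condition; this gets the right answer but somewhat obscures \emph{why} the exponent is $1/2$. Your local quadratic $X^2 - \delta X - p_k/p_3 \approx 0$ (with $X=\epsilon/(\lambda_k-\lambda_3)$, $\delta=\alpha/\alpha_k-1$) solves for the two colliding roots near the threshold in closed form and shows directly that the crossover window has width $\Theta(\sqrt{p_k/p_3})=\Theta(n^{-1/2})$; this is cleaner and makes the $1/2$ exponent transparent rather than an artifact of error bookkeeping. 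One small slip: for the stuck roots you write ``giving alignment only $\Theta(1/n)$,'' which is $|\langle\tilde q_j, b\rangle|^2$ (so $|\langle\tilde q_j, b\rangle| = \Theta(n^{-1/2})$), while for the popped root you wrote the square explicitly; the inconsistency is cosmetic and does not affect the $\arg\max$ comparison.
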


\begin{proof}
By the Bunch–Nielsen–Sorensen formula \cite{bunch1978rank}, the eigenvalues of $A$ have the following structures.

1. The first 3 largest eigenvalues $\tilde\lambda_1,\tilde\lambda_2,\tilde\lambda_3$ are the roots of the following secular equation
\begin{equation}\label{eqn:secular}
	\sum_{k=1}^n\frac{b[k]^2}{\lambda_k-t}=\frac{b[1]^2}{\lambda_1-t}+\frac{b[2]^2}{\lambda_2-t}+\frac{R^2}{\lambda_3-t}=-\frac{1}{\alpha},\quad\text{where } R^2=1-b[1]^2-b[2]^2
\end{equation}
They also showed that $\lambda_3<\tilde\lambda_3<\lambda_2<\tilde{\lambda}_2<\lambda_1<\tilde\lambda_1$. Note that under Assumption~\ref{assump:warm-up}, $R^2=1-\mathcal{O}(\frac{1}{n})$.

2. The other $n-3$ eigenvalues are all equal to $\lambda_3$.

By Wikipedia\footnote{\url{https://en.wikipedia.org/wiki/Bunch-Nielsen-Sorensen\_formula}}, the $k$-th component of $\tilde q_i$ ($i=1,2,3$) is given by
\[
\tilde q_i[k]=\frac{b[k]}{(\lambda_k-\tilde\lambda_i)N_i}
\]
where $N_i$ is a number that makes the vector $\tilde q_i$ normalized. Hence it follows that
\[
N_i=\sqrt{\sum_{k=1}^n\frac{b[k]^2}{(\lambda_k-\tilde\lambda_i)^2}}=\sqrt{\frac{b[1]^2}{(\lambda_1-\tilde\lambda_i)^2}+\frac{b[2]^2}{(\lambda_2-\tilde\lambda_i)^2}+\frac{R^2}{(\lambda_3-\tilde\lambda_i)^2}}
\]
where $R^2=1-b[1]^2-b[2]^2$ is defined in eq.~\eqref{eqn:secular}. Then for the alignment $\langle \tilde q_i,b\rangle$ with $i=1,2,3$, We have that
\begin{equation}\label{eqn:alignment}
	\langle \tilde q_i,b\rangle = \frac{\sum_{k=1}^n\frac{b[k]^2}{\lambda_k-\tilde\lambda_i}}{\sqrt{\frac{b[1]^2}{(\lambda_1-\tilde\lambda_i)^2}+\frac{b[2]^2}{(\lambda_2-\tilde\lambda_i)^2}+\frac{R^2}{(\lambda_3-\tilde\lambda_i)^2}}}=\frac{-1/\alpha}{\sqrt{\frac{b[1]^2}{(\lambda_1-\tilde\lambda_i)^2}+\frac{b[2]^2}{(\lambda_2-\tilde\lambda_i)^2}+\frac{R^2}{(\lambda_3-\tilde\lambda_i)^2}}}
\end{equation}

\textbf{Case 1: small $\alpha$.} Since when $\alpha\rightarrow0$, we have $\tilde\lambda_i\rightarrow\lambda_i$, then we can choose $\alpha$ sufficiently small such that $\tilde\lambda_3(1+\frac{1}{\sqrt{n}})<\lambda_2$.

Then by Assumption~\ref{assump:warm-up}, we know that
\[
\frac{b[1]^2}{\lambda_1-\tilde\lambda_3}+\frac{b[2]^2}{\lambda_2-\tilde\lambda_3}=\mathcal{O}(\frac{1}{n^2})+\mathcal{O}(\frac{1}{n\sqrt{n}})=\mathcal{O}(\frac{1}{n\sqrt{n}})
\]
Hence eq.~\eqref{eqn:secular} is approximated by
\[
\frac{R^2}{\lambda_3-\tilde\lambda_3}=-\frac{1}{\alpha}+\mathcal{O}(\frac{1}{n\sqrt{n}})
\]
Note that $|\frac{R^2}{\lambda_3-\tilde\lambda_3}|\ge|\frac{R^2}{\lambda_3-\lambda_2}|=\Theta(\frac{1}{n})$, then we can further write the approximated equation as
\[
\frac{R^2}{\lambda_3-\tilde\lambda_3}(1\pm \mathcal{O}(n^{-1/2}))=-\frac{1}{\alpha}
\]
which gives us $\tilde\lambda_3=\lambda_3+\alpha R^2(1\pm\mathcal{O}(n^{-1/2}))=\lambda_3+\alpha (1\pm\mathcal{O}(n^{-1/2}))$. Then the condition $\tilde\lambda_3(1+n^{-1/2})<\lambda_{2}$ can be satisfied for certain choice of $\alpha$ such that $\alpha<\frac{\lambda_{2}-\lambda_3}{1+\mathcal{O}(n^{-1/2})}$. Then by eq.~\eqref{eqn:alignment}, we have
\[
|\langle \tilde q_3,b\rangle| = \frac{1/\alpha}{\sqrt{\frac{b[1]^2}{(\lambda_1-\tilde\lambda_3)^2}+\frac{b[2]^2}{(\lambda_2-\tilde\lambda_3)^2}+\frac{R^2}{(\lambda_3-\tilde\lambda_3)^2}}}\approx\frac{\frac{R^2}{|\lambda_3-\tilde\lambda_3|}}{\sqrt{\frac{R^2}{(\lambda_3-\tilde\lambda_3)^2}}}=R=1-\mathcal{O}(\frac{1}{n})
\]
That means $\arg\max_{1\le i\le n}|\langle \tilde q_i,b\rangle|=3$.

\textbf{Case 2: medium $\alpha$.} For the choice of $\alpha$ such that $\lambda_2<\tilde\lambda_2(1- \mathcal{O}(n^{-1/2}))$ and $\tilde\lambda_2(1+\mathcal{O}(n^{-1/2}))<\lambda_1$. Then we know that $\frac{b[1]^2}{\lambda_1-\tilde\lambda_2}+\left|\frac{b[2]^2}{\lambda_2-\tilde\lambda_2}\right|=\mathcal{O}(\frac{1}{n\sqrt{n}})$ while $|\frac{R^2}{\lambda_3-\tilde\lambda_2}|=\Theta(\frac{1}{n})$. Hence eq.~\eqref{eqn:secular} is approximated by
\[
\frac{R^2}{\lambda_3-\tilde\lambda_2}(1\pm \mathcal{O}(n^{-1/2}))=-\frac{1}{\alpha}
\]
which gives us $\tilde\lambda_2=\lambda_3+\alpha R^2(1\pm\mathcal{O}(n^{-1/2}))=\lambda_3+\alpha (1\pm\mathcal{O}(n^{-1/2}))$. Then the condition $\lambda_2<\tilde\lambda_2(1- \mathcal{O}(n^{-1/2}))$ and $\tilde\lambda_2(1+\mathcal{O}(n^{-1/2}))<\lambda_1$ can be satisfied for certain choice of $\alpha$ such that $\frac{\lambda_2-\lambda_3}{1-\mathcal{O}(n^{-1/2})}<\alpha<\frac{\lambda_{1}-\lambda_3}{1+\mathcal{O}(n^{-1/2})}$. Again by eq.~\eqref{eqn:alignment}, we have $|\langle \tilde q_2,b\rangle|\approx R=1-\mathcal{O}(n^{-1})$. Hence $\arg\max_{1\le i\le n}|\langle \tilde q_i,b\rangle|=2$.

\textbf{Case 3: large $\alpha$.} For the choice of $\alpha$ such that  $\tilde\lambda_1>\lambda_1(1+\mathcal{O}(n^{-1/2}))$. By a similar argument as in the previous two cases, we know that $\arg\max_{1\le i\le n}|\langle \tilde q_1,b\rangle|=1$. In this case, the condition $\tilde\lambda_1>\lambda_1(1+\mathcal{O}(n^{-1/2}))$ can be satisfied for certain choice of $\alpha$ such that $\alpha>\frac{\lambda_1-\lambda_3}{1-\mathcal{O}(n^{-1/2})}$.

\end{proof}
\subsection{Analysis of General Cases and Proof of Lemma~\ref{lemma:eig_shift}}\label{section:proof_eig_shift}
Recall that we denote $p_i$ to be the left singular vectors of $X$. Further denote $\sigma_i$ as the $i$-th singular value of $X$. Note that we write $A:=\text{diag}(\lambda_1,...,\lambda_n)+\alpha\boldsymbol{b}\boldsymbol{b}^T$ with $\boldsymbol{b}:=Q^T\hat{\boldsymbol{y}}$. By our data distribution (see Section~\ref{section:theory_setup}), we have $b[i]\propto\langle Y,q_i\rangle=\sigma_i\langle \beta,p_i\rangle$, then we know that $\frac{|b[i]|}{|b[j]|}=\Theta(\frac{\sigma_i}{\sigma_j}),\forall i,j\in[n]$, i.e. $\frac{b[i]^2}{b[i]^2}=\Theta(\frac{\lambda_i}{\lambda_j})$. Combining with the order of $\{\lambda_i\}_{i=1}^n$ in our data distribution yields that $b[i]^2=\mathcal{O}(\frac{n^a}{n^{1+a}})=\mathcal{O}(\frac{1}{n})$ for $i\ge k$ where the index $k$ is defined in our data distribution. We know that $k=\mathcal{O}(\log_{\gamma} n)$.

Now we are ready to prove Lemma~\ref{lemma:eig_shift}. Recall aht we 
enote $\tilde\lambda_1\ge\tilde\lambda_2\ge...\ge\tilde\lambda_n$ as the eigenvalues of $A$ with corresponding eigenvectors $\tilde q_i$. Again, by the Bunch–Nielsen–Sorensen formula \cite{bunch1978rank}, $\tilde\lambda_j$ ($1\le j\le n)$ satisfies the secular equation
\[
\sum_{i=1}^n\frac{b[i]^2}{\lambda_i-\tilde\lambda_j}=-\frac{1}{\alpha},
\]
\cite{bunch1978rank} also proved that $\lambda_j<\tilde\lambda_j<\lambda_{j-1}$ for $j>1$ and $\tilde\lambda_1>\lambda_1$.

By our data distribution (see Section~\ref{section:theory_setup}), for the eigenvalue $\tilde\lambda_j$ with $j\le k-1$, we have $\tilde\lambda_j-\lambda_i=\Theta(\tilde\lambda_j),i> k$. Note that we already proved $b[i]^2=\mathcal{O}(\frac{1}{n})$ for $i\ge k$. Using these two facts, we can replace $\frac{b[i]^2}{\lambda_i-\tilde\lambda_j},i> k$ by $\frac{b[i]^2}{\lambda_k-\tilde\lambda_j}$ with approximation error
\[
b[i]^2\left|\frac{1}{\lambda_k-\tilde\lambda_j}-\frac{1}{\lambda_i-\tilde\lambda_j}\right|=b[i]^2\cdot\frac{\lambda_k-\lambda_i}{(\tilde\lambda_j-\lambda_k)(\tilde\lambda_j-\lambda_i)}=\mathcal{O}\left(\frac{\lambda_k-\lambda_i}{\tilde\lambda_j^2}\right)
\]
Hence we can write the secular equation $\sum_{i=1}^n\frac{b[i]^2}{\lambda_i-\tilde\lambda_j}=-\frac{1}{\alpha}$ as
\[
\sum_{i=1}^{k-1}\frac{b[i]^2}{\lambda_i-\tilde\lambda_j}+\frac{R^2}{\lambda_k-\tilde\lambda_j}+\Delta=-\frac{1}{\alpha},
\]
where $R^2=\sum_{i=k}^nb[i]^2$ and that
\[
\Delta\le \sum_{i=k+1}^nb[i]^2\left|\frac{1}{\lambda_k-\tilde\lambda_j}-\frac{1}{\lambda_i-\tilde\lambda_j}\right|\le\mathcal{O}\left(\frac{1}{\tilde\lambda_j^2}\right)\left(\lambda_k-\frac{1}{n-k}\sum_{i=k+1}^n\lambda_i\right)=\mathcal{O}\left(\frac{\delta_\lambda n^a}{\tilde\lambda_j^2}\right)
\]
where the last equality uses the assumption that $|\frac{1}{n-k}\sum_{i=k+1}^n\lambda_i-\lambda_k|=\delta_\lambda n^a$ with $\delta_\lambda=o(1)$.

By the order of $\{\lambda_i\}_{i=1}^n$ in our data distribution, we have that
\[
\frac{\sum_{i=1}^{k-1}b[i]^2}{R^2}=\Theta\left(\frac{\sum_{i=1}^{k-1}\lambda_i}{\sum_{i=k}^n\lambda_i}\right)=\mathcal{O}\left(\frac{n\log n}{n^{a+1}}\right)=\tilde{\mathcal{O}}\left(\frac{1}{n^a}\right),
\]
and hence
\[
\sum_{i=1}^{k-1}b[i]^2=\tilde{\mathcal{O}}(\frac{1}{n^a}),R^2=1-\tilde{\mathcal{O}}(\frac{1}{n^a}).
\]
The order of $\{\lambda_i\}_{i=1}^n$ in our data distribution also tells us that $\forall j\le k-1$, $\lambda_{j-1}-\lambda_j=\Omega(n^a)$. Then for certain choice of $\alpha$ such that $\lambda_j<\tilde\lambda_j(1-n^{-a/2})$ and $\tilde\lambda_j(1+n^{-a/2})<\lambda_{j-1}$ for certain $j\le k-1$, then we have $|\lambda_i-\tilde\lambda_j|=\Omega(\tilde\lambda_jn^{-a/2})$ for $i\le k-1$. Using these facts, we get that
\[
\sum_{i=1}^{k-1}\left|\frac{b[i]^2}{\lambda_i-\tilde\lambda_j}\right|\le \sum_{i\le k-1}b[i]^2\mathcal{O}\left(\frac{n^{a/2}}{\tilde\lambda_j}\right)=\tilde{\mathcal{O}}\left(\frac{1}{\tilde\lambda_jn^{a/2}}\right)
\]
Hence the approximated secular equation can further be written as
\[
\frac{R^2}{\lambda_k-\tilde\lambda_j}=-\frac{1}{\alpha}+\mathcal{O}\left(\frac{\delta_\lambda n^a}{\tilde\lambda_j^2}\right)+\tilde{\mathcal{O}}\left(\frac{1}{\tilde\lambda_jn^{a/2}}\right),\forall j\le k-1
\]
Recall that by discussion above, we have $\tilde\lambda_j-\lambda_k=\Theta(\tilde\lambda_j)$ and that $\tilde\lambda_j=\Omega(n^a)$. That means $|\frac{R^2}{\lambda_k-\tilde\lambda_j}|=\Theta(\frac{1}{\tilde\lambda_j})$ and $ \frac{\delta_\lambda n^a}{\tilde\lambda_j^2}\le \frac{\delta_\lambda}{\tilde\lambda_j}$. This allows us to write the above approximated equation as
\[
\frac{R^2}{\lambda_k-\tilde\lambda_j}(1\pm \mathcal{O}(\delta_\lambda+n^{-a/2}))=-\frac{1}{\alpha},\forall j\le k-1
\]
We get that $\tilde\lambda_j= \lambda_k+\alpha (1\pm \mathcal{O}(\delta_\lambda+n^{-a/2}))$. Then the condition $\lambda_j<\tilde\lambda_j(1-n^{-a/2})$ and $\tilde\lambda_j(1+n^{-a/2})<\lambda_{j-1}$ can be satisfied for certain choice of $\alpha$ such that $\frac{\lambda_j-\lambda_k}{1-\mathcal{O}(\delta_\lambda+n^{-a/2})}<\alpha<\frac{\lambda_{j-1}-\lambda_k}{1+\mathcal{O}(\delta_\lambda+n^{-a/2})}$.

Moreover,
\[
|\langle \tilde q_j,v\rangle| = \frac{1/\alpha}{\sqrt{\sum_{i=1}^n\frac{b[i]^2}{(\lambda_i-\tilde\lambda_j)^2}}}\approx\frac{\frac{R^2}{|\lambda_k-\tilde\lambda_j|}}{\sqrt{\frac{R^2}{(\lambda_k-\tilde\lambda_j)^2}}}=R.
\]
That means $\arg\max_{1\le i\le n}|\langle \tilde q_i,b\rangle|=j$.

\end{document}